\setlist[enumerate]{leftmargin=.5in}
\setlist[itemize]{leftmargin=.5in}
\crefname{hypothesis}{Hypothesis}{Hypotheses}
\crefname{fact}{Fact}{Facts}
\title{Preconditioned Regularized Wasserstein Proximal Sampling Methods}
\author{Hong Ye Tan\thanks{Department of Mathematics, University of California, Los Angeles, 90095
  (\email{hyt35,sjo@math.ucla.edu}).}
\and Stanley Osher\footnotemark[2]
\and Wuchen Li\thanks{Department of Mathematics, University of South Carolina, Columbia, SC 29208 (\email{wuchen@mailbox.sc.edu})}}
\def\eqref#1{equation~\ref{#1}}
\def\1{\bm{1}}
\def\rvx{{\mathbf{x}}}
\def\rvz{{\mathbf{z}}}
\DeclareMathAlphabet{\mathsfit}{\encodingdefault}{\sfdefault}{m}{sl}
\SetMathAlphabet{\mathsfit}{bold}{\encodingdefault}{\sfdefault}{bx}{n}
\newcommand{\tens}[1]{\bm{\mathsfit{#1}}}
\def\tM{{\tens{M}}}
\def\tX{{\tens{X}}}
\def\gC{{\mathcal{C}}}
\def\gF{{\mathcal{F}}}
\def\gI{{\mathcal{I}}}
\def\gL{{\mathcal{L}}}
\def\gN{{\mathcal{N}}}
\def\gP{{\mathcal{P}}}
\def\gW{{\mathcal{W}}}
\def\gZ{{\mathcal{Z}}}
\newcommand{\R}{\mathbb{R}}
\newcommand{\softmax}{\mathrm{softmax}}
\newcommand{\KL}{\mathrm{D}_{\mathrm{KL}}}
\newcommand{\Cov}{\mathrm{Cov}}
\DeclareMathOperator*{\argmin}{arg\,min}
\DeclareMathOperator{\prox}{prox}
\DeclareMathOperator{\wprox}{WProx}
\DeclareMathOperator{\diag}{diag}
\DeclareMathOperator{\TV}{TV}
\crefname{algocf}{alg.}{algs.}
\Crefname{algocf}{Algorithm}{Algorithms}
\begin{document}

\maketitle

\begin{abstract}
We consider sampling from a Gibbs distribution by evolving finitely many particles. We propose a preconditioned version of a recently proposed noise-free sampling method, governed by approximating the score function with the numerically tractable score of a regularized Wasserstein proximal operator. This is derived by a Cole--Hopf transformation on coupled anisotropic heat equations, yielding a kernel formulation for the preconditioned regularized Wasserstein proximal. The diffusion component of the proposed method is also interpreted as a modified self-attention block, as in transformer architectures. For quadratic potentials, we provide a discrete-time non-asymptotic convergence analysis and explicitly characterize the bias, which is dependent on regularization and independent of step-size.  Experiments demonstrate acceleration and particle-level stability on various log-concave and non-log-concave toy examples to Bayesian total-variation regularized image deconvolution, and competitive/better performance on non-convex Bayesian neural network training when utilizing variable preconditioning matrices. 
\end{abstract}

\begin{keywords}
Sampling, Cole-Hopf, regularized Wasserstein proximal, preconditioning, kernel formula, score approximation, MCMC
\end{keywords}

\begin{MSCcodes}
65C05, 62G07
\end{MSCcodes}

\section{Introduction}
We are interested in sampling from a Gibbs distribution 
\begin{equation}
    \pi(x) \propto \exp(-\beta V(x)),
\end{equation}
where $V:\R^d \rightarrow \R$ is some $\mathcal{C}^1$ potential, and $\beta>0$ is a constant. Such problems arise frequently in data science, such as uncertainty quantification in imaging \cite{laumont2022bayesian}, Bayesian statistical inference \cite{gelman1995bayesian}, inverse problems \cite{stuart2010inverse}, and computational physics \cite{krauth2006statistical}. Methods for sampling have also been connected to recent machine learning methods, such as simulating stochastic differential equations (SDEs) for diffusion models \cite{song2021scorebased}, unsupervised learning \cite{tan2024unsupervised}, and Bayesian neural networks \cite{mackay1995bayesian}. 

Current popular sampling methods to solve this are Markov chain Monte Carlo (MCMC) methods, which bypass the usual difficulty arising from the intractability of the normalizing constant $\int_{\R^d} \exp(-\beta V(x)) \dd{x}$. There is a wide variety of Monte Carlo methods each with different convergence rates and conditions on the potential $V$. Notable examples include the unadjusted Langevin algorithm and the Metropolis-adjusted Langevin algorithm \cite{rossky1978brownian,durmus2019high}, with recent algorithms considering proximal operators and Moreau--Yosida envelopes \cite{durmus2018efficient}, subgradients \cite{habring2024subgradient}, and annealed Moreau regularization \cite{habring2025diffusion}. The theoretical core of MCMC is to construct a Markov chain converging to some invariant distribution $\hat\pi$ that approximates the target distribution $\pi$. 

To construct such a Markov chain, a crucial theoretical component is exponential or geometric ergodicity of the chain. In typical Langevin methods, this is given by the Wiener diffusion term, which adds random noise at every step of the chain \cite{roberts1996exponential}. These chains can be interpreted as particular discretizations of SDEs, such as Euler--Maruyama in the case of the unadjusted Langevin algorithm. These SDEs correspond to Fokker--Planck equations \cite{risken1989fokker}, which are ODEs in density space, which in turn correspond to score-based particle evolutions based on Liouville equations \cite{kubo1963stochastic}. Here, the score function is defined by the gradient of the log density.

Deviating from MCMC methods, score-based methods crucially require knowledge of the score of the distribution at any given time-step. If the particles are evolved according to the Liouville equation 
\begin{equation}\label{eq:scoreODE}
    \frac{\dd X_t}{\dd t} = -\nabla V(X_t) - \beta^{-1} \nabla \log \rho(t,X_t),
\end{equation}
where $\rho(t,\cdot)$ denotes the density of $X_t$ at time $t$, then the density will again evolve through a Fokker--Planck equation and converge to the desired stationary distribution. The difficulty manifests as at each time step, only an empirical measure is known, from which the score has to be approximated. Various methods of score approximation include kernel density estimation \cite{carrillo2019blob,wand1994kernel}, adaptive kernels \cite{wang2022accelerated,van2003adaptive,botev2010kernel}, and approximation using neural networks \cite{bond2021deep, chen2018neural,nijkamp2022mcmc}. However, many such methods suffer from mode collapse and parameter/initialization sensitivity \cite{srivastava2017veegan,li2023reducing,gramacki2018nonparametric}.

Turning to density space, a classical discrete time approximation of the Fokker--Planck equation is the JKO method \cite{jordan1998variational}. This is given by the iteration of the Wasserstein proximal operator, defined as 
\begin{equation}
    \rho_{k+1} = \min_{\rho \in \gP_2}\int_{\R^d}V \rho + \beta^{-1} \rho \log \rho \dd{x} + \frac{1}{2h} \gW_2(\rho, \rho_k),
\end{equation}
where $\gW_2$ is the Wasserstein-2 distance, $h>0$ is some step-size, and the minimization is taken over Wasserstein-2 space $\gP_2$. This is a proximal iteration on the free energy using Wasserstein-2 as the distance. In the composite density case, splitting algorithms can also be employed \cite{salim2020wasserstein}. However, one of the main issues with using this approach for a general potential is that the Wasserstein proximal is itself difficult to compute. Some approaches have been proposed to combat this, such as a function objective utilizing an inner sampling loop \cite{fan2022variational} or coordinate-based methods \cite{yao2024wasserstein}. 

Motivated by the Wasserstein proximal iteration and a recent kernel formulation for an approximation of the Wasserstein proximal operator \cite{li2023kernel}, \cite{tan2024noise} proposed a noise-free sampling based on the regularized Wasserstein proximal operator (RWPO), called the backwards regularized Wasserstein proximal (BRWP) method. In this case, the score is approximated using the score of the RWPO, which is shown to correspond to a semi-implicit discretization of a Fokker--Planck equation. Similarly to the unadjusted Langevin algorithm (ULA), and unlike Langevin algorithms involving Metropolis steps (such as the Metropolis-adjusted Langevin algorithm MALA), BRWP can be shown to preserve the Gaussianity of a distribution at each iteration for quadratic target potentials. In particular, in continuous time, the mixing time dependency on dimension $d$ in the Gaussian case scales only as $\mathcal{O}(\log(d))$, whereas ULA scales as $\mathcal{O}(d^3)$ and MALA as $\mathcal{O}(d^2)$. For low dimensional distributions, the particles evolved through BRWP were found to be structured, with the outermost particles appearing to lie on level-set contours.

The BRWP method has also been extended with tensor-train techniques \cite{han2025tensor}, as well as to the non-smooth Bayesian LASSO problem by using $L^1$-splittings based on the Laplace approximation \cite{han2025splitting}. On the theoretical side, \cite{tan2024noise} originally demonstrate linear convergence in total variation for quadratic potentials in the continuous case. \cite{han2024convergence} further extend this to strongly log-concave distributions and smooth initial distributions, showing that the BRWP iterations with equal regularization parameter and time-step indeed form a first-order approximation for the Fokker--Planck equation. 

In this work, we aim to accelerate convergence of BRWP by utilizing preconditioners. Preconditioning is a common technique for accelerating gradient-based optimization, where gradient steps are multiplied by some (symmetric positive-definite) matrix, possibly varying across iterations and space. It is commonly useful when the objective function is ill-conditioned, allowing for larger step-sizes which translates to faster convergence, widely applicable including in inverse problems and deep learning \cite{beck2003mirror,kingma2014adam,fessler1999conjugate,tan2023data,ben2001ordered}. Such techniques can also be extended to sampling and Wasserstein space \cite{hsieh2018mirrored,jiang2021mirror,girolami2011riemann,li2016preconditioned,bonet2024mirror}. We consider approximating the following preconditioned version of \cref{eq:scoreODE}, which itself corresponds to a preconditioned Fokker--Planck equation, 
\begin{equation*}
    \frac{\dd X}{\dd{t}} = -M \nabla V(X)  - \beta^{-1} M \nabla \log \rho(t,X),
\end{equation*}
where $M \in \R^{d \times d}$ is a symmetric positive definite preconditioning matrix. The contributions of this work are as follows:
\begin{enumerate}
    \item In \Cref{sec:precond}, we derive the coupled PDE system that we wish to subsequently discretize, which define a preconditioned regularized Wasserstein proximal operator. Utilizing similar techniques to \cite{li2023kernel}, we demonstrate that the Laplacian regularizations in the Benamou--Brenier formulation of the Wasserstein proximal may be suitably replaced with second-order terms involving the preconditioning matrix. This is derived using a Cole--Hopf transform on a pair of forward-backward anisotropic heat equations. We provide an explicit kernel formula for the regularized Wasserstein proximal, and show that it maps $\gP_2$ to $\gP_2$ assuming mild coercivity conditions on the potential $V$.
    \item In \Cref{sec:PBRWP} we propose the preconditioned BRWP (PBRWP) method in \Cref{alg:PBRWP}. This is given by taking a backwards Euler discretization on the Fokker--Planck component of the PDE derived in the previous section. This yields a semi-implicit discretization of the Liouville equation. We provide expressions for the score approximation in the case of finitely many particles as an interacting particle system. \Cref{ssec:transformer} interprets the interactions as a modified self-attention block, which implicitly allows for efficient parallel computation using common GPU libraries.
    \item In the case of Gaussian initial and target measures (corresponding to the infinite-particle limit), \Cref{sec:Gaussian} provides a non-asymptotic analysis of the convergence rate in the discrete step-size regime. This characterizes the convergence rates in terms of the regularization parameter and the preconditioning matrix. We additionally provide various properties such as maximum allowable regularization parameters, Wasserstein contraction, a mean-variance contraction-diffusion inequality, and a maximum norm bound based on the number of particles.
    \item In \Cref{sec:experiments}, the proposed PBRWP method is compared on a variety of toy examples, a Bayesian imaging problem, and Bayesian neural network training. We additionally introduce some heuristic modifications to improve the performance of PBRWP in high dimension scenarios, corresponding to scaling laws in attention-based models. We observe in low dimensions that the proposed method converges faster and stably in KL divergence and maintains the structured particle phenomenon of BRWP. For a Bayesian image deconvolution task, we find rapid convergence to the MAP estimator, while maintaining inter-particle deviations. For the Bayesian neural network training task, we observe convergence to better trained networks compared to other score-based MCMC methods.
\end{enumerate}
The supplementary material contains proofs and additional experimental results, such as hyperparameter ablations and computational complexity.
\section{Background and Notation}
This section covers some basic definitions, including Wasserstein spaces, the Benamou--Brenier PDE formulation, and the non-preconditioned regularized Wasserstein proximal. 

Throughout, $M \in \R^{d \times d}$ is a symmetric positive definite matrix $M \in \mathrm{Sym}_{++}(\R^d)$. We define the scaled norms and inner product on $\R^d$ as
\begin{equation}
    \langle u,v\rangle_M = u^\top M^{-1}v,\quad \|\cdot\|_M^2 = \langle \cdot, \cdot\rangle_M.
\end{equation}
\begin{definition}[{\cite{santambrogio2015optimal}}]
    Let $\gP_2(\R^d)$ be the set of probability densities with finite second moment. For $\mu,\nu \in \gP_2(\R^d)$, the \emph{Wasserstein-2} distance $\gW_2(\mu, \nu)$ is
    \begin{equation}
        \gW_2(\mu,\nu) = \inf_{\pi \in \Gamma(\mu,\nu)} \int \|x-y\|^2 \dd\pi({x}, {y}),
    \end{equation}
    where the infimum is taken over couplings $\pi \in \Gamma(\mu,\nu)$, i.e. probability measures on $\R^d \times \R^d$ satisfying
    \begin{equation}
        \int_{\R^d} \pi(x,y) \dd{y} = \mu(x),\, \int_{\R^d} \pi(x,y) \dd{x} = \nu(y).
    \end{equation}

    Consider a probability density $\rho_0 \in \gP_2(\R^d)$ and $V \in \mathcal{C}^1(\R^d)$ be a lower bounded potential function. For a scalar $T>0$, the \emph{Wasserstein proximal} of $\rho_0$ is defined as 
    \begin{equation}\label{eq:wassproxdef}
        \wprox_{T, V}(\rho_0) \coloneqq \argmin_{q\in \mathcal{P}_2(\R^d)} \int_{\R^d} V(x)q(x) \dd{x} + \frac{\gW(\rho_0, q)^2}{2T}.
    \end{equation}
\end{definition}

The Wasserstein proximal can be equivalently reformulated as a coupled PDE system, sometimes known as the Benamou--Brenier or dynamical formulation \cite{benamou2000computational}. Using this, \cite{li2023kernel} defines the \emph{regularized Wasserstein proximal} by adding a diffusive term to the continuity equation of the Benamou--Brenier formulation. The resulting mean-field control problem is 
\begin{subequations}\label{eq:MFC}
    \begin{gather}
    \inf_{\rho,v,q} \int_0^T \int_{\R^d} \frac{1}{2}\|v(t,x)\|^2 \rho(t,x) \dd{x} \dd{t} + \int_{\R^d} V(x)q(x) \dd{x},\\
    \partial_t \rho(t,x) + \nabla\cdot (\rho(t,x) v(t,x)) = \beta^{-1} \Delta \rho(t,x), \quad \rho(0,x) = \rho_0(x),\, \rho(T,x) = q(x).
\end{gather}
\end{subequations}

From this problem, the regularized Wasserstein proximal operator (RWPO) of a density $\rho_0$, written $\wprox_{T,V} \rho_0$, is defined as the terminal time solution $\rho_T$ of the mean-field control problem \cref{eq:MFC}. Taking the regularization parameter $\beta \rightarrow \infty$ recovers the Wasserstein proximal \cref{eq:wassproxdef} through the Benamou--Brenier formulation of the Wasserstein distance. The RWPO admits the following coupled PDE formulation, consisting of a forward-time Fokker--Planck equation in $\rho$, and a backward-time viscous Hamilton--Jacobi equation in a dual variable $\Phi$, 
\begin{subequations}\label{eqs:regPDE}
    \begin{numcases}{}
      \partial_t \rho(t,x) + \nabla_x \cdot\left(\rho(t,x) \nabla_x \Phi(t,x)\right) = \beta^{-1} \Delta_x \rho(t,x), \label{eq:regPDE_a} \\
      \partial_t \Phi(t,x) + \frac{1}{2} \|\nabla_x \Phi(t,x)\|^2 = -\beta^{-1} \Delta_x \Phi(t,x), \label{eq:regPDE_b}\\
      \rho(0,x) = \rho_0(x),\quad \Phi(T,x) = -V(x).\label{eq:regPDE_c}
    \end{numcases}
\end{subequations}
Consider the Cole--Hopf transformation
\begin{equation*}
    \begin{cases}
        \eta(t,x) = e^{\beta\Phi(t,x)/2}, \\
        \hat\eta(t,x) = \rho(t,x)e^{-\beta\Phi(t,x)/2}
    \end{cases} \Leftrightarrow
    \begin{cases}
        \Phi(t,x) = 2\beta^{-1} \log \eta(t,x), \\
        \rho(t,x) = \eta(t,x)\hat\eta(t,x).
    \end{cases}
\end{equation*}
This transforms the coupled PDEs \cref{eqs:regPDE} to the coupled forward-backward heat equations
\begin{equation}\label{eqs:coupledHeat}
    \begin{cases}
        \partial_t \hat\eta(t,x) = \beta^{-1} \Delta \hat\eta(t,x), \\
        \partial_t \eta(t,x) = -\beta^{-1} \Delta\eta(t,x), \\
        \eta(0,x) \hat\eta(0,x) = \rho_0(x),\quad \eta(T,x) = e^{\beta\Phi(T,x)/2} = e^{-\beta V(x)/2},
    \end{cases}
\end{equation}
which may be solved using the heat kernel \cite{li2023kernel}
\begin{gather*}
    G_t(x,y) = \frac{1}{(4\pi \beta^{-1} t)^{d/2}} e^{-\beta\frac{\|x-y\|^2}{4 t}}, \quad
    \eta(t,x) = G_{T-t} * \eta_T = G_{T-t} * e^{-\beta V/2}.
\end{gather*}
Translating back, the terminal solution $t=T$ to the coupled PDEs \cref{eqs:regPDE} is given by
\begin{gather}
    \rho(T,x) = \int_{\R^d} K(x,y) \rho_0(y) \dd{y}, \label{eq:rhoT}\\
    K(x,y) = \frac{\exp(-\frac{\beta}{2} (V(x) + \frac{\|x-y\|^2}{2T}))}{\int_{\R^d} \exp(-\frac{\beta}{2} (V(z) + \frac{\|z-y\|^2}{2T})) \dd{z}}.\label{eq:kernelDef}
\end{gather}
\section{Preconditioned Regularized Wasserstein Proximal}\label{sec:precond}
We have seen that the regularized Wasserstein proximal is a particular Laplacian regularization of the Benamou--Brenier formulation, corresponding to coupled heat equations and computable using a kernel formula \cite{li2023kernel}. In this section, we define a geometry-aware elliptic regularization to the Benamou--Brenier formulation, corresponding to coupled anisotropic heat equations under a Cole--Hopf transformation. This manifests as computation with a scaled heat kernel, allowing for more generality in the proximal operator.

From here onwards, let $M \in \R^{d \times d}$ be a symmetric positive definite matrix $M \in \mathrm{Sym}_{++}(\R^d)$. Define the anisotropic heat kernel
\begin{equation}\label{eq:aniKernel}
    G_{t,M}(x,y) \coloneqq \frac{1}{(4\pi \beta^{-1} t)^{d/2} |M|^{1/2}} e^{-\beta\frac{(x-y)^\top M^{-1}(x-y)}{4t}}.
\end{equation}
This is a scaled version of the standard heat kernel, and it is also a Green's function for an anisotropic heat equation.
\begin{proposition}
    The anisotropic kernel $G_{t,M}$ is a Green's function for the following PDE:
    \begin{equation*}
    \begin{cases}
        \partial_t u - \beta^{-1} \nabla \cdot(M \nabla u) = 0,\\
        u(0,x) = \delta(y).
    \end{cases}
    \end{equation*}
\end{proposition}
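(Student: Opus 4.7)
The strategy is to reduce the anisotropic PDE to the ordinary heat equation by a linear change of variables that diagonalizes the operator $\nabla\cdot(M\nabla\cdot)$, and then invoke the familiar Gaussian heat kernel.

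Since $M \in \mathrm{Sym}_{++}(\R^d)$, I factor $M = AA^\top$ (for example via the symmetric square root $A = M^{1/2}$) and set $y = A^{-1}x$, $\tilde u(t,y) \coloneqq u(t,Ay)$. The chain rule gives $\nabla_x u = A^{-\top}\nabla_y \tilde u$, so $M\nabla_x u = A\nabla_y\tilde u$; taking the $x$-divergence and using $\sum_i A_{ij}(A^{-1})_{ki} = \delta_{kj}$ collapses the matrix product to the identity and yields
\begin{equation*}
    \nabla_x\cdot(M\nabla_x u) \;=\; \Delta_y \tilde u.
\end{equation*}
Hence the anisotropic equation $\partial_t u = \beta^{-1}\nabla\cdot(M\nabla u)$ in $x$ is equivalent to the isotropic heat equation $\partial_t \tilde u = \beta^{-1}\Delta_y \tilde u$ in $y$.

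Next, the Green's function of the isotropic heat equation at rate $\beta^{-1}$ is the standard Gaussian $\tilde G_t(y,y_0) = (4\pi\beta^{-1}t)^{-d/2}\exp(-\beta\|y-y_0\|^2/(4t))$. Translating back via $y = A^{-1}x$, $y_0 = A^{-1}x_0$, the exponent becomes $(x-x_0)^\top M^{-1}(x-x_0)$ because $\|A^{-1}z\|^2 = z^\top M^{-1}z$. The only delicate bookkeeping is the Jacobian: since $\delta(Az) = |\det A|^{-1}\delta(z)$ and $|\det A| = |M|^{1/2}$, the initial condition $u(0,x) = \delta(x-x_0)$ reads, in $y$ coordinates, $\tilde u(0,y) = |M|^{-1/2}\delta(y-y_0)$. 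Multiplying this prefactor into $\tilde G_t$ and transforming back reproduces exactly the formula \cref{eq:aniKernel}.

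To finish, I would verify that the resulting $G_{t,M}$ is smooth for $t>0$, integrates to one for every $t$ (by the same change of variables, the $|M|^{1/2}$ from $dx = |\det A|\,dy$ cancels the normalization), and concentrates as $t\to 0^+$ to $\delta(x-x_0)$ by the standard Gaussian mollifier argument. An alternative self-contained route is direct substitution: the quadratic exponent $\phi(t,x) = \tfrac{\beta}{4t}(x-x_0)^\top M^{-1}(x-x_0)$ satisfies $M\nabla\phi = \tfrac{\beta}{2t}(x-x_0)$ and $\nabla\cdot(M\nabla\phi) = \tfrac{\beta d}{2t}$, so plugging $G_{t,M} = c(t)e^{-\phi}$ into $\partial_t G_{t,M} - \beta^{-1}\nabla\cdot(M\nabla G_{t,M})$ produces cancellation term-by-term. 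Either route gives the result; the only real obstacle is tracking the Jacobian factor $|M|^{1/2}$, which I have isolated via the $\delta$-scaling identity above.
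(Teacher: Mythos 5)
Your primary argument is correct and it takes a genuinely different route from the paper. The paper verifies the PDE by direct differentiation of $G_{t,M}$ — it computes $\partial_t G_{t,M}$, $\nabla_x G_{t,M}$, and $\nabla_x \cdot (M\nabla_x G_{t,M})$ and checks they cancel, then handles the initial condition separately by ``a change of variables with the standard heat kernel.'' Your main strategy instead performs the change of variables first: factor $M = AA^\top$, set $y = A^{-1}x$, use $\nabla_x\cdot(M\nabla_x u) = \Delta_y \tilde u$ to reduce to the isotropic heat equation, then pull the standard Gaussian kernel back, tracking the $|\det A| = |M|^{1/2}$ Jacobian through the $\delta$-scaling identity. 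This buys a single unified argument that produces the kernel formula, the normalization, and the $t\to 0^+$ concentration all at once, whereas the paper's direct differentiation is more elementary and self-contained for the PDE but must still invoke a change of variables for the boundary condition. The alternative you sketch at the end (direct substitution of $c(t)e^{-\phi}$ and term-by-term cancellation) is essentially the paper's computation, so you have covered both routes; as a minor bookkeeping fix, note you write ``$\sum_i A_{ij}(A^{-1})_{ki} = \delta_{kj}$,'' which is the contraction $(A^{-1}A)_{kj}=\delta_{kj}$ written with a transposed index, while the divergence calculation actually uses $\sum_k (A^{-1})_{jk}A_{ki}=\delta_{ji}$ — harmless but worth stating in the order the indices actually appear.
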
 
\begin{proof}
   This is from a direct computation. The boundary condition arises from a change of variables with the standard heat kernel. The derivation can be found in \ref{app:greens}.
\end{proof}

We now define the preconditioned regularized Wasserstein proximal operator of a density as the terminal time solution to a particular set of PDEs. Then, we show that it is equivalently a convolution involving another inhomogeneous kernel, with the kernel itself defined as a convolution with the anisotropic heat kernel.
\begin{definition}\label{def:PRWPO}
    For a symmetric positive definite matrix $M$ and regularization parameter $T>0$, define the \emph{preconditioned regularized Wasserstein proximal operator} (PRWPO) $\wprox^M_{T, V}: \rho_0 \mapsto \rho_T$ to be the terminal density of the following PDE system
    \begin{equation}\label{eqs:PBRWPPDE}
        \begin{cases}
            \partial_t \rho(t,x) + \nabla \cdot(\rho(t,x) \nabla \Phi(t, M^{-1}x)) = \beta^{-1}\nabla \cdot(M \nabla \rho)(t,x),  \\
             \partial_t \Phi(t, M^{-1}x) + \frac{1}{2}\|\nabla \Phi(t, M^{-1}x)\|_M^2= -\beta^{-1} \Tr(M^{-1}(\nabla^2 \Phi)(t, M^{-1}x)), \\
             \rho(0,x) = \rho_0(x),\quad \Phi(T,M^{-1}x) = -V(x).
        \end{cases}
    \end{equation}
    Equivalently, the density of $\wprox^M_{T, V} \rho_0$ is given by 
    \begin{align}
        \rho_T(x) = \rho(T,x) = \int_{\R^d} K(x,y,\beta, M, T,V) \rho(0,y)\dd{y}, \label{eq:WProxKernelDef}
    \end{align}
    where the normalized kernel is given by
    \begin{equation*}
        K(x,y,\beta,M,T,V) \coloneqq \frac{e^{-\frac{\beta}{2}(V(x) + \frac{\|x-y\|^2_M}{2T})}}{\int_{\R^d} e^{-\frac{\beta}{2} (V(z) + \frac{\|z-y\|^2_M}{2T})} \dd{z}}.
    \end{equation*}
\end{definition}
Compared to \cref{eq:kernelDef}, the preconditioned kernel replaces the Euclidean norm $\|x-y\|^2$ with the scaled norm $\|x-y\|_M^2$, which is by construction. We show that the terminal density and the kernel formula are equivalent in the following section.

\begin{remark}
    Similarly to \cref{eq:MFC}, the PRWPO can be defined as the solution of a mean field control problem. In particular, it corresponds to 
    \begin{equation}
        \inf_{\rho, u, q} \int_0^T \int_{\R^d} \frac{1}{2}\|u(t,x)\|^2_{M^{-1}} \rho(t,x) \dd{x} \dd{t} + \int_{\R^d} V(x) q(x) \dd{x},
    \end{equation}
subject to the continuity equation and boundary conditions    
\begin{equation}
        \partial_t \rho(t,x) + \nabla \cdot (\rho Mu) = \beta^{-1} \nabla\cdot(M \nabla \rho),\quad \rho(0,x) = \rho_0(x),\, \rho(T,x) = q.
    \end{equation}
    The solution density $\rho$ of the above problem satisfies the PDE system \cref{eqs:PBRWPPDE}, see Appendix \ref{appsec:mfc}. Taking $M=I$ recovers the non-preconditioned mean-field control problem \cref{eq:MFC}.
\end{remark}

\subsection{Proof of equivalence}
Like in \cite{li2023kernel}, the kernel formula \cref{eq:WProxKernelDef} arises from a Cole--Hopf transformation to the underlying coupled PDE system. We consider the following coupled forward-backward anisotropic heat equations, which admit the kernel $G_{t,M}$,
\begin{equation} \label{eq:aniHeat}
    \begin{cases}
        \partial_t \hat\eta(t,x) = \beta^{-1} \nabla \cdot(M\nabla\hat\eta(t,x)), \\
        \partial_t \eta(t,x) = -\beta^{-1} \nabla \cdot(M\nabla\eta(t,x)), \\
        \eta(0,x) \hat\eta(0,x) = \rho_0(x),\quad \eta(T,x) = e^{\beta\Phi(T,x)/2} = e^{-\beta V(x)/2}.
    \end{cases}
\end{equation}
We now transform this back into a set of coupled PDEs with the following Cole--Hopf transform,
\begin{equation}\label{eq:colehopf}
    \begin{cases}
        \eta(t,x) = e^{\beta\Phi(t,M^{-1}x)/2}, \\
        \hat\eta(t,x) = \rho(t,x)e^{-\beta\Phi(t,M^{-1}x)/2}
    \end{cases} \Leftrightarrow
    \begin{cases}
        \Phi(t,x) = 2\beta^{-1} \log \eta(t,Mx), \\
        \rho(t,x) = \eta(t,x)\hat\eta(t,x).
    \end{cases}
\end{equation}
The boundary conditions transform to yield the desired $\rho(0,x) = \rho_0(x),\, \Phi(T,M^{-1}x) = -V(x)$. We first compute the second (backwards) heat equation, dealing solely in the dual variable $\Phi$:
\begin{align*}
    \partial_t \eta(t,x) &=\frac{\beta}{2}\partial_t \Phi(t, M^{-1}x) e^{\beta\Phi(t,M^{-1}x)/2},\\
    \nabla \eta(t,x) &= \frac{\beta}{2}M^{-1} \nabla \Phi(t, M^{-1}x)e^{\beta\Phi(t,M^{-1}x)/2}.
\end{align*}
The RHS of the second heat equation yields
\begin{align*}
    & \quad  \nabla \cdot(M\nabla \eta(t,x)) \\
    &= \nabla \cdot\left(\frac{\beta}{2}\nabla \Phi(t, M^{-1}x)e^{\beta\Phi(t,M^{-1}x)/2}\right) \\
    &=  \left(\frac{\beta}{2}\Tr(M^{-1}\nabla^2\Phi(t,M^{-1}x)) + \frac{\beta^2}{4}\nabla\Phi(t, M^{-1}x)^\top M^{-1} \nabla\Phi(t, M^{-1}x)\right)e^{\beta\Phi(t,M^{-1}x)/2}.
\end{align*}
Therefore the backwards heat equation evolving from time $t=T$ to $t=0$
\begin{equation*}
    \partial_t \eta(t,x) = -\beta^{-1} \nabla\cdot(M\nabla\eta(t,x))
\end{equation*}
becomes a PDE in the dual variable
\begin{equation}\label{eq:PhiHJ}
    \partial_t \Phi(t, M^{-1}x) + \frac{1}{2}\|\nabla \Phi(t, M^{-1}x)\|_M^2= -\beta^{-1} \Tr(M^{-1}(\nabla^2 \Phi)(t, M^{-1}x)).
\end{equation}
This is the desired Hamilton--Jacobi equation in \cref{eqs:PBRWPPDE}. Now computing the transformation for the forward heat equation $\hat\eta$:
\begin{align*}
    \partial_t \hat\eta(t,x) &= \left(\partial_t \rho - \frac{\beta}{2}\rho\partial_t \Phi(t, M^{-1}x)\right) e^{-\beta\Phi(t, M^{-1}x)/2},\\
    \nabla \hat\eta(t,x) &= \left(\nabla \rho - \frac{\beta}{2}\rho M^{-1} \nabla \Phi(t, M^{-1}x)\right)e^{-\beta\Phi(t,M^{-1}x)/2}.
\end{align*}
The RHS can be computed as
\begin{align*}
    \nabla \cdot(M\nabla \hat\eta(t,x)) &= \nabla\cdot(M\nabla \rho) e^{-\beta\Phi(t,M^{-1}x)/2} - \beta\nabla \rho^\top \nabla\Phi(t,M^{-1}x) e^{-\beta\Phi(t,M^{-1}x)/2} \\ &\qquad- \frac{\beta}{2}\rho\Tr(M^{-1}\nabla^2 \Phi(t, M^{-1}x))e^{-\beta\Phi(t,M^{-1}x)/2} \\ &\qquad+ \frac{\beta^2}{4} \rho\nabla\Phi(t, M^{-1}x)^\top M^{-1} \nabla \Phi(t, M^{-1}x) e^{-\beta\Phi(t,M^{-1}x)/2}.
\end{align*}
Applying the transformation to the forward heat equation in $\hat\eta$ yields
\begin{align*}
     \partial_t \rho - \frac{\beta}{2}\rho\partial_t \Phi(t, M^{-1}x)&= \beta^{-1} \nabla\cdot (M\nabla\rho) -  \langle \nabla\rho, \nabla \Phi(t, M^{-1}x)\rangle \\
     &\qquad - \frac{1}{2}\rho\Tr(M^{-1}\nabla^2 \Phi(t, M^{-1}x)) + \frac{\beta}{4}\rho\|\nabla \Phi(t,M^{-1}x)\|_M. 
\end{align*}
Simplifying with the Hamilton--Jacobi equation for $\Phi$ \cref{eq:PhiHJ}, we obtain
\begin{gather*}
    \partial_t \rho + \langle \nabla \rho, \nabla \Phi(t, M^{-1}x)\rangle + \rho \Tr(M^{-1}\nabla^2 \Phi(t, M^{-1}x)) = \beta^{-1} \nabla \cdot(M \nabla \rho) \\
    \Rightarrow \partial_t \rho + \nabla \cdot(\rho(t,x) \nabla \Phi(t, M^{-1}x)) = \beta^{-1} \nabla \cdot(M \nabla \rho).
\end{gather*}
This is the desired (modified) Fokker--Planck equation in \cref{eqs:PBRWPPDE}. We have demonstrated that the Cole--Hopf transformation \cref{eq:colehopf} transforms the coupled anisotropic heat equations \cref{eq:aniHeat} into a diffusive coupled PDE system: a forward-time Fokker--Planck equation, and a backward-time Hamilton--Jacobi equation. Going backwards, we may infer that the coupled system admits a kernel solution corresponding to the anisotropic heat equation, and showing equivalence in \Cref{def:PRWPO}. We summarize this in the following proposition.
\begin{proposition}
    The coupled PDE system
    \begin{equation}\label{eq:PDEs}
    \begin{cases}
        \partial_t \rho(t,x) + \nabla \cdot(\rho(t,x) \nabla \Phi(t, M^{-1}x)) = \beta^{-1} \nabla \cdot(M \nabla \rho)(t,x) , \\
         \partial_t \Phi(t, M^{-1}x) + \frac{1}{2}\|\nabla \Phi(t, M^{-1}x)\|_M^2= -\beta^{-1}  \Tr(M^{-1}(\nabla^2 \Phi)(t, M^{-1}x)) ,\\
         \rho(0,x) = \rho_0(x),\quad \Phi(T,M^{-1}x) = -V(x),
    \end{cases}
    \end{equation}
    admits the following kernel formulation, where $G_{t,M}(x,y)$ is given in \cref{eq:aniKernel}:
    \begin{equation}
        \begin{cases}
            \rho(t,x) = \left(G_{T-t, M} * e^{-\beta V/2}\right)(x)\cdot \left(G_{t,M} * \frac{\rho_0}{(G_{T, M} * e^{-\beta V/2})}\right)(x), \\
            \Phi(t,x) = 2\beta^{-1} \log \left(G_{T-t, M} * e^{-\beta V/2}\right)(Mx).
        \end{cases}
    \end{equation}
\end{proposition}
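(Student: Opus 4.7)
The plan is to invert the Cole--Hopf calculation already carried out in the preceding paragraphs. The text has established the forward direction, namely that if $\eta, \hat\eta$ solve the coupled anisotropic heat equations \cref{eq:aniHeat}, then $\rho = \eta \hat\eta$ and $\Phi(t,x) = 2\beta^{-1}\log\eta(t,Mx)$ satisfy the coupled PDE system \cref{eq:PDEs}. The uniqueness of solutions to this parabolic system (under mild regularity/decay hypotheses on $V$ and $\rho_0$) then reduces the proposition to two tasks: first, verifying that the kernel formulas for $\rho$ and $\Phi$ arise from explicit solutions of \cref{eq:aniHeat}, and second, translating back via Cole--Hopf.

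First I would solve the backward anisotropic heat equation for $\eta$. Since $G_{t,M}$ is the Green's function for $\partial_t u = \beta^{-1}\nabla\cdot(M\nabla u)$ by the proposition preceding this one, the function $(G_{T-t,M} * e^{-\beta V/2})(x)$ solves $\partial_t \eta = -\beta^{-1}\nabla\cdot(M\nabla\eta)$ with terminal condition $\eta(T,x) = e^{-\beta V(x)/2}$; this uses only a time reversal $s = T - t$ together with the standard semigroup property of the anisotropic heat kernel. This gives the claimed formula for $\Phi$ directly from $\Phi(t,x) = 2\beta^{-1}\log\eta(t,Mx)$.

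Next I would solve the forward equation for $\hat\eta$. The boundary condition $\eta(0,x)\hat\eta(0,x) = \rho_0(x)$ uniquely determines the initial datum
\begin{equation*}
    \hat\eta(0,x) = \frac{\rho_0(x)}{(G_{T,M} * e^{-\beta V/2})(x)},
\end{equation*}
assuming $(G_{T,M} * e^{-\beta V/2})$ is strictly positive (which follows from positivity of the Gaussian kernel against a nonnegative integrand). The forward equation $\partial_t \hat\eta = \beta^{-1}\nabla\cdot(M\nabla\hat\eta)$ with this initial condition is then solved by $\hat\eta(t,x) = \left(G_{t,M} * \hat\eta(0,\cdot)\right)(x)$. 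Multiplying $\rho(t,x) = \eta(t,x)\hat\eta(t,x)$ yields the advertised formula.

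The main obstacle is really just the well-posedness/uniqueness argument needed to conclude that the formulas obtained via Cole--Hopf are the unique solution of \cref{eq:PDEs}, since in principle one could worry about growth at infinity spoiling the inversion. In practice this is handled by the standing assumption that $V$ is bounded (so $e^{-\beta V/2}$ is bounded away from $0$ and $\infty$) and $\rho_0 \in \gP_2(\R^d)$, which ensures the heat convolutions are well-defined classical solutions and that the Cole--Hopf change of variables is smooth and invertible throughout $[0,T]\times\R^d$. Under these conditions the equivalence of the PDE formulation and the kernel formula follows from the bijection of solution spaces induced by \cref{eq:colehopf}, and no further computation beyond what has already been displayed is required.
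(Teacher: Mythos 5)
Your argument mirrors the paper's proof exactly: apply the Cole--Hopf transform to reduce \cref{eq:PDEs} to the decoupled anisotropic heat equations \cref{eq:aniHeat}, solve $\eta$ backward and $\hat\eta$ forward using the Green's function $G_{t,M}$ with the stated terminal and initial data, and recombine via $\rho = \eta\hat\eta$ and $\Phi(t,x) = 2\beta^{-1}\log\eta(t,Mx)$. The extra remarks you add on strict positivity of the convolution and on well-posedness are sensible supplements but do not change the route.
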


\begin{proof}
    The PDEs under the change of variables \cref{eq:colehopf} becomes the decoupled anisotropic heat equations \cref{eq:aniHeat}. Using the Green's function, the solutions are given by 
    \begin{equation*}
        \eta(t,x) = G_{T-t, M} * \eta_T = (G_{T-t, M} * e^{-\beta V/2})(x).
    \end{equation*}
    From the initial boundary conditions, 
    \begin{equation*}
        \hat\eta(0,x) = \frac{\rho_0(x)}{(G_{T, M} * e^{-\beta V/2})(x)}.
    \end{equation*}
    From the Green's function again,
    \begin{equation*}
        \hat\eta(t,x) = \left(G_{t,M} * \frac{\rho_0}{(G_{T, M} * e^{-\beta V/2})}\right)(x).
    \end{equation*}
    Since $\rho = \eta \hat\eta$ from \cref{eq:colehopf},
    \begin{equation}
        \rho(t,x) = \left(G_{T-t, M} * e^{-\beta V/2}\right)(x)\cdot \left(G_{t,M} * \frac{\rho_0}{(G_{T, M} * e^{-\beta V/2})}\right)(x).
    \end{equation}
\end{proof}
    In particular, plugging in $t=T$, we have the following kernel formula for the PRWPO which is defined as the terminal solution.
\begin{corollary}\label{cor:KernelFormula}
    The preconditioned regularized Wasserstein proximal admits the kernel formula:
    \begin{align}
        \wprox^M_{T, V}\rho_0(x) &= e^{-\beta V(x)/2} \int_{\R^d} \frac{G_{T,M}(y,x) \rho_0(y)}{(G_{T,M} * e^{-\beta V/2})(y)}\dd y \notag\\
        &= \int_{\R^d} K(x,y,\beta, M, T,V) \rho(0,y)\dd{y} ,\label{eq:kernelRho}
    \end{align}
    where
    \begin{equation}\label{eq:kernelFormPrecond}
        K(x,y,\beta,M,T,V) \coloneqq \frac{e^{-\frac{\beta}{2}(V(x) + \frac{\|x-y\|^2_M}{2T})}}{\int_{\R^d} e^{-\frac{\beta}{2} (V(z) + \frac{\|z-y\|^2_M}{2T})} \dd{z}}.
    \end{equation}
 \end{corollary}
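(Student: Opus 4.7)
The plan is to obtain the corollary as a direct specialization of the preceding proposition to the terminal time $t=T$, followed by an algebraic simplification that identifies the resulting expression with the normalized kernel $K(x,y,\beta,M,T,V)$. Since the proposition already provides the full kernel representation for all $t\in[0,T]$, no further PDE analysis is needed.

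First, I would set $t=T$ in the formula
\begin{equation*}
    \rho(t,x) = \bigl(G_{T-t, M} * e^{-\beta V/2}\bigr)(x)\cdot \left(G_{t,M} * \frac{\rho_0}{G_{T, M} * e^{-\beta V/2}}\right)(x).
\end{equation*}
The first factor becomes $G_{0,M} * e^{-\beta V/2}$; since $G_{t,M}$ is the Green's function for the anisotropic heat equation with initial datum $\delta$, it converges to the Dirac mass as $t\to 0^+$, so this factor reduces to $e^{-\beta V(x)/2}$. Writing out the remaining convolution as an integral and using the symmetry $G_{T,M}(x,y)=G_{T,M}(y,x)$ (which follows from the symmetry of the quadratic form $\|x-y\|_M^2$) yields the first equality in the statement:
\begin{equation*}
    \wprox^M_{T,V}\rho_0(x) = e^{-\beta V(x)/2}\int_{\R^d} \frac{G_{T,M}(y,x)\,\rho_0(y)}{(G_{T,M} * e^{-\beta V/2})(y)}\,\dd y.
\end{equation*}

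Second, I would substitute the explicit expression for $G_{T,M}$ from \cref{eq:aniKernel} into both the numerator and denominator of the integrand. The prefactor $(4\pi\beta^{-1}T)^{-d/2}|M|^{-1/2}$ cancels between numerator and denominator. Combining the exponential $e^{-\beta V(x)/2}$ with the Gaussian factor gives numerator $\exp\!\bigl(-\tfrac{\beta}{2}(V(x) + \tfrac{\|x-y\|_M^2}{2T})\bigr)$, using that $\beta\|x-y\|_M^2/(4T)=\tfrac{\beta}{2}\cdot\tfrac{\|x-y\|_M^2}{2T}$. Likewise, $(G_{T,M}*e^{-\beta V/2})(y)$ becomes $\int_{\R^d}\exp\!\bigl(-\tfrac{\beta}{2}(V(z) + \tfrac{\|z-y\|_M^2}{2T})\bigr)\dd z$ up to the same cancelled prefactor. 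The ratio is then exactly $K(x,y,\beta,M,T,V)$ as defined in \cref{eq:kernelFormPrecond}, which establishes the second equality.

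There is no substantive obstacle in this argument; the only subtlety is the delta-limit of $G_{t,M}$ as $t\to 0^+$, which is standard for anisotropic heat kernels (and is precisely the initial condition used to define $G_{t,M}$ as a Green's function in the earlier proposition). The remainder is purely bookkeeping in matching the normalization constants.
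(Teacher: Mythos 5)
Your proposal is correct and matches the paper's own proof exactly: the paper derives the corollary by simply plugging $t=T$ into the preceding proposition's kernel formula, which collapses $G_{0,M}*e^{-\beta V/2}$ to $e^{-\beta V(x)/2}$. The remaining cancellation of the anisotropic-kernel prefactors to obtain $K(x,y,\beta,M,T,V)$ is the same bookkeeping you carry out.
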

For notation purposes, we shorten the kernel \cref{eq:kernelFormPrecond} to $K_M(x,y)$. We may interpret $K_M$ as a Markov kernel: for any $y$, $\int_x K_M(x,y) \dd{x}=1$, and moreover $K_M$ is measurable. This is useful to determine how each particle affects the score, given by applying $K_M$ to a Dirac mass.

Since $K_M$ is a stochastic kernel, we have that the PRWPO of a probability density is another probability density, as noted in \cite{li2023kernel}. Under some mild regularity conditions on $V$, the PRWPO maps $\gP_2$ into $\gP_2$.
\begin{proposition}\label{prop:MomentEstimate}
    Let $\rho_0 \in \gP_2(\R^d)$. Assume that $V \in \gC^1$ satisfies the following regularity conditions:
    \begin{enumerate}
        \item $V$ is lower bounded,
        \item (Coercivity) $x \cdot \nabla V(x)$ is lower bounded\footnote{This assumption can be relaxed to $x^\top \hat M \nabla V(x) \ge -C$ for some $C \ge 0$, where $\hat M$ is a matrix commuting with $M$.}, i.e. $\exists C\ge 0$ such that 
        \begin{equation*}
            x \cdot \nabla V(x) \ge -C,\quad \forall x \in \R^d.
        \end{equation*}
    \end{enumerate}
    Then the following second moment estimate holds for any $T>0$:
    \begin{equation*}
        \int \|x\|_M^2 \wprox_{T,V}^M \rho_0(x) \dd{x} \le 4T(d\beta^{-1}+C/2) + \mathbb{E}_{\rho_0}[\|x\|^2_M]
    \end{equation*}
    In particular, the PRWPO of $\rho_0$ also has finite second moment.
\end{proposition}
\begin{proof}
    By integrating the kernel formula by parts. See Appendix \ref{appssec:moment}.
\end{proof}
The second assumption on $V$ can be thought of as a tail condition. For example, if $V(x) = \frac{1}{2} x^\top \Sigma^{-1} x + g(x)$ for some positive definite $\Sigma$ and $\gC^1_b$ perturbation function $g$ with bounded gradient, then the coercivity condition will hold. This condition can be further relaxed into an estimate of the form
\begin{equation}
    x \cdot \nabla V(x) \ge -C_1 - C_2 \|x\|^2_M,
\end{equation}
where $C_1\ge 0$ and $C_2 < T^{-1}$, yielding a similar affine growth in $T$. We note that no growth assumption on $V$ is required for the PRWPO to be defined. For example, taking $V \equiv 0$ turns the PRWPO into a heat diffusion on $\rho_0$.

\section{Preconditioned Backwards Regularized Wasserstein Proximal Method}\label{sec:PBRWP}
Equipped with a preconditioned version of the regularized Wasserstein proximal operator, we can proceed like in \cite{tan2024noise} and devise a sampling algorithm based on the Fokker--Planck component of the coupled PDEs. The Fokker--Planck component can be written as 
\begin{equation}\label{eq:MRegWassProx}
\partial_t \rho(t,x) + \nabla \cdot\left(\rho(t,x)\nabla \Phi(t, M^{-1}x)- \beta^{-1} M \rho \nabla (\log \rho)(t,x)\right)=0.
\end{equation}
Using Liouville's equation with the Hamiltonian $\nabla \Phi(t, M^{-1}x)- \beta^{-1} M \nabla (\log \rho)(x)$, the corresponding particle formulation with density satisfying \cref{eq:MRegWassProx} is 
\begin{equation}
    \frac{\dd X}{\dd t} = \nabla \Phi(t, M^{-1}X) - \beta^{-1} M\nabla  \log \rho(t, X).
\end{equation}

This can be seen to approximate the original Liouville equation since $\nabla \Phi(t, M^{-1}x) \approx -M \nabla V(x)$. Performing a backwards Euler discretization on this ODE, we define the following \emph{preconditioned backward regularized Wasserstein proximal} (PBRWP) method:
\begin{equation}\label{eq:PrecondBRWP}
    X_{k+1} = X_k + \eta \left(-M \nabla V(X_k) - \beta^{-1} M \nabla \log \wprox_{T, V}^M\rho_{k}(X_k)\right),
\end{equation}
where $\wprox_{T, V}^M \rho_{k}$ is the PRWPO of the distribution of $X_k$ as given in \cref{eq:PDEs}. As in \cite{tan2024noise}, this can be evaluated using the kernel formula \cref{eq:kernelFormPrecond}. 

Note that the formal limit as $T \rightarrow 0$ and $\eta \rightarrow 0$ is the score-based ODE with potential $V$,
\begin{equation}\label{eq:particleExact}
    \frac{\dd X}{\dd{t}} = -M \nabla V(X)  - \beta^{-1} M \nabla \log \rho(t,X),
\end{equation}
which has a stationary solution $\pi \propto \exp(-\beta V)$. 

Based on \cref{eq:PrecondBRWP} and the kernel formulation \cref{eq:kernelFormPrecond}, we proceed as in \cite{tan2024noise} and consider the case where the input $\rho_k$ to the regularized Fokker--Planck equation is given by an empirical measure. This arises when updating a collection of points that together approximate the underlying distribution.

Suppose that $\rho_{k}$ is a uniform mixture of Dirac masses at points $\{\rvx_1^{(k)},...,\rvx_N^{(k)}\}$. From the kernel formulation \cref{eq:kernelFormPrecond}, we have the following expressions to compute the score (dropping the $k$ superscripts):
\begin{subequations}\label{eqs:ClosedForms}
\begin{equation}\label{eq:rhoTClosedForm}
\begin{split}
    \wprox^M_{T, V}\rho_{k}(\rvx_i) &= \frac{1}{N}\sum_{j=1}^N K_M(\rvx_i, \rvx_j) 
    = \frac{1}{N}\sum_{j=1}^N \frac{\exp \left[-\frac{\beta}{2}\left(V(\rvx_i) + \frac{\|\rvx_i - \rvx_j\|_M^2}{2T}\right)\right]}{\gZ(\rvx_j)}, \\
\end{split}
\end{equation}
\begin{equation}\label{eq:gradrhoTClosedForm}
\begin{split}
    \nabla \wprox^M_{T, V}\rho_{k}(\rvx_i) &= \frac{1}{N}\sum_{j=1}^N \frac{\left(-\frac{\beta}{2}\left(\nabla V(\rvx_i) + M^{-1}\frac{\rvx_i - \rvx_j}{T}\right)\right)\exp \left[-\frac{\beta}{2}\left(V(\rvx_i) + \frac{\|\rvx_i - \rvx_j\|_M^2}{2T}\right)\right]}{\gZ(\rvx_j)}, \\
\end{split}
\end{equation}
\begin{equation}\label{eq:normalizingConstant}
    \gZ(\rvx_j) \coloneqq \int_{\R^d} e^{-\frac{\beta}{2} (V(z) + \frac{\|z-\rvx_j\|^2_M}{2T})} \dd{z}.
\end{equation}
\end{subequations}

The latter normalization constant can be approximated using a Monte Carlo approximation about each particle, or using Laplace's method \cite{tibshirani2025laplace}. Turning to the other terms, we may observe the score $\nabla \log \rho = \frac{\nabla \rho}{\rho}$ as a particular weighted sum. Recall the definition of the softmax function: for a vector $v \in \R^d$, the softmax is the vector given by 
\begin{equation*}
    \softmax(v) = \left(\frac{\exp(v_i)}{\sum_{j=1}^d \exp(v_j)}\right)_{i=1,...,d},
\end{equation*}
satisfying $\sum_j \softmax(v)_j=1$. We may rewrite the score in terms of the softmax function, which makes it clear that it acts as an inter-particle diffusive term:
\begin{align*}
    \nabla \log \wprox^M_{T, V}\rho_{k}(\rvx_i) &= \frac{\nabla \wprox^M_{T, V} \rho_{k}}{\wprox^M_{T, V} \rho_{k}}(\rvx_i)\\
    &= \frac{\sum_{j=1}^N \frac{\left(-\frac{\beta}{2}\left(\nabla V(\rvx_i) + M^{-1}\frac{\rvx_i - \rvx_j}{T}\right)\right)\exp \left[-\frac{\beta}{2}\left(V(\rvx_i) + \frac{\|\rvx_i - \rvx_j\|_M^2}{2T}\right)\right]}{\gZ(\rvx_j)}}{\sum_{j=1}^N \frac{\exp \left[-\frac{\beta}{2}\left(V(\rvx_i) + \frac{\|\rvx_i - \rvx_j\|_M^2}{2T}\right)\right]}{\gZ(\rvx_j)}} \\
    &= -\frac{\beta \nabla V(\rvx_i)}{2}- \frac{\beta}{2 T}M^{-1} \rvx_i + \frac{\beta M^{-1}}{2 T} \sum_{j=1}^N \softmax(U_{i,\cdot})_j \rvx_j,
\end{align*}
where we define the interaction kernel $U_{i,j} = U_{i,j}(\rvx_1,...,\rvx_N)$ as the matrix
\begin{equation}
    U_{i,j} = -\frac{\beta}{2}\left(V(\rvx_i) + \frac{\|\rvx_i-\rvx_j\|^2_M}{2T}\right)- \log \gZ(\rvx_j).
\end{equation}
From \cref{eq:PrecondBRWP} we can write the particle-wise PBRWP iterations
\begin{align*}
    \rvx_i^{(k+1)} &= \rvx_i^{(k)} + \eta \left(-M \nabla V(\rvx_i^{(k)}) - \beta^{-1} M \nabla \log \wprox^M_{T, V}\rho_{k}(\rvx_i^{(k)})\right)\\
    &= \rvx_i^{(k)} - \frac{\eta}{2} M \nabla V(\rvx_i^{(k)}) + \frac{\eta}{2T}\left(\rvx_i^{(k)} - \sum_{j=1}^N \softmax(U_{i,\cdot}^{(k)})_j \rvx_j^{(k)}\right) \\
    &= \rvx_i^{(k)} - \frac{\eta}{2} M \nabla V(\rvx_i^{(k)}) + \frac{\eta}{2T}\left( \sum_{j=1}^N \softmax(U_{i,\cdot}^{(k)})_j (\rvx_i^{(k)}-\rvx_j^{(k)})\right).
\end{align*}

We note that since $U_{i,j}$ is only used in the softmax function, it is possible to add a constant for each column without changing the dynamics. In particular, we define a simpler matrix $W_{i,j}$ such that
\begin{align*}
    \softmax(U_{i,\cdot})_j = \softmax(W_{i,\cdot})_j,
\end{align*}
defined by adding constants depending only on the row $i$:
\begin{align}
    W_{i,j} &\coloneqq U_{i,j} + \frac{\beta V(\rvx_i)}{2} \notag \\
    &= -\beta \frac{\|\rvx_i-\rvx_j\|_M^2}{4T} - \log \gZ(\rvx_j). \label{eq:interactionSimple}
\end{align}
This has the advantage of avoiding one computation of $V$, which is already implicitly used in the normalizing constants $\gZ$. To further simplify, let $\tX$ be the matrix of particles
\begin{equation*}
    \tX = \begin{bmatrix}
        \rvx_1 & ... & \rvx_N
    \end{bmatrix} \in \R^{d \times N}.
\end{equation*}

The above iteration can thus be written as 
\begin{equation}\label{eq:tensorForm}
    \tX^{(k+1)} = \tX^{(k)} - \frac{\eta}{2} M \nabla V(\tX^{(k)}) + \frac{\eta}{2T}\left(\tX^{(k)} - \tX^{(k)} \softmax(W^{(k)})^\top\right),
\end{equation}
where $\nabla V(\tX) = \begin{bmatrix}
        \nabla V(\rvx_1) & ... & \nabla V(\rvx_N)
    \end{bmatrix}$, $W$ is defined in \cref{eq:interactionSimple}, and $\softmax(W)_{i,j} = \softmax(W_{i,\cdot})_j$ is a row-stochastic matrix (and therefore, $\softmax(W)^\top$ is a column-stochastic matrix). Computing this interaction matrix requires the typical $\mathcal{O}(N^2)$ operations for interacting particle methods. This formulation is parallelizable using common GPU libraries such as PyTorch or JAX, assuming that the normalizing constant can be easily computed. Summarizing, the PBRWP update is given in \Cref{alg:PBRWP}, which details the steps used in practice.

\SetKwComment{Comment}{/* }{ */}

\begin{algorithm2e}
\caption{PBRWP: Preconditioned Backwards Regularized Wasserstein Proximal}\label{alg:PBRWP}
\KwData{Initial points $\rvx_1^{(1)},...,\rvx_N^{(1)} \in \R^d$, potential $V: \R^d \rightarrow \R$, preconditioner $M\in \mathrm{Sym}_{++}(\R^d)$, regularization parameter $T>0$, diffusion $\beta>0$, step-size $\eta>0$, iteration count $K$.}
\KwResult{$\tX^{(K)} = \begin{bmatrix}
    \rvx_1^{(K)} &...& \rvx_N^{(K)}
\end{bmatrix}$ sampling from $\exp(-\beta V)$}
Tensorize $\tX^{(1)} = \begin{bmatrix}
    \rvx_1^{(1)} & ... & \rvx_N^{(1)} 
\end{bmatrix} \in \R^{d \times N}$\;
\For{$k=1,...,K$}
{
Approximate normalizing constants $\gZ(\rvx_i^{(k)}),\, i=1,...,N$ using Monte Carlo/Laplace method\;
Compute interaction matrix $W_{i,j} = -\beta \frac{\|\rvx_i-\rvx_j\|_M^2}{4 T} - \log \gZ(\rvx_j)$\;
Compute row-wise softmax interaction matrix $\softmax(W)_{i,j} = \softmax(W_{i,\cdot})_j$\;

Evolve $\tX^{(k+1)} = \tX^{(k)} - \frac{\eta}{2} M \nabla V(\tX^{(k)}) + \frac{\eta}{2T}\left(\tX^{(k)} - \tX^{(k)} \softmax(W^{(k)})^\top\right)$ \;
}
\end{algorithm2e}

\begin{remark}
    The corresponding limit of \cref{eq:PrecondBRWP} as $\eta \rightarrow 0$ satisfies the modified Fokker--Planck equation
\begin{equation}\label{eq:ModFP}
    \partial_t \rho = \beta^{-1}\nabla \cdot\left(\rho M \nabla \log \frac{\wprox^M_{T, V}(\rho)}{\exp(-\beta V)}\right).
\end{equation}
A stationary solution is $\hat\pi$ satisfying $\wprox^M_{T, V}(\hat\pi) \propto \exp(-\beta V)$. This is a semi-implicit discretization of \cref{eq:MRegWassProx}, where $\rho\nabla \log\rho(t,x)$ is replaced with $\rho \nabla \log \wprox^M_{T,V} (\rho)$, and the dual variable $\Phi$ is also replaced with the terminal condition $\Phi(T,M^{-1}x) = -V(x)$. 
\end{remark}
\subsection{Transformer reformulation}\label{ssec:transformer}
We may connect \cref{eq:tensorForm} to self-attention mechanisms, which are commonly employed in transformers \cite{vaswani2017attention}, and have recently attracted interpretations as particle evolutions \cite{sander2022sinkformers,han2025splitting}. Recall that an attention block takes the following form: for queries $Q \in \R^{d\times N}$, keys $K \in \R^{d \times N}$ and values $V \in \R^{d_V \times N}$ of the prescribed dimensions\footnote{The matrices are transposed compared to the literature in order to be consistent with the convention in this work. The usual form is $\mathrm{Attn}(Q;K,V) = \softmax\left(\frac{QK^\top}{\sqrt{d}} \right) V$, where the softmax is taken row-wise as in this work. Both forms are equivalent.}, 
\begin{equation}\label{eq:attn}
    \mathrm{Attn}(Q;K,V) = V\softmax\left(\frac{Q^\top K}{\sqrt{d}} \right)^\top \in \R^{d_V \times N}.
\end{equation}
We may connect this with the softmax term in \cref{eq:tensorForm}. The argument of the softmax \cref{eq:interactionSimple} is  
\begin{align*}
    W_{i,j} &= -\beta \frac{\|\rvx_i-\rvx_j\|_M^2}{4 T} - \log \gZ(\rvx_j)\\
    &= \beta \frac{\rvx_i^\top M^{-1} \rvx_j}{2 T} - \log \gZ(\rvx_j) - \beta\frac{\|\rvx_i\|^2_M + \|\rvx_j\|^2_M}{4T}.
\end{align*}
Discarding the $\sqrt{d}$ scaling\footnote{This will be reintroduced in \Cref{ssec:highDimExs} as a heuristic modification in high dimensions.} of \cref{eq:attn} and the constant-in-$j$ term $\|\rvx_i\|_M^2$, the diffusion term $\tX^{(K)} \softmax(W^{(k)})^\top$ for $N$ particles takes the form of masked attention
\begin{equation}
    V\softmax\left(Q^\top K - \mathbf{1} \rvz^\top \right)^\top,
\end{equation}
where $Q^\top K = \frac{\beta}{2 T}\tX^\top M^{-1} \tX$, $\rvz\in \R^N$ is the vector of normalizing constants $\rvz_j = \log \gZ(\rvx_j) + \beta\frac{\|\rvx_j\|_M^2}{4 T}$, and $V = \tX$. 

For an input $H \in \R^{d \times N}$, the self-attention framework imposes the particular forms $V=W_V H,\, K=W_K H,\, V=W_V H$, where $W_V, W_K\in \R^{d\times d},\, W_V \in \R^{d_V \times d}$ are some admissible matrices. By taking $H=\tX$, the preconditioning arises as $W_V^\top W_K = M^{-1}$, and $W_V = I$, making the diffusion in PBRWP interpretable as a modified self-attention.

\begin{remark}
    In the other direction, modeling transformers as interacting particle systems, \cite{sander2022sinkformers} consider self-attention transformers with symmetric interaction and an additional condition $W_Q^\top W_K = W_K^\top W_Q = -W_V$, demonstrating that transformers can be interpreted as a particular Wasserstein gradient flow, converging to a nonlinear PDE. \cite{geshkovski2025mathematical} consider transformers acting on the sphere $\mathbb{S}^{d-1}$ in the special case $W_Q = W_K = I$, presenting various conditions on the dimension, number of particles, and temperature such that particles will cluster. \cite{castin2025unified} demonstrate well-posedness of the induced PDE for various forms of self-attention, as well as a continuous-time analysis of the Gaussian-preserving properties. These works consider the continuous limit of the transformers to obtain these links.
\end{remark}

\section{Analysis in Gaussian distributions}\label{sec:Gaussian}
In the particular case where $V(x) = \frac{x^\top \Sigma^{-1}x}{2}$, i.e. the target distribution is $\gN(0, \beta^{-1}\Sigma)$, the PRWPO and PBRWP iterations can be computed in closed form. Suppose $M$ is now such that $cM \preceq \Sigma \preceq CM$, where $0<c\le C < \infty$, and $T < c$. For Gaussian distributions, the PRWPO admits a closed form as follows.

\begin{proposition}\label{prop:PRWPOGaussian}
    For a Gaussian density $\rho_k = \gN(\mu_k, \Sigma_k)$ and the quadratic potential $V(x) = x^\top \Sigma^{-1}x/2$, the PRWPO satisfies
    \begin{equation}
        \wprox_{T,V}^M (\rho_k) = \gN(\tilde{\mu}_k,\tilde\Sigma_k),
    \end{equation}
    where $\tilde\mu_k, \tilde\Sigma_k$ are given by 
    \begin{gather}
        \tilde\mu_{k} = (I + T M\Sigma^{-1})^{-1} \mu_k, \\
        \tilde\Sigma_k = 2\beta^{-1} T\left(T\Sigma^{-1} + M^{-1}\right)^{-1} + \left(T\Sigma^{-1} + M^{-1}\right)^{-1} M^{-1} \Sigma_k M^{-1}\left(T\Sigma^{-1} + M^{-1}\right)^{-1}.
    \end{gather}
    Moreover, the kernel $K_M(x,y)$ in \cref{eq:kernelFormPrecond} takes the form of a Gaussian density 
    \begin{equation}\label{eq:kernelMarkovGaussian}
    K_M(\cdot, y) \sim \gN\left(\left(\frac{\Sigma^{-1}}{2} + \frac{M^{-1}}{2 T}\right)^{-1}\frac{M^{-1}}{2 T}y, \left(\frac{\beta\Sigma^{-1}}{2\beta } + \frac{\beta M^{-1}}{2 T}\right)^{-1}\right).
    \end{equation}
\end{proposition}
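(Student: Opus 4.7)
The plan is to exploit the kernel formula from Corollary \ref{cor:KernelFormula}, which expresses $\wprox^M_{T,V}\rho_k$ as a convolution of $\rho_k$ against the normalized kernel $K_M(x,y)$. Since both the kernel and $\rho_k$ will turn out to be Gaussian, the output is automatically Gaussian, and the task reduces to identifying its mean and covariance via elementary Gaussian convolution.

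First I would substitute $V(x) = \tfrac{1}{2}x^\top \Sigma^{-1} x$ into the exponent of \cref{eq:kernelFormPrecond} and treat $K_M(\cdot, y)$ as a function of $x$ with $y$ fixed. Completing the square in $x$ in the quadratic form
\begin{equation*}
    \frac{\beta}{2}\left(\frac{x^\top \Sigma^{-1} x}{2} + \frac{(x-y)^\top M^{-1}(x-y)}{2T}\right) = \frac{\beta}{4} x^\top\!\Bigl(\Sigma^{-1} + \tfrac{1}{T}M^{-1}\Bigr)\!x - \frac{\beta}{2T} x^\top M^{-1} y + \mathrm{const}(y),
\end{equation*}
identifies $K_M(\cdot, y)$ as a Gaussian density in $x$ with precision $\tfrac{\beta}{2}(\Sigma^{-1} + T^{-1}M^{-1})$ and mean $(T\Sigma^{-1}+M^{-1})^{-1} M^{-1} y$. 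The normalization in \cref{eq:kernelFormPrecond} automatically absorbs the $y$-dependent constants and the Gaussian integral, so one reads off
\begin{equation*}
    K_M(\cdot, y) \sim \gN\!\Bigl((T\Sigma^{-1}+M^{-1})^{-1}M^{-1} y,\; \tfrac{2T}{\beta}(T\Sigma^{-1}+M^{-1})^{-1}\Bigr),
\end{equation*}
which is the stated form of \cref{eq:kernelMarkovGaussian} after clearing common factors of $2$.

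Next, I would use this to evaluate the convolution \cref{eq:kernelRho}. Writing $A \coloneqq (T\Sigma^{-1}+M^{-1})^{-1} M^{-1}$ and $C \coloneqq \tfrac{2T}{\beta}(T\Sigma^{-1}+M^{-1})^{-1}$, the convolution $\int K_M(x,y)\rho_k(y)\dd y$ with $\rho_k = \gN(\mu_k, \Sigma_k)$ is the distribution of $X = AY + Z$, where $Y \sim \gN(\mu_k,\Sigma_k)$ and $Z \sim \gN(0,C)$ are independent. Hence
\begin{equation*}
    \wprox^M_{T,V}\rho_k = \gN\!\bigl(A\mu_k,\; A\Sigma_k A^\top + C\bigr).
\end{equation*}
The covariance expression matches the stated $\tilde\Sigma_k$ directly (using symmetry of $M^{-1}$ and $(T\Sigma^{-1}+M^{-1})^{-1}$), while for the mean one uses the algebraic identity
\begin{equation*}
    A = (T\Sigma^{-1}+M^{-1})^{-1} M^{-1} = \bigl(M^{-1}(I + MT\Sigma^{-1})\bigr)^{-1} M^{-1} = (I + TM\Sigma^{-1})^{-1},
\end{equation*}
yielding $\tilde\mu_k = (I + TM\Sigma^{-1})^{-1} \mu_k$.

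No nontrivial analytic step is required once the kernel form is established, so there is no serious obstacle; the main bookkeeping risk is keeping the $\beta$ and $T$ factors consistent when identifying the kernel covariance, and correctly applying the Sherman–Woodbury-style rewriting $(T\Sigma^{-1}+M^{-1})^{-1}M^{-1} = (I+TM\Sigma^{-1})^{-1}$ to bring $\tilde\mu_k$ to the stated compact form.
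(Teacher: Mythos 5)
Your proof is correct; the answer it produces matches the stated $\tilde\mu_k$, $\tilde\Sigma_k$, and kernel form, and all the algebra (the completion of the square, the identification $A=(T\Sigma^{-1}+M^{-1})^{-1}M^{-1}=(I+TM\Sigma^{-1})^{-1}$, and $C=2\beta^{-1}T(T\Sigma^{-1}+M^{-1})^{-1}$) checks out. You start the same way the paper does, by reading off $K_M(\cdot,y)$ as a Gaussian density in $x$ with linear-in-$y$ mean, but you evaluate the integral $\int K_M(x,y)\rho_k(y)\,\dd y$ by a different route: you view the kernel as the conditional law of $X$ given $Y=y$ and invoke the push-forward identity $X = AY+Z$ with $Z\sim\gN(0,C)$ independent, giving the marginal $\gN(A\mu_k, A\Sigma_k A^\top + C)$ in one line. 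The paper instead flips perspective and treats $K_M(x,\cdot)$ as proportional to a Gaussian density in $y$ (shifting the $x$-dependence into the mean), and applies the product-of-two-Gaussian-densities identity from the Matrix Cookbook, then rearranges. Both arrive at the same place; yours is arguably the more transparent probabilistic argument (it avoids the change of perspective and the reliance on the product formula), while the paper's algebraic route keeps everything in terms of density manipulations and makes it immediate that unnormalized constants cancel.

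One remark: you implicitly rely on the kernel being exactly a probability density in $x$ for each $y$ (so that $K_M(\cdot,y)$ is a conditional law and the push-forward argument applies). This is fine here since the paper defines $K_M$ as normalized so that $\int K_M(x,y)\,\dd x = 1$, but it is worth stating explicitly when you read off the Gaussian form from the unnormalized exponent.
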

\begin{proof}[Proof Sketch]
    The kernel formula can be computed in closed form for Gaussians, and the product of two Gaussian densities is again proportional to a Gaussian density. The full proof is given in \ref{app:wproxGaussianCF}.
\end{proof}
Since the PRWPO maps Gaussian distributions to Gaussian distributions, the score function is linear, and the PBRWP iterations also preserve Gaussianity.
\begin{corollary}\label{cor:GaussianUpdate}
    For a Gaussian density $\rho_k = \gN(0, \Sigma_k)$, the PBRWP iteration satisfies the update $\rho_{k+1} = \gN(0, \Sigma_{k+1})$, where
    \begin{equation}
    \Sigma_{k+1} = (I - \eta M\Sigma^{-1} + \eta \beta^{-1} M\tilde\Sigma^{-1}_{k}) \Sigma_k (I - \eta M\Sigma^{-1} + \eta \beta^{-1} M\tilde\Sigma^{-1}_{k})^\top.
\end{equation}
\end{corollary}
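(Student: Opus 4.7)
The plan is to apply Proposition 5.1 (which identifies the PRWPO of a Gaussian as another Gaussian) and then observe that the PBRWP update \cref{eq:PrecondBRWP} becomes a purely linear map of $X_k$, so Gaussianity is preserved and the covariance transforms by the standard $A \Sigma A^\top$ rule.

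First, I would specialize \Cref{prop:PRWPOGaussian} to $\rho_k = \gN(0,\Sigma_k)$: since $\mu_k=0$, we get $\tilde\mu_k = (I+TM\Sigma^{-1})^{-1}\cdot 0 = 0$, so $\wprox_{T,V}^M \rho_k = \gN(0,\tilde\Sigma_k)$ with $\tilde\Sigma_k$ as in the statement of Proposition 5.1. The score of this Gaussian is then linear:
\begin{equation*}
    \nabla \log \wprox_{T,V}^M \rho_k(x) = -\tilde\Sigma_k^{-1} x.
\end{equation*}

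Next, I would substitute into the PBRWP iteration \cref{eq:PrecondBRWP}. With $V(x)=\tfrac{1}{2}x^\top \Sigma^{-1} x$, we have $\nabla V(x) = \Sigma^{-1} x$, so
\begin{equation*}
    X_{k+1} = X_k + \eta\bigl(-M\Sigma^{-1} X_k + \beta^{-1} M \tilde\Sigma_k^{-1} X_k\bigr) = A_k X_k,
\end{equation*}
where $A_k \coloneqq I - \eta M\Sigma^{-1} + \eta\beta^{-1} M\tilde\Sigma_k^{-1}$. This is a deterministic linear transformation of the random vector $X_k$.

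Finally, since $X_k \sim \gN(0,\Sigma_k)$ and affine maps of Gaussians remain Gaussian, $X_{k+1} = A_k X_k \sim \gN(0, A_k \Sigma_k A_k^\top)$, which is precisely the claimed update. The mean remains zero throughout the iteration because (i) $\mu_k=0$ forces $\tilde\mu_k = 0$ through Proposition 5.1, and (ii) $A_k$ is linear rather than affine. There is no real obstacle here; the work has been done in \Cref{prop:PRWPOGaussian}, and the only thing to verify is that plugging a linear score back into the explicit Euler--type update \cref{eq:PrecondBRWP} yields a linear map, which follows directly from the quadratic form of $V$.
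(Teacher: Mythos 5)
Your proof is correct and follows exactly the route indicated in the paper's one-line proof: apply \Cref{prop:PRWPOGaussian} to identify the PRWPO as a zero-mean Gaussian with covariance $\tilde\Sigma_k$, observe that its score is the linear map $x \mapsto -\tilde\Sigma_k^{-1}x$, and substitute into \cref{eq:PrecondBRWP} to obtain a deterministic linear update whose push-forward of $\gN(0,\Sigma_k)$ gives the stated $A_k\Sigma_k A_k^\top$ rule. This is the same argument, simply written out in full.
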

\begin{proof}
    Follows from using \Cref{prop:PRWPOGaussian} in the PBRWP update \cref{eq:PrecondBRWP}.
\end{proof}

Now that we have shown that the PBRWP iteration preserves Gaussianity, we wish to consider the convergence for discrete time-steps. In the case where the initial and target distributions are both Gaussian, we can show that the distributions generated by PBRWP with discrete step-sizes converge to a Gaussian. Moreover, the terminal distributions are independent of step-size, only depending on $T$, the terminal Gaussian and the preconditioner. The following result extends the result of \cite{tan2024noise} from continuous time to discrete time, as well as adding generality by working in the preconditioned setting. We consider the zero-mean initialization case for brevity, as the covariances do not depend on the mean; moreover, it can be shown that the means of the iterates converge linearly to the target mean.
\begin{theorem}\label{thm:discreteTime}
    Suppose $V(x) = x^\top \Sigma^{-1} x/2$, and fix a $\beta >0$. For the target distribution $\pi = \gN(0, \beta^{-1} \Sigma)$ and positive definite preconditioner $M$, let $c, C>0$ be such that $cM \preceq \Sigma \preceq CM$. Let $T \in(0,c)$ so that the inverse PRWPO is well-defined. Then there exists a stationary distribution to the discrete-time PBRWP iterations, and it is Gaussian $\hat\pi = \gN(0, \hat\Sigma)$, satisfying $\wprox_{T,V}^M \hat\pi = \pi$. Moreover, it is unique within distributions with sufficiently decaying Fourier transforms.

    Suppose further that the initial Gaussian distribution $\rho_0 = \gN(0, \Sigma_0)$ has its covariance commuting with $\Sigma$. Let $\rho_k$ be the iterations generated by PBRWP, and let $\tilde\rho_{k} = \wprox_{T,V}^M(\rho_{k})$. Further let $\lambda$ be such that $\sqrt{M}^{-1} \Sigma_k \sqrt{M}^{-1}$ are uniformly bounded below by some $\lambda$ (which holds as $\Sigma_k \rightarrow \Sigma$). Let the step-size $\eta$ be bounded by 
    \begin{equation*}
        \eta\beta^{-1} \le \min((\max_i |\lambda_i(\tilde \Xi_k^{-1} - \tilde\Xi_\infty^{-1})|)^{-1}/2, 3\lambda_{\min}(\tilde\Xi_k)/32)
    \end{equation*}
    Then the exact discrete-time PBRWP method has the following decay in KL divergence:
    \begin{multline}
        \KL(\tilde\rho_{k+1} \| \pi) - \KL(\tilde\rho_{k} \| \pi) \\ \le -\frac{\eta}{2C[\beta+2T(1+TC^{-1})^{-1}(1+Tc^{-1})^2\lambda^{-1}]}\KL(\tilde\rho_{k}\| \pi).
    \end{multline}
\end{theorem}
\begin{proof}[Sketch Proof]
    Note that the covariance commuting condition can be easily satisfied by taking $\Sigma_0$ to be a multiple of the identity. From the condition, there exists a (Gaussian) distribution $\gN(0, \hat\Sigma)$ such that $\wprox_{T,V}^M(\gN(0, \hat\Sigma)) = \gN(0, \beta^{-1} \Sigma)$. Uniqueness comes from the closed-form kernel \cref{eq:kernelMarkovGaussian} and the invertibility of Gaussian convolution on the range of $\wprox^M_{T,V}$. Moreover, the PBRWP iterations generate Gaussian distributions, with closed-form covariance updates. The KL divergence between two Gaussians has a closed form in terms of the means and covariances, from which the decay comes using a Taylor expansion. The full proof is given in \ref{app:discreteTime}.
\end{proof}

\subsection{Additional properties}
We briefly list some properties of the PRWPO and PBRWP, and refer to Appendix \ref{appsec:finpartres} for more detailed statements and proofs.

\begin{enumerate}
    \item \textit{Conditions for PRWPO invertibility.} For a density $\rho$, there exists a unique distribution $\rho_0$ such that $\rho = \wprox_{T,V}^M \rho_0$ if and only if $\rho(x) e^{\beta V(x)/2}$ has Fourier transform decaying at least as fast than $e^{-T\beta^{-1} \xi^\top M \xi}$. In the Gaussian case $V = x^\top \Sigma^{-1} x/2$, this is equivalent to $\Sigma \succeq TM$.
    \item \textit{The regularized Wasserstein proximal is a $\gW_2$ contraction.} For quadratic $V$, let $\zeta \coloneqq(\frac{C^{-1}}{2} + \frac{1}{2 T})^{-1} (\frac{1}{2T})<1$, where $C$ is as in \Cref{thm:discreteTime}. For any $\mu,\nu$ in Wasserstein-2 space,
    \begin{equation}
        \gW_2(\wprox^M_{T, V}\mu, \wprox^M_{T, V}\nu) \le \zeta \gW_2(\mu, \nu).
    \end{equation}
    \item \textit{Mean-variance tradeoff.} For strongly convex $V$ and any $\beta>0$, the particles move together towards the minimizer, at a rate balanced between the inter-particle variance and the distances to the minimizer. In particular, let $\hat x$ be the minimizer of $V$, and let $\hat{\tX} = \begin{bmatrix}
            \hat{x} & ... & \hat{x}
    \end{bmatrix} = \hat{x} \mathbf{1}_N^\top \in \R^{d \times N}$. In particular, suppose $V$ is $\mu$-relatively strongly convex with respect to $M$. For any $\tX \in \R^{d \times N}$, let $\Delta$ be the descent direction \begin{equation*}
        \Delta = -\frac{1}{2} M \nabla V(\tX) + \frac{1}{2T}(\tX - \tX \softmax(W)^\top),
    \end{equation*}
    such that the PBRWP iteration (\ref{eq:tensorForm}) becomes $\tX^{(k+1)} = \tX^{(k)} - \eta \Delta$. The following inequality holds:
        
    \begin{multline*}
         \langle \Delta, M^{-1}(\tX - \hat{\tX}) \rangle_{\text{Frob}} \\\le \|\tX - \hat{\tX}\|_{2,M} \left(-\frac{\mu}{2} \|\tX - \hat{\tX}\|_{2,M} + \frac{(1+\sqrt{N})}{2T} \|\tX - \frac{1}{N}\tX \mathbf{1}_N \mathbf{1}_N^\top\|_{2,M}\right).
    \end{multline*}
    
    \item \textit{Diffusion is bounded based on the number of particles.} For quadratic $V = x^\top \Sigma^{-1} x/2$, suppose $M = \Sigma$ (and $T<1$). Suppose that $\rvx_1$ is an exterior point of the convex hull of $\{\rvx_i\}$, and further that $\delta>0$ is such that for all $j \ne 1$, 
    \begin{equation}
    \rvx_1^\top M^{-1} \rvx_j \le \rvx_1^\top M^{-1} \rvx_1 - \delta \|\rvx_1\|_M.
    \end{equation}
    Then, if $\rvx_1$ has sufficiently large norm, in particular assuming,
    \begin{equation}
        \delta \|\rvx_1\|_M \ge 2\beta^{-1} T\log (\frac{2 (N-1)}{T}),
    \end{equation}
    then for small step-size, the updated point satisfies $\|\rvx_1^{(k+1)}\|_M < \|\rvx_1^{(k)}\|_M$. In other words, the norm of the outermost particle scales at most as $\mathcal{O}(\log N)$.
\end{enumerate}

\section{Experiments}\label{sec:experiments}
In this section, we present various two-dimensional toy examples to demonstrate the effects of adding a preconditioning matrix on the sampling behavior, followed by high-dimensional Bayesian total-variation regularized image deconvolution, and Bayesian neural network training. We compare against the MCMC-based baselines ULA, MALA and the mirror Langevin algorithm (MLA) \cite{hsieh2018mirrored,jiang2021mirror} and the kernel-based method SVGD \cite{liu2016stein} for the low-dimensional examples. For the Bayesian neural network experiment, we compare with various kernel-based methods to be detailed later. Throughout, we fix $\beta = 1$ without loss of generality, see Appendix \ref{appsec:finpartres}.

\subsection{Gaussian low particle regime}
We first consider the evolution of (P)BRWP for the two-dimensional standard Gaussian $\gN(0, I_2)$, with the identity preconditioner $M=I$. In this case, for a fixed $T \in (0,1)$, \Cref{thm:discreteTime} gives that the RWPO (with parameter $T$) of the iterated distributions should approach $\gN(0, I_2)$. Moreover, the RWPO for this quadratic distribution is given by the closed-form Gaussian kernel \cref{eq:kernelMarkovGaussian}. In particular, each point gets mapped into a Gaussian with covariance $(\frac{1}{2} + \frac{1}{2 T})^{-1}I$.

\Cref{fig:2dGaussianLowPartEvo} demonstrates the density evolution of the RWPO of the (P)BRWP iterations, evaluated with 5 particles, regularization $T=0.2$ and step-size $\eta=0.1$. We observe as expected from \Cref{thm:discreteTime} that the RWPO converges to that of the standard Gaussian, even in the finite particle setting. \Cref{fig:2dGaussianParticleAblation} demonstrates the effect of varying the number of particles from 3 to 6, evaluated at convergence at iteration 100 and with the same parameters above. In particular, we observe that the contours of the RWPO are initially triangular, then square and pentagonal, gradually becoming smoother and better approximating the standard Gaussian. We observe as well in the 6-particle case that while the contour is pentagonal, the density is more rounded at the origin, indicating that there is one particle there surrounded by 5 other particles.

\begin{figure}
    \centering
    \setlength{\tabcolsep}{1pt}
    \renewcommand{\arraystretch}{0}
    \noindent\makebox[\textwidth]{
    \begin{tblr}{
        colspec={ccccc},
        }
     Iterations& 1 & 10 & 20 & 100 \\
    & \adjincludegraphics[height=2.2cm,trim={2.8cm 2.1cm 4cm 2.5cm},clip,valign=c]{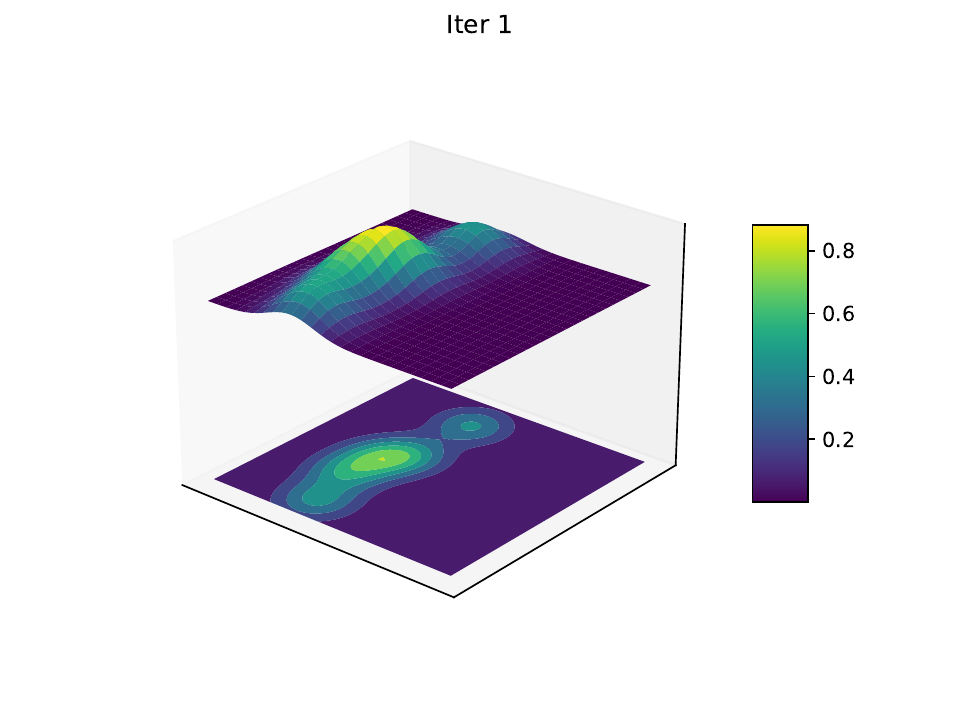}& \adjincludegraphics[height=2.2cm,trim={2.8cm 2.1cm 4cm 2.5cm},clip,valign=c]{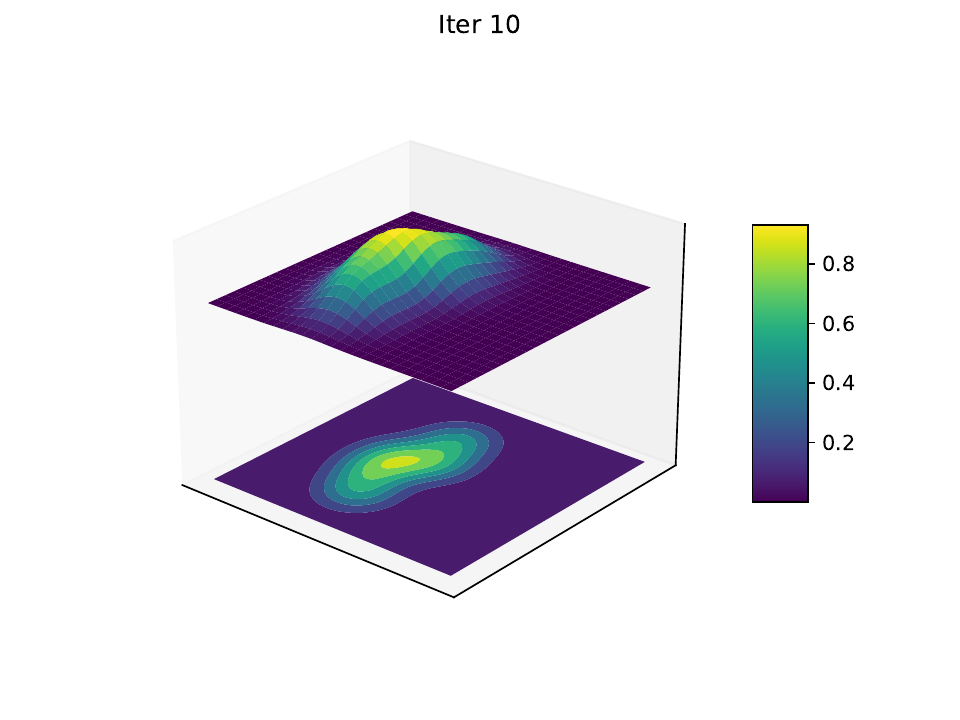}& \adjincludegraphics[height=2.2cm,trim={2.8cm 2.1cm 4cm 2.5cm},clip,valign=c]{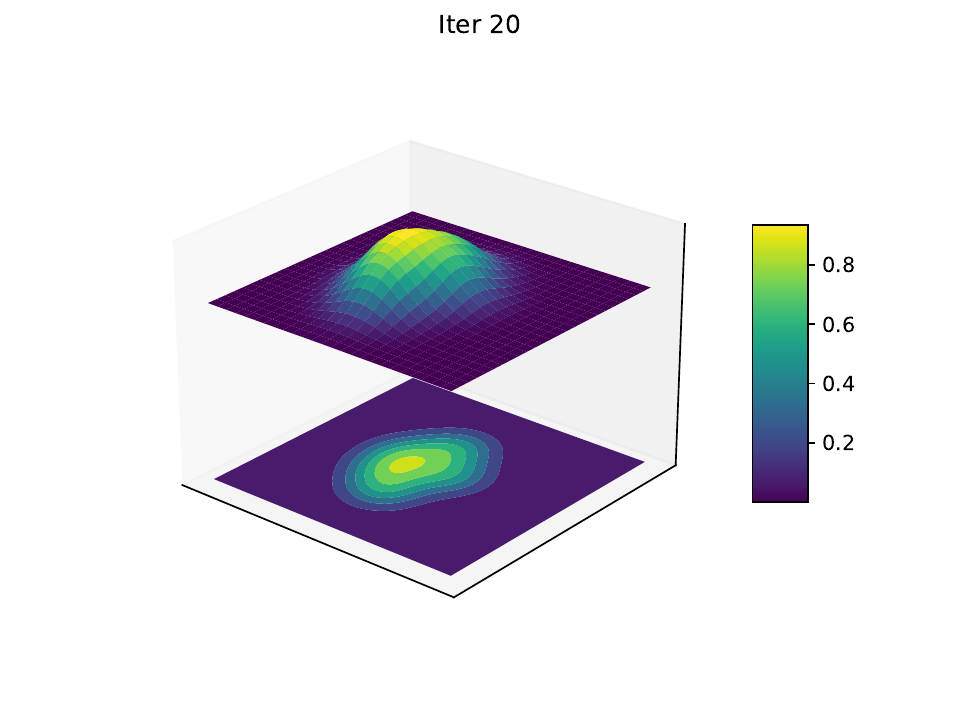}& \adjincludegraphics[height=2.2cm,trim={2.8cm 2.1cm 4cm 2.5cm},clip,valign=c]{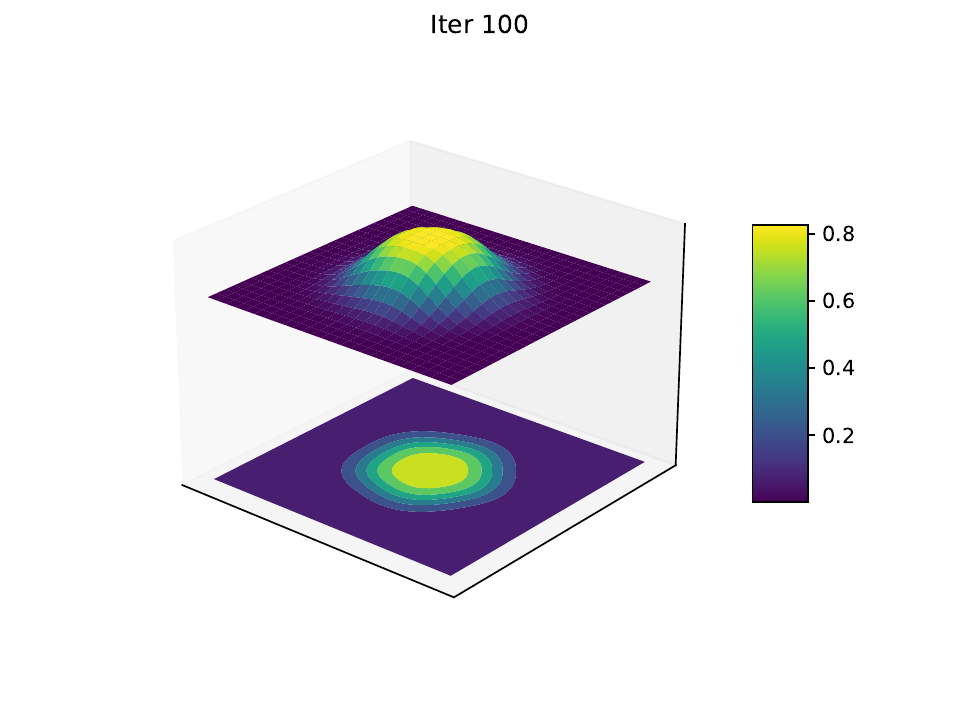}
    \end{tblr}}
    \caption{Evolution of the preconditioned regularized Wasserstein proximal $\wprox^I_{T=0.2,I}$ for the 2-dimensional standard Gaussian, done with 5 particles and step-size of $\eta=0.1$. The bandwidth is automatically determined by the regularization. As suggested by theory, the PRWPO of the particles approaches the standard Gaussian.}
    \label{fig:2dGaussianLowPartEvo}
\end{figure}
\begin{figure}
    \centering
    \setlength{\tabcolsep}{1pt}
    \renewcommand{\arraystretch}{0}
    \noindent\makebox[\textwidth]{
    \begin{tblr}{
        colspec={ccccc},
        }
     Particles& 3 & 4 & 5 & 6 \\
    & \adjincludegraphics[height=2.2cm,trim={2.8cm 2.1cm 4cm 2.5cm},clip,valign=c]{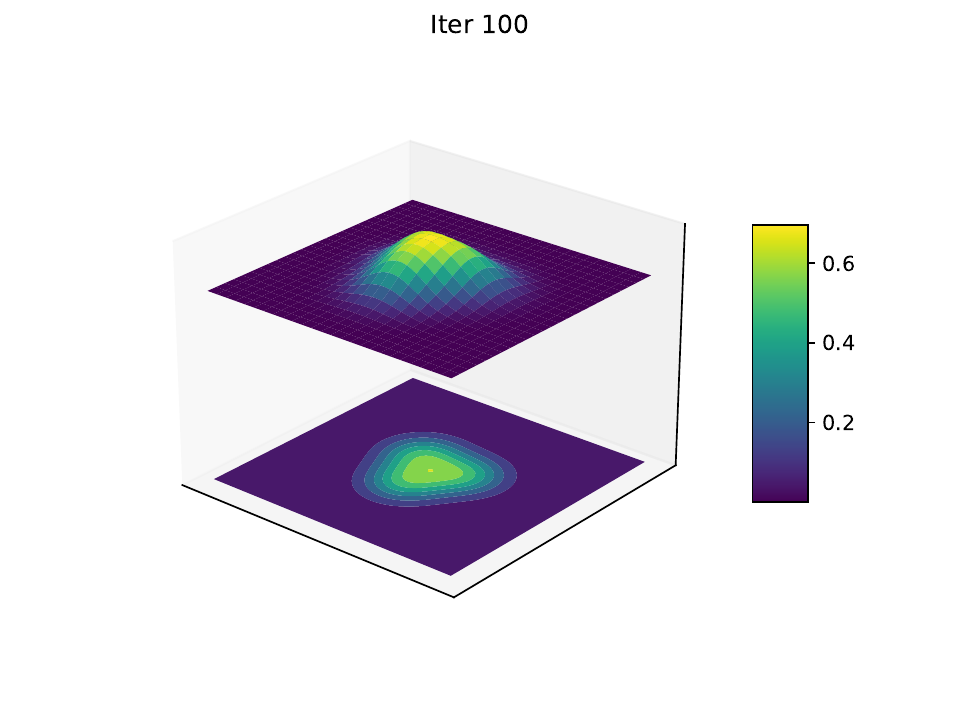}& \adjincludegraphics[height=2.2cm,trim={2.8cm 2.1cm 4cm 2.5cm},clip,valign=c]{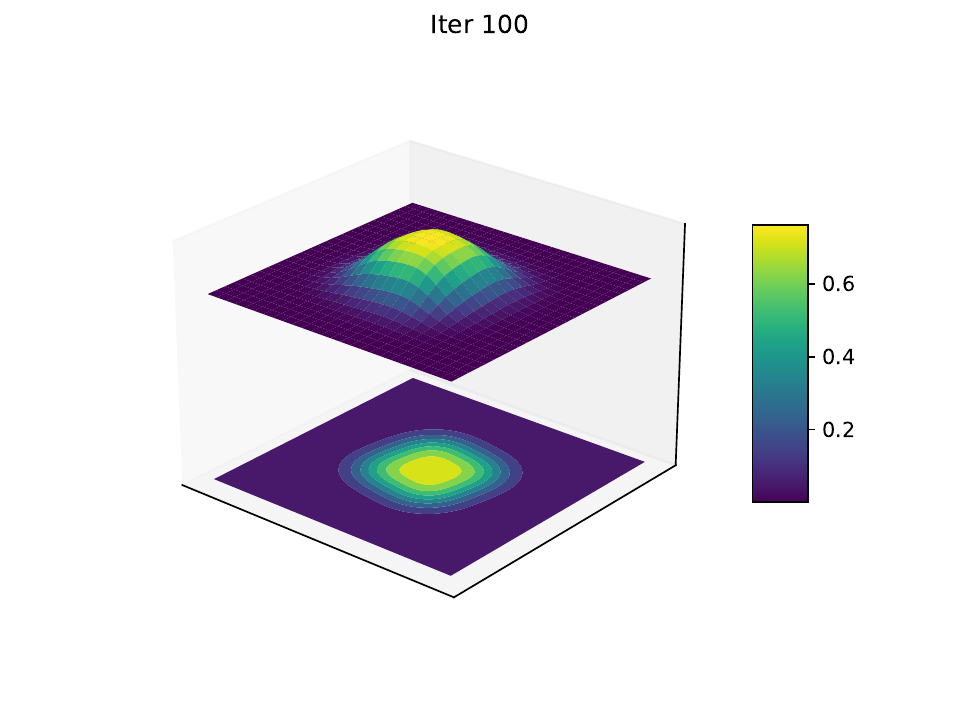}& \adjincludegraphics[height=2.2cm,trim={2.8cm 2.1cm 4cm 2.5cm},clip,valign=c]{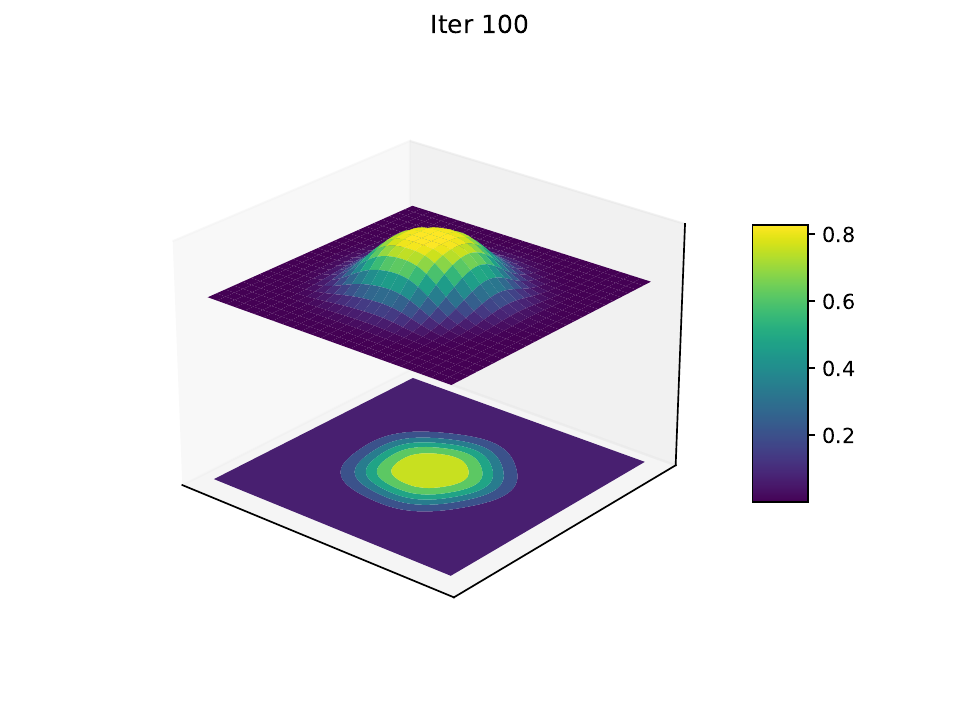}& \adjincludegraphics[height=2.2cm,trim={2.8cm 2.1cm 4cm 2.5cm},clip,valign=c]{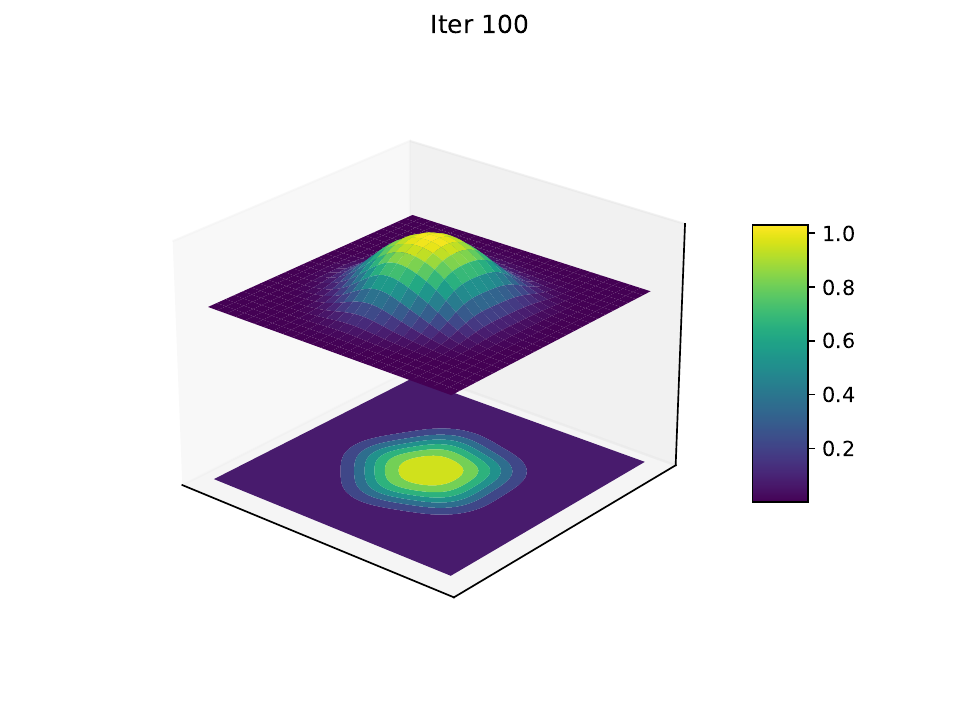}
    \end{tblr}}
    \caption{Densities of the PRWPO $\wprox^I_{0.2I}$ for the 2-dimensional standard Gaussian at iteration 100, done with $n\in \{3,4,5,6\}$ particles and a step-size of $\eta=0.1$. We observe that the density of the Wasserstein proximal gradually becomes more spherical and Gaussian-like.}
    \label{fig:2dGaussianParticleAblation}
\end{figure}

\subsection{Bimodal distribution}
As a non-convex potential, we consider the two-moons bimodal distribution as given in \cite{wang2022accelerated,tan2024noise},
\begin{equation*}
    p(x) \propto \exp(-2(\|x\|-3)^2) \left[\exp(-2(x_1-3)^2) + \exp(-2(x_1+3)^2)\right].
\end{equation*}
This is generated by the potential $V$ with corresponding gradient $\nabla V$ as follows:
\begin{subequations}
\begin{align}
        V(x) &= 2 (\|x\|-3)^2 - 2\log \left[\exp(-2(x_1-3)^2) + \exp(-2(x_1+3)^2)\right],\\
        \nabla V(x) &= 4  \frac{(\|x\|-3)x}{\|x\|} + \frac{4(x_1-3) \exp(-2(x_1-3)^2) +4(x_1+3) \exp(-2(x_1+3)^2)}{\exp(-2(x_1-3)^2) + \exp(-2(x_1+3)^2)} \mathrm{e}_1,
\end{align}
\end{subequations}
where $\mathrm{e}_1$ is the first standard basis vector. We consider a low-variance initialization and a high-variance initialization. For the low-variance initialization, we consider the same setup as \cite{tan2024noise}, where the particles are initially distributed as $\gN(0, I)$; for the high-variance setup, the initial distribution is $\gN(0, 6I)$. The preconditioning matrix is taken to be $M=\diag([1,4])$.

We consider the KL divergence as approximated using a Gaussian KDE, with a bandwidth of $0.1$, numerically integrated over $[-5,5]^2$ with a grid size of $0.01$. \Cref{fig:KL_banana} demonstrates that in this case, adding preconditioning has minimal effect on the convergence rate of BRWP and PBRWP, and indeed leads to a slightly higher bias. The bias can be seen qualitatively in \Cref{fig:banana_particles}, which demonstrates that the particles retain the same structured behavior at convergence, but takes a different path at small iterations. In this case, SVGD converges to a slightly better particle collection at a slower per-iteration rate. \footnote{While \cite{liu2016stein} propose an adaptive bandwidth choice based on the median of pairwise distances, this causes instability in our non-log-concave experiments. We therefore only consider a fixed bandwidth, with parameters chosen via grid search.}
\begin{figure}
    \centering
    \includegraphics[width=0.9\linewidth]{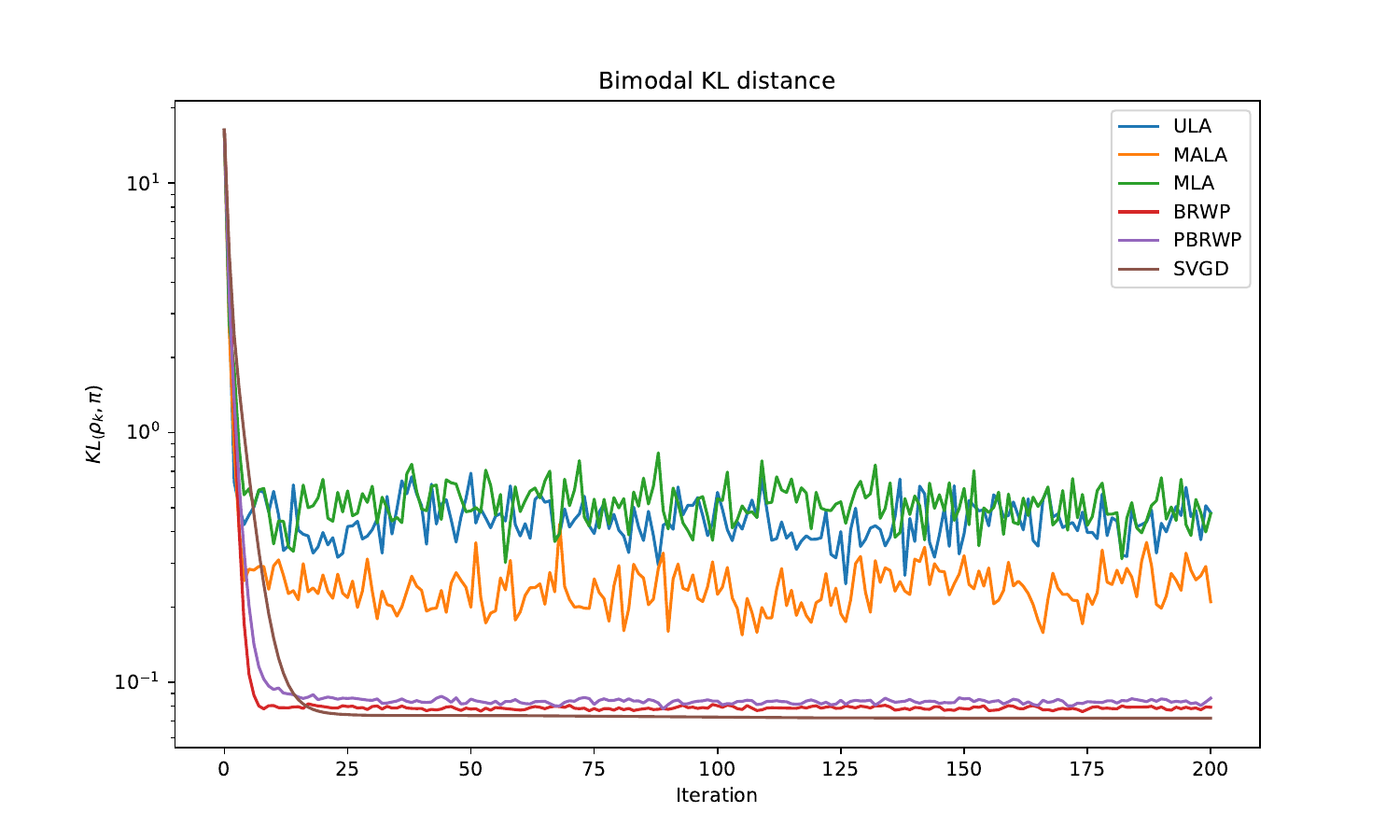}
    \caption{Evolution of the KL divergence between baselines and BRWP-based methods for the bimodal distribution. Applied with 100 particles with initial distribution $\gN(0, I)$, fixed step-size $\eta =0.1$ and regularization parameter $T=0.05$. We observe that the BRWP-based methods converge more smoothly even in later iterations, and the particles better approximate the target distribution.}
    \label{fig:KL_banana}
\end{figure}
\begin{figure}
    \centering
    \renewcommand{\arraystretch}{0}
    \noindent\makebox[\textwidth]{
    \begin{tblr}{
        colspec={cccccc},
        }
    ULA & MALA & MLA & SVGD & BRWP & PBRWP\\
     \adjincludegraphics[width=0.17\textwidth,trim={3.2cm 2.6cm 3.2cm 2.6cm},clip,valign=c]{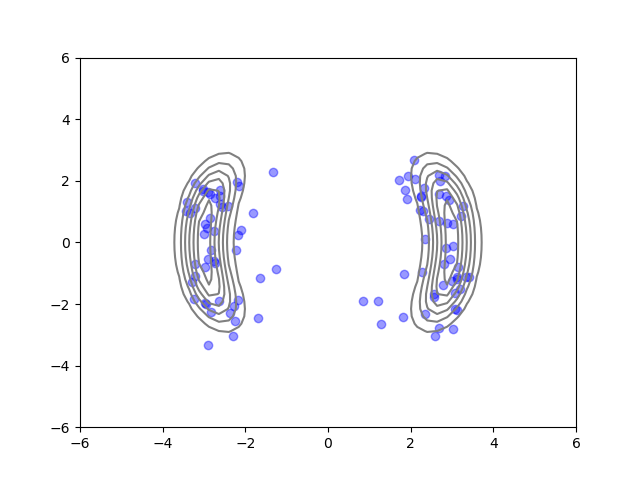}& \adjincludegraphics[width=0.17\textwidth,trim={3.2cm 2.6cm 3.2cm 2.6cm},clip,valign=c]{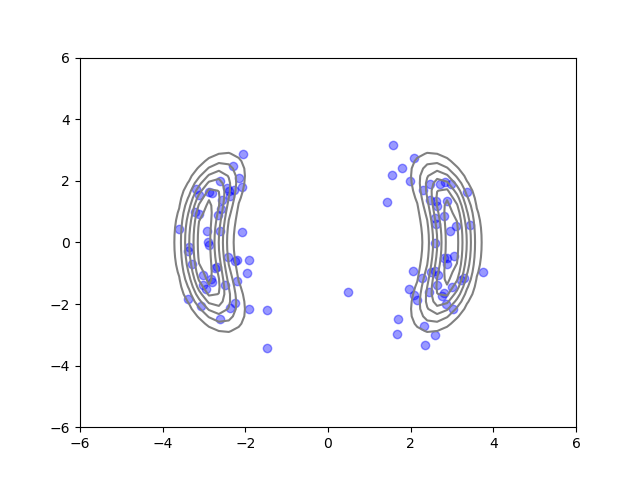}& \adjincludegraphics[width=0.17\textwidth,trim={3.2cm 2.6cm 3.2cm 2.6cm},clip,valign=c]{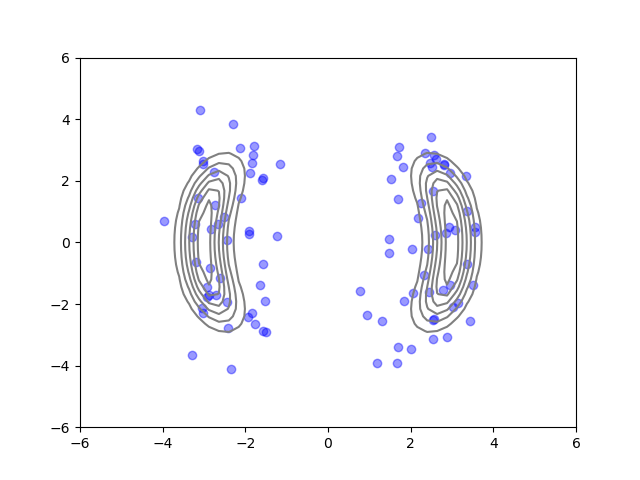}& 
     \adjincludegraphics[width=0.17\textwidth,trim={3.2cm 2.6cm 3.2cm 2.6cm},clip,valign=c]{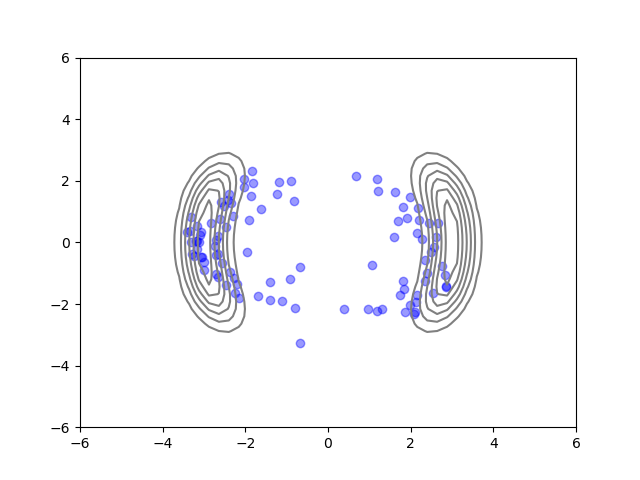}&
     \adjincludegraphics[width=0.17\textwidth,trim={3.2cm 2.6cm 3.2cm 2.6cm},clip,valign=c]{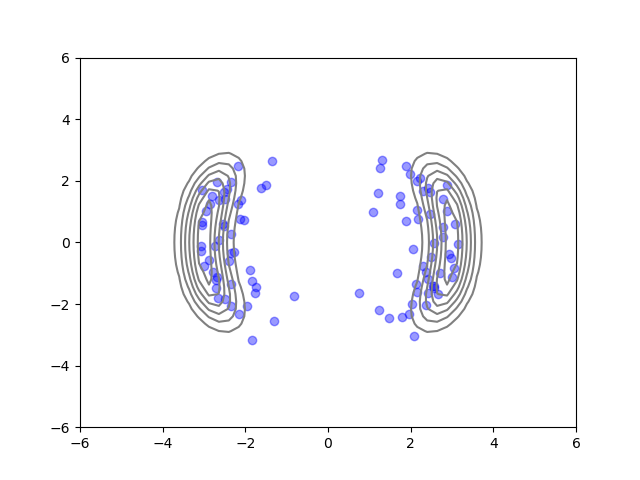}& \adjincludegraphics[width=0.17\textwidth,trim={3.2cm 2.6cm 3.2cm 2.6cm},clip,valign=c]{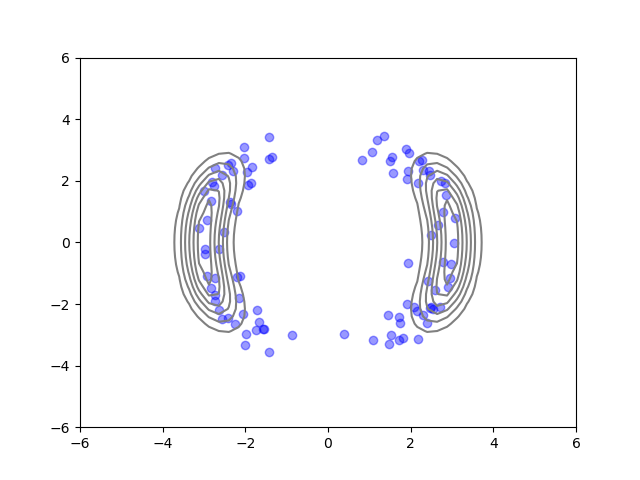}\\
     \adjincludegraphics[width=0.17\textwidth,trim={3.2cm 2.6cm 3.2cm 2.6cm},clip,valign=c]{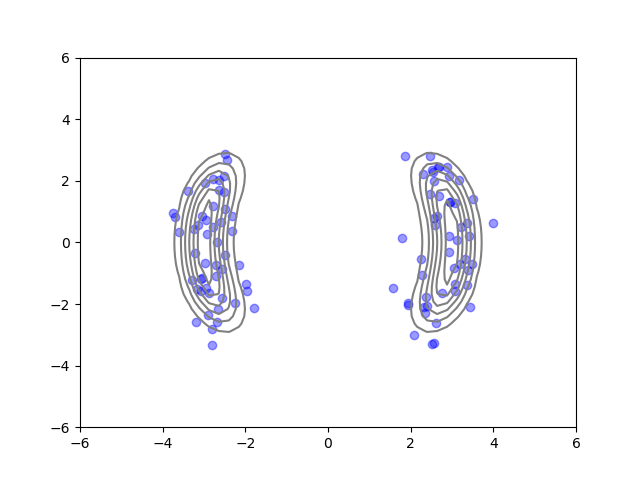}& \adjincludegraphics[width=0.17\textwidth,trim={3.2cm 2.6cm 3.2cm 2.6cm},clip,valign=c]{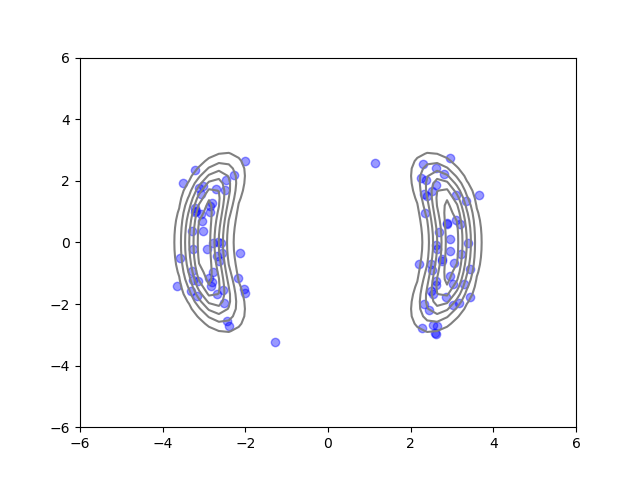}& \adjincludegraphics[width=0.17\textwidth,trim={3.2cm 2.6cm 3.2cm 2.6cm},clip,valign=c]{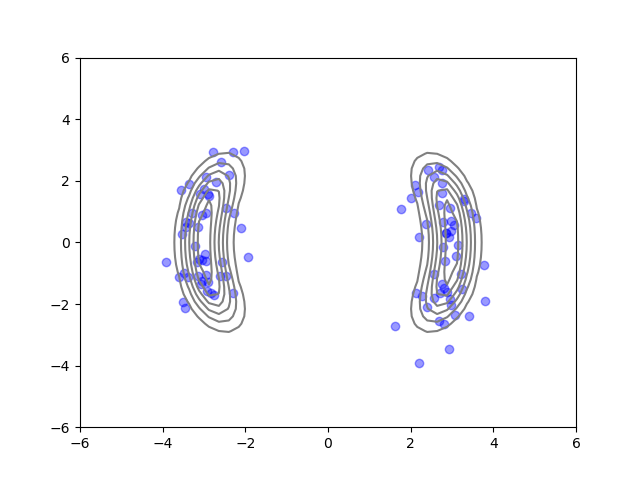}&
     \adjincludegraphics[width=0.17\textwidth,trim={3.2cm 2.6cm 3.2cm 2.6cm},clip,valign=c]{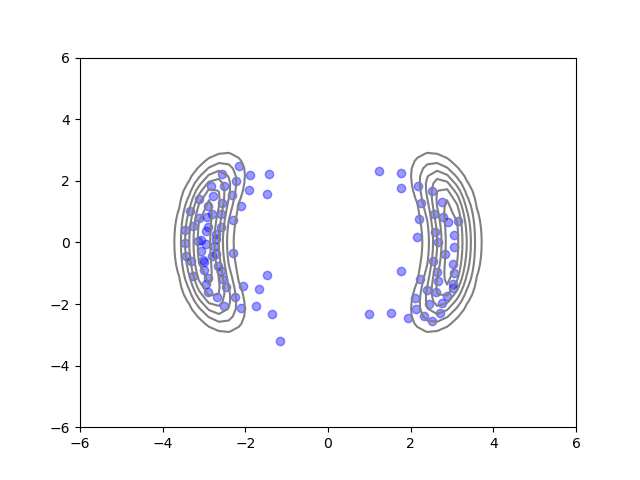}&
     \adjincludegraphics[width=0.17\textwidth,trim={3.2cm 2.6cm 3.2cm 2.6cm},clip,valign=c]{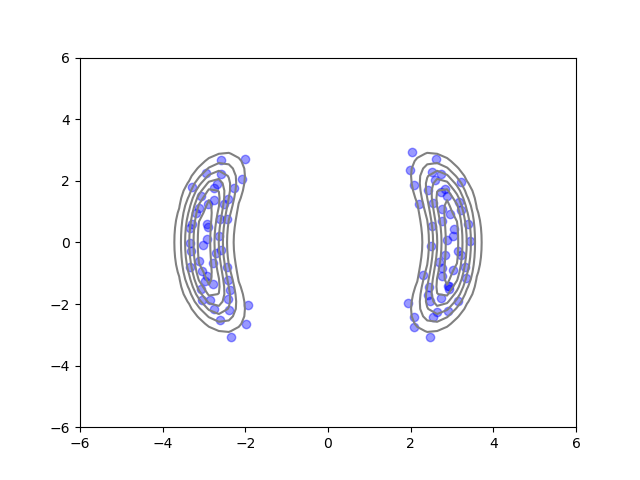}& \adjincludegraphics[width=0.17\textwidth,trim={3.2cm 2.6cm 3.2cm 2.6cm},clip,valign=c]{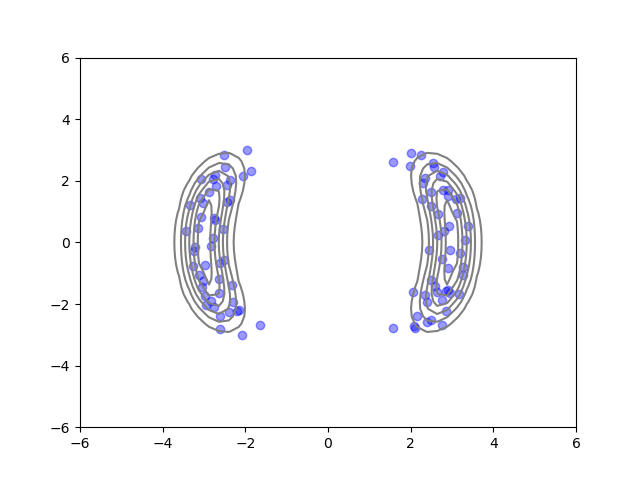}\\
     \adjincludegraphics[width=0.17\textwidth,trim={3.2cm 2.6cm 3.2cm 2.6cm},clip,valign=c]{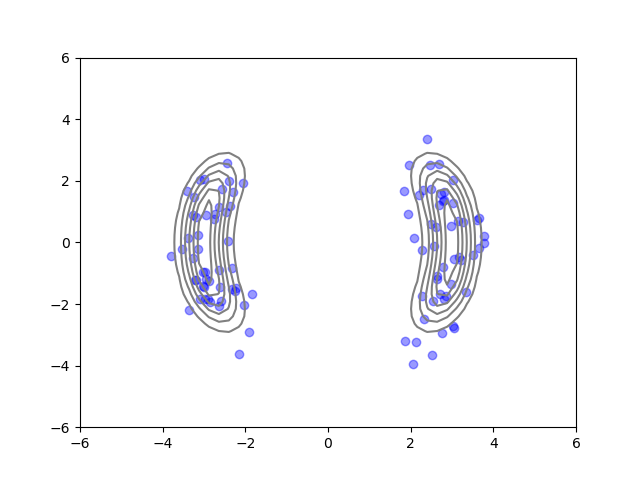}& \adjincludegraphics[width=0.17\textwidth,trim={3.2cm 2.6cm 3.2cm 2.6cm},clip,valign=c]{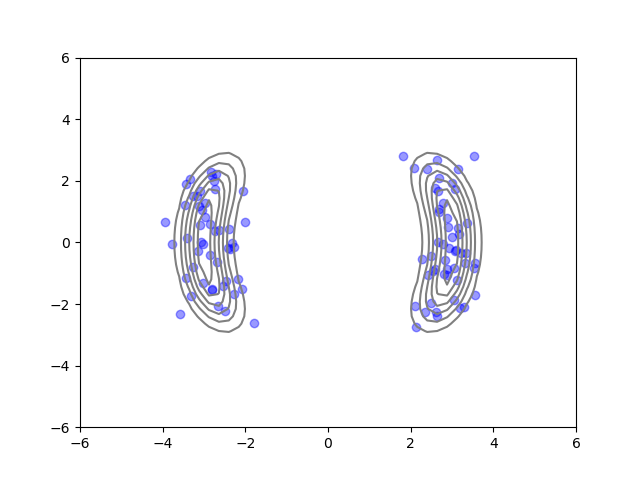}& \adjincludegraphics[width=0.17\textwidth,trim={3.2cm 2.6cm 3.2cm 2.6cm},clip,valign=c]{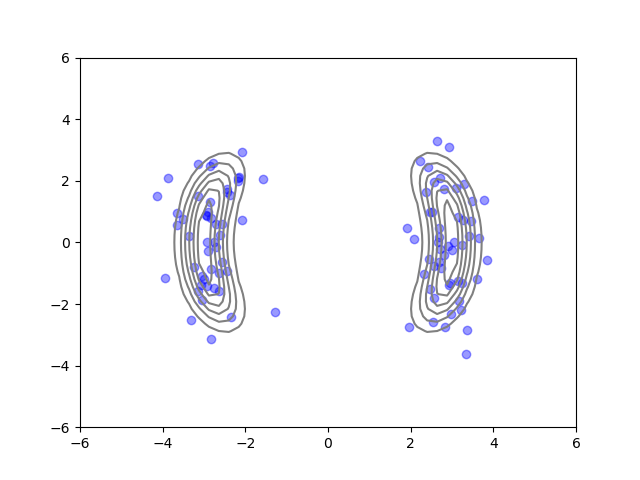}&
     \adjincludegraphics[width=0.17\textwidth,trim={3.2cm 2.6cm 3.2cm 2.6cm},clip,valign=c]{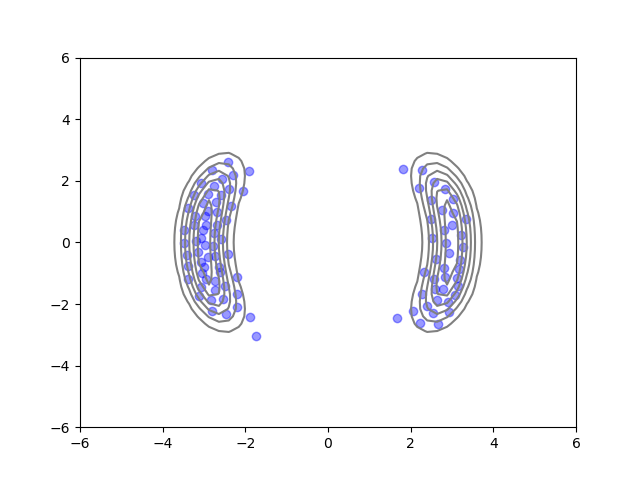}&
     \adjincludegraphics[width=0.17\textwidth,trim={3.2cm 2.6cm 3.2cm 2.6cm},clip,valign=c]{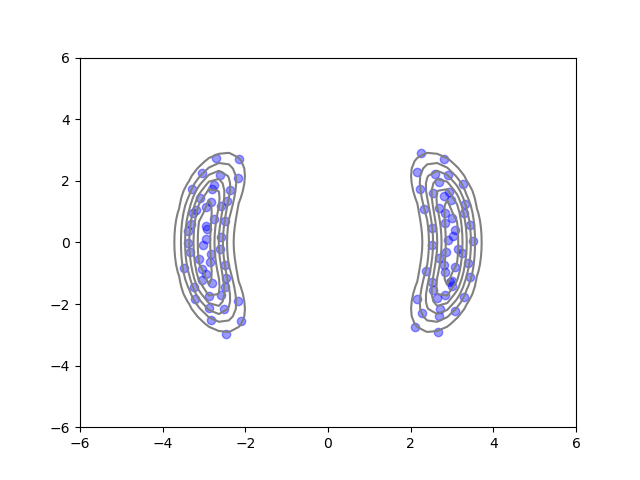}& \adjincludegraphics[width=0.17\textwidth,trim={3.2cm 2.6cm 3.2cm 2.6cm},clip,valign=c]{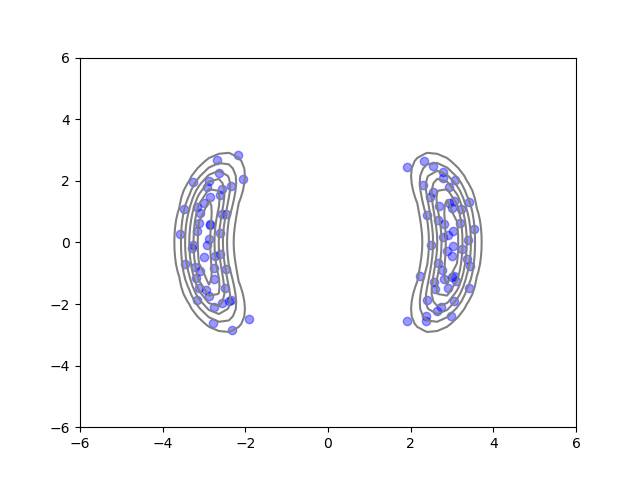}\\
    \end{tblr}}
    \caption{Evolution of the various methods for the bimodal distribution at different iterations, with initial distribution $\gN(0, I)$, which is contained between the moons. Evaluated with 100 particles, at iterations 2, 5 and 10 in the top, middle, and bottom rows respectively. Observe the stable behavior of SVGD, BRWP and PBRWP, as opposed to the randomness of the Langevin methods.}
    \label{fig:banana_particles}
\end{figure}

We observe a different compression phenomenon with the high-variance initialization, when the samples are initially distributed around the modes instead of between the modes. \Cref{fig:KL_banana_bigvar} demonstrates that in this case, the preconditioning has a significant acceleration effect, where the KL divergence decreases faster than BRWP. Moreover, \Cref{fig:banana_bigvar} demonstrates that PBRWP particles are less noisy than BRWP particles at low iterations. 

\begin{figure}
    \centering
    \includegraphics[width=0.9\linewidth]{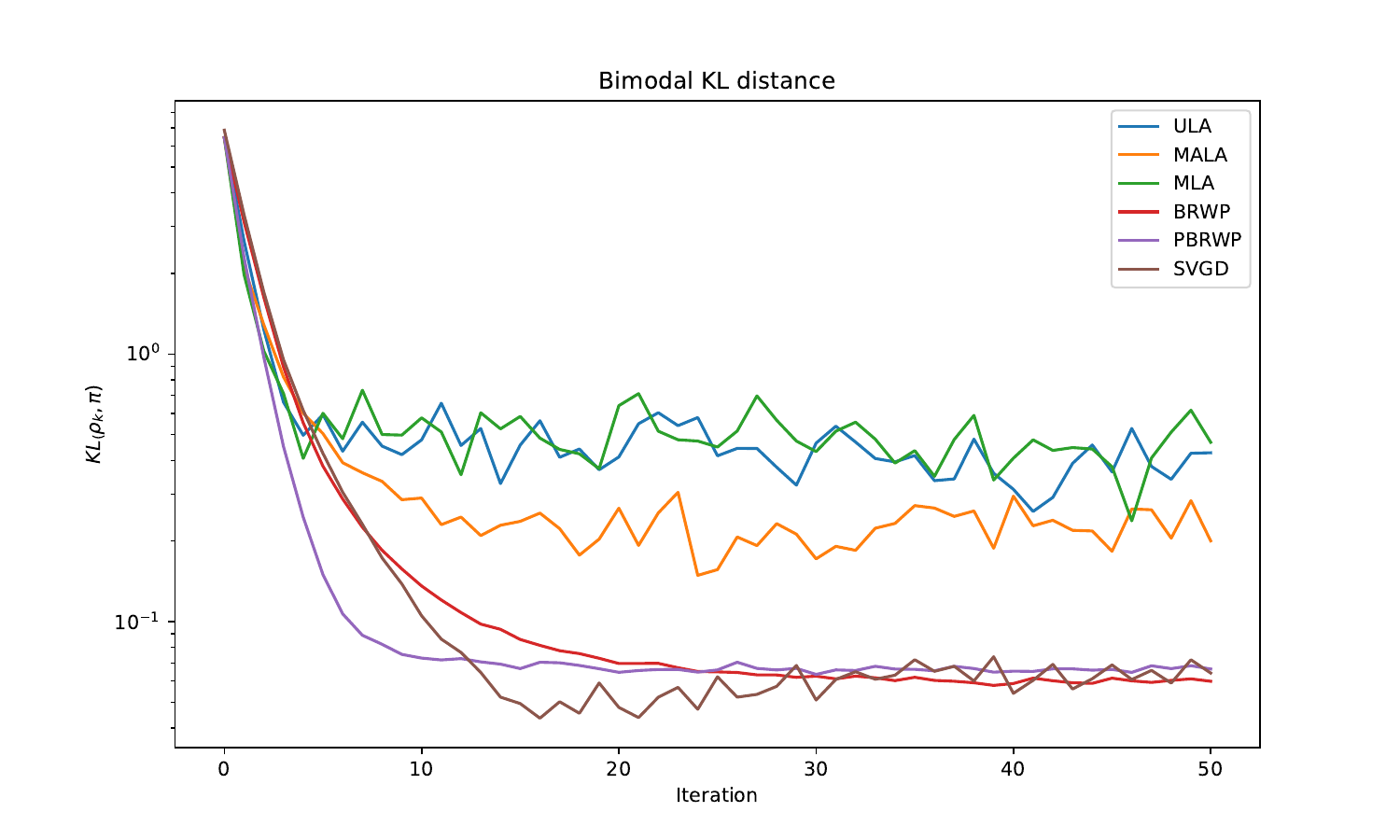}
    \caption{Evolution of the KL divergence between baselines and BRWP-based methods for the bimodal distribution, initialized with large variance $\gN(0, 6I)$. Applied with 100 particles, fixed step-size $\eta =0.1$ and regularization parameter $T=0.05$. In this case, the preconditioning accelerates the convergence of BRWP. SVGD is able to get to a lower divergence around iteration 20, but the divergence increases again at later iterations.}
    \label{fig:KL_banana_bigvar}
\end{figure}
\begin{figure}
    \centering
    \setlength{\tabcolsep}{1pt}
    \renewcommand{\arraystretch}{0}
    \noindent\makebox[\textwidth]{
    \begin{tblr}{
        colspec={cccccc},
        }
     ULA & MALA & MLA & SVGD & BRWP & PBRWP\\
     \adjincludegraphics[width=0.17\textwidth,trim={3.2cm 2.6cm 3.2cm 2.6cm},clip,valign=c]{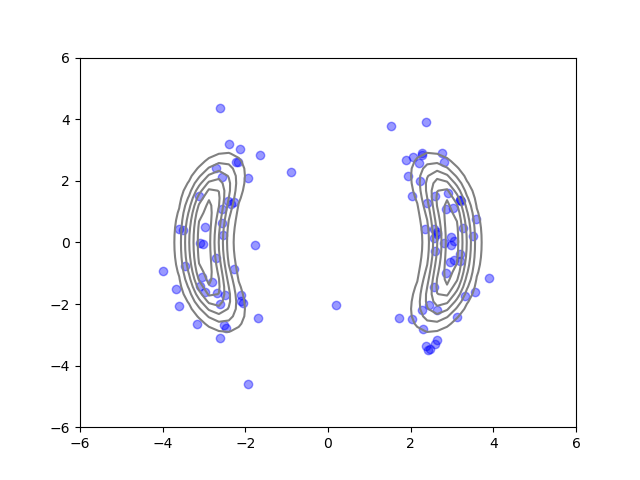}& \adjincludegraphics[width=0.17\textwidth,trim={3.2cm 2.6cm 3.2cm 2.6cm},clip,valign=c]{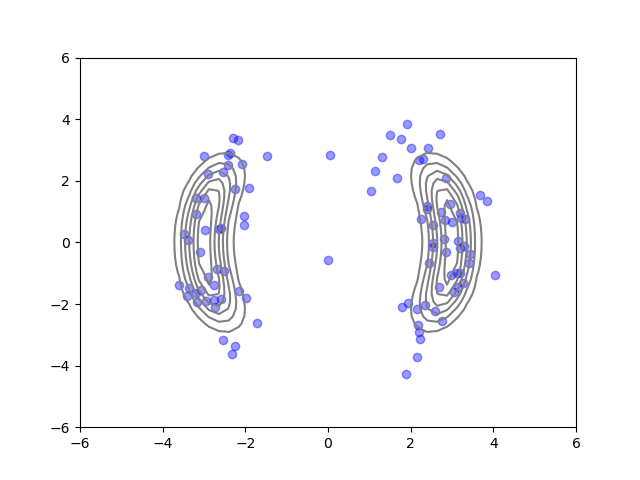}& \adjincludegraphics[width=0.17\textwidth,trim={3.2cm 2.6cm 3.2cm 2.6cm},clip,valign=c]{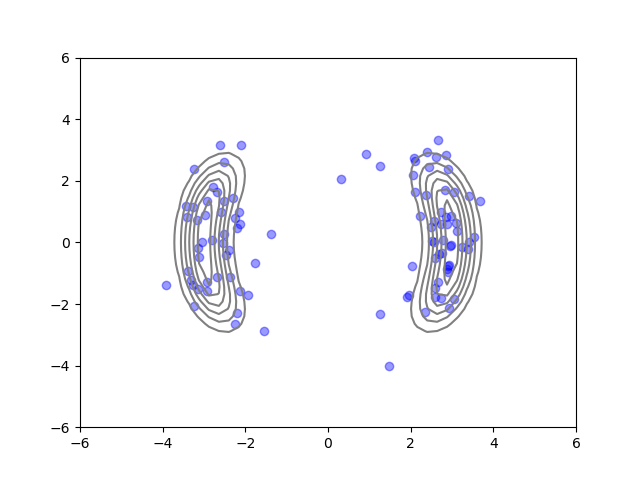}& 
     \adjincludegraphics[width=0.17\textwidth,trim={3.2cm 2.6cm 3.2cm 2.6cm},clip,valign=c]{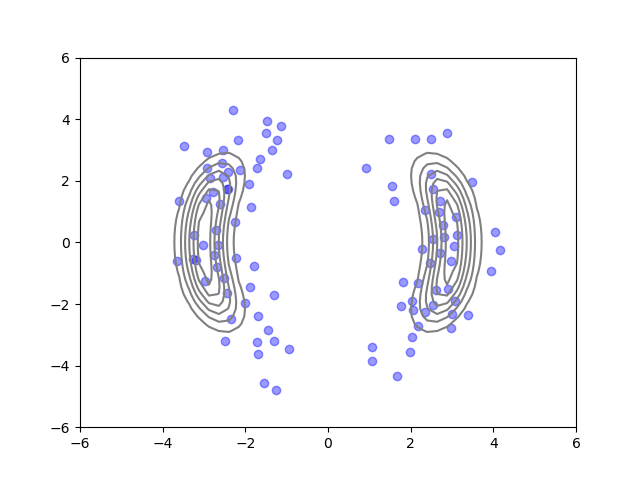}& \adjincludegraphics[width=0.17\textwidth,trim={3.2cm 2.6cm 3.2cm 2.6cm},clip,valign=c]{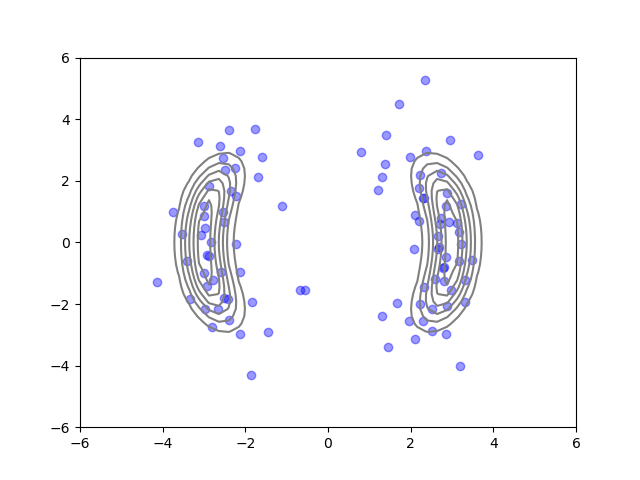}& \adjincludegraphics[width=0.17\textwidth,trim={3.2cm 2.6cm 3.2cm 2.6cm},clip,valign=c]{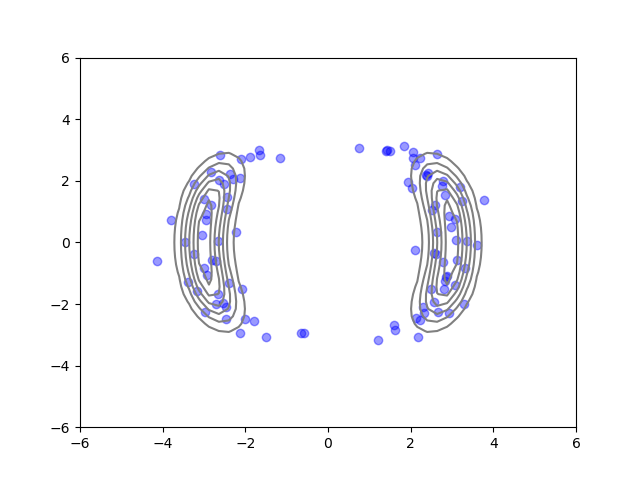}\\
     \adjincludegraphics[width=0.17\textwidth,trim={3.2cm 2.6cm 3.2cm 2.6cm},clip,valign=c]{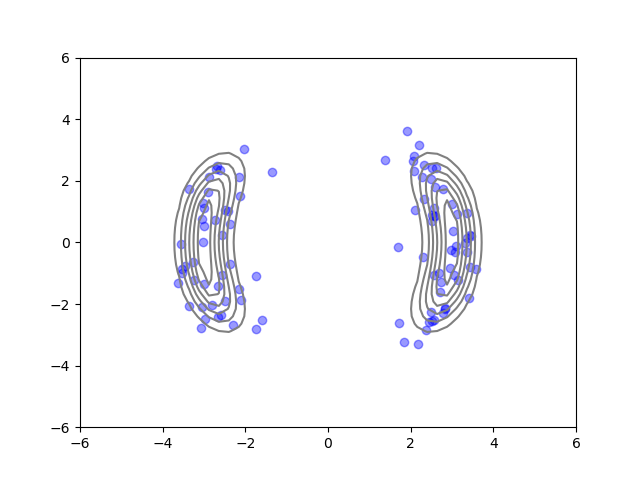}& \adjincludegraphics[width=0.17\textwidth,trim={3.2cm 2.6cm 3.2cm 2.6cm},clip,valign=c]{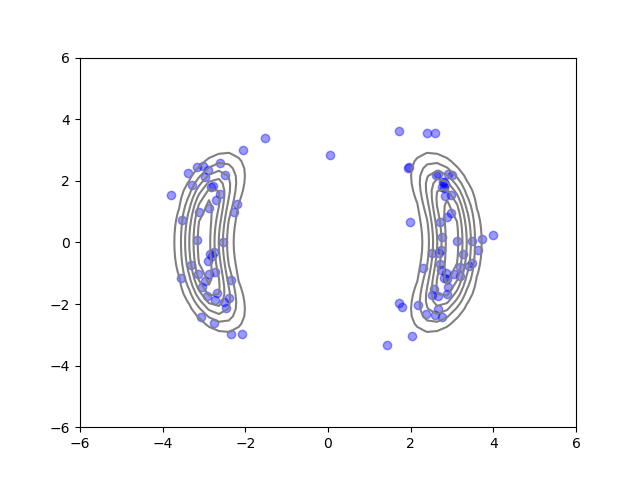}& \adjincludegraphics[width=0.17\textwidth,trim={3.2cm 2.6cm 3.2cm 2.6cm},clip,valign=c]{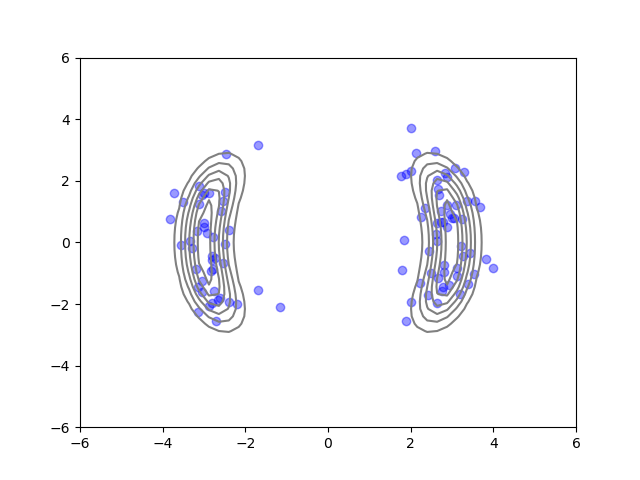}& 
     \adjincludegraphics[width=0.17\textwidth,trim={3.2cm 2.6cm 3.2cm 2.6cm},clip,valign=c]{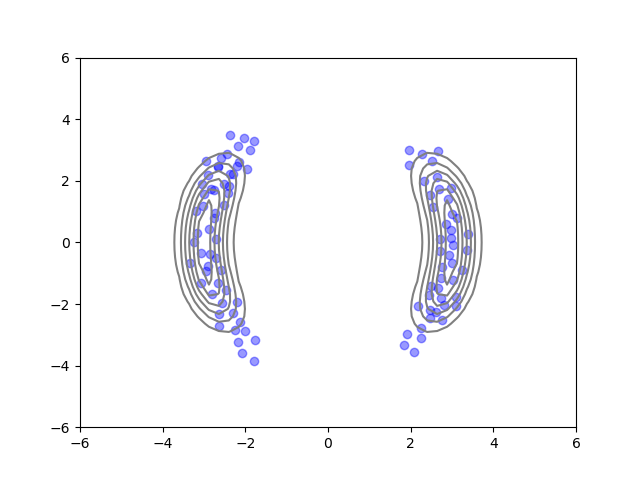}&
     \adjincludegraphics[width=0.17\textwidth,trim={3.2cm 2.6cm 3.2cm 2.6cm},clip,valign=c]{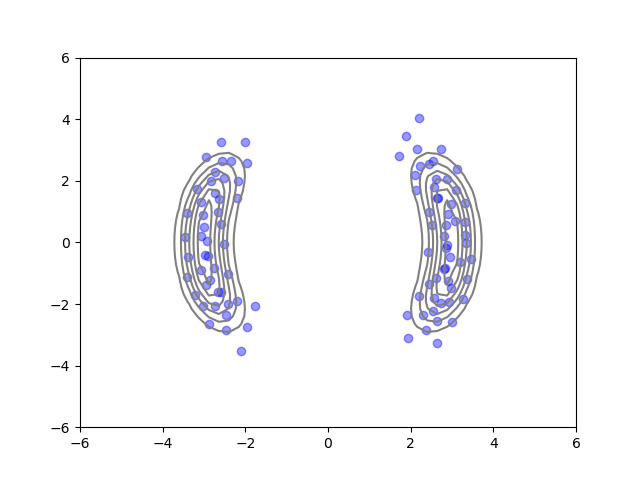}& \adjincludegraphics[width=0.17\textwidth,trim={3.2cm 2.6cm 3.2cm 2.6cm},clip,valign=c]{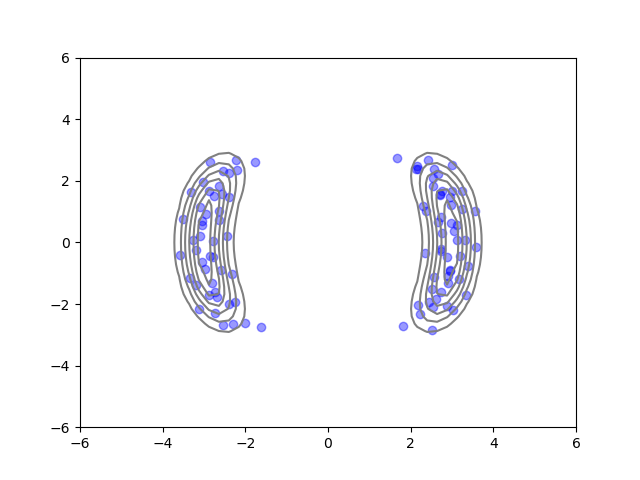}\\
     \adjincludegraphics[width=0.17\textwidth,trim={3.2cm 2.6cm 3.2cm 2.6cm},clip,valign=c]{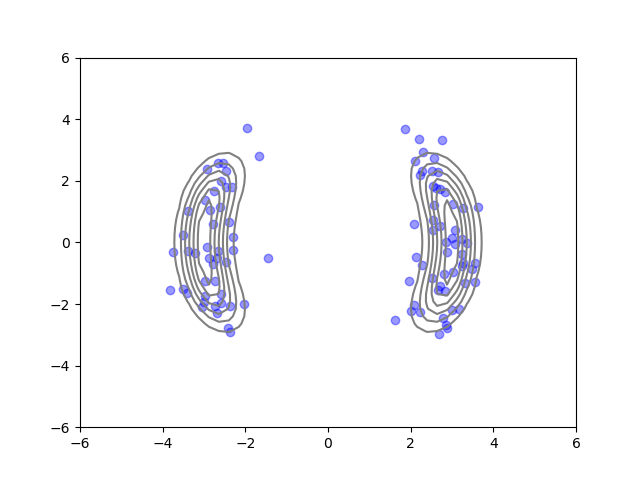}& \adjincludegraphics[width=0.17\textwidth,trim={3.2cm 2.6cm 3.2cm 2.6cm},clip,valign=c]{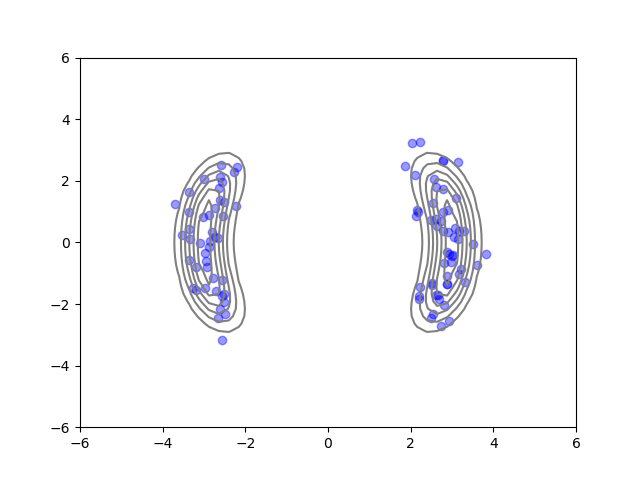}& \adjincludegraphics[width=0.17\textwidth,trim={3.2cm 2.6cm 3.2cm 2.6cm},clip,valign=c]{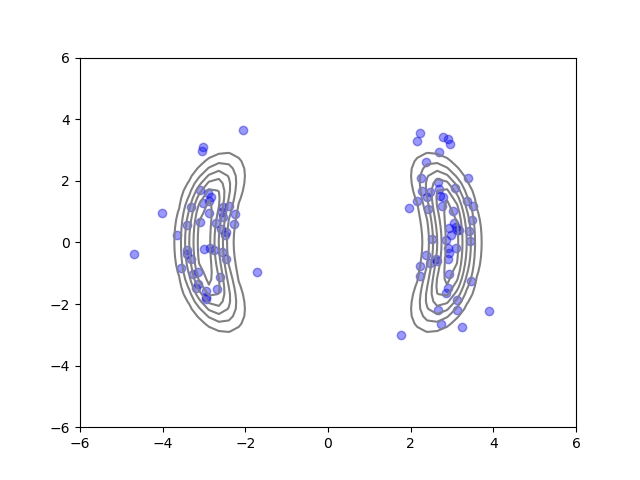}&
     \adjincludegraphics[width=0.17\textwidth,trim={3.2cm 2.6cm 3.2cm 2.6cm},clip,valign=c]{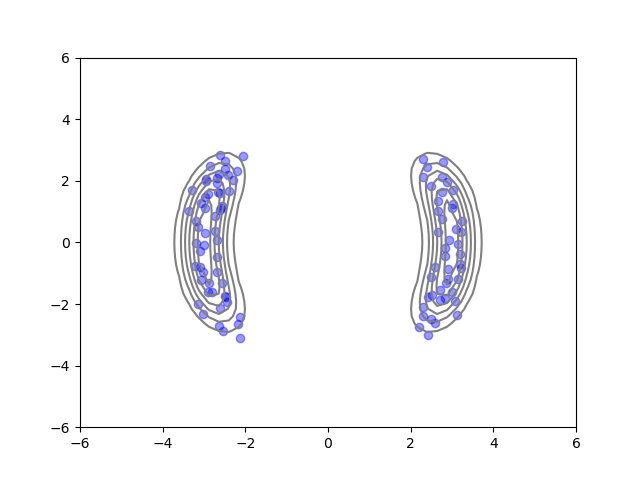}&
     \adjincludegraphics[width=0.17\textwidth,trim={3.2cm 2.6cm 3.2cm 2.6cm},clip,valign=c]{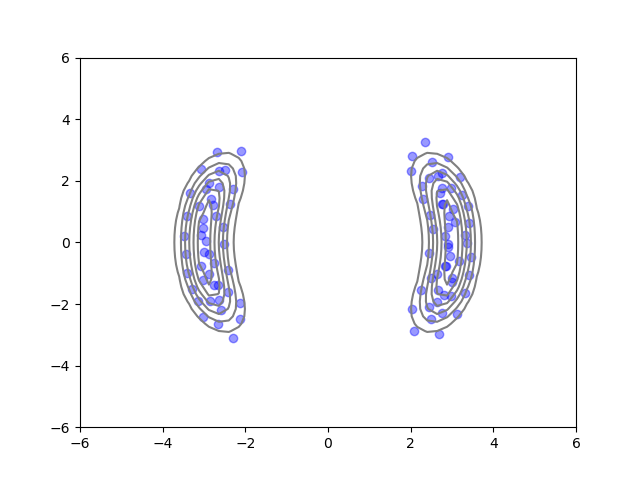}& \adjincludegraphics[width=0.17\textwidth,trim={3.2cm 2.6cm 3.2cm 2.6cm},clip,valign=c]{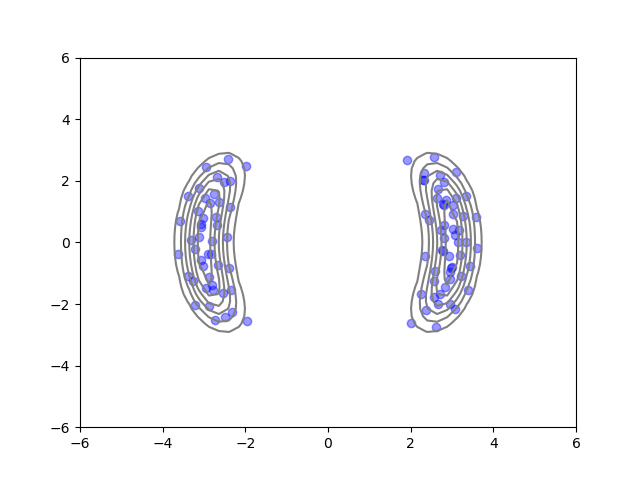}\\
    \end{tblr}}
    \caption{Evolution of the various methods for the bimodal distribution, with large initial variance $\gN(0, 6I)$, which surrounds the moons. Evaluated with 100 particles, at iterations 2, 5 and 10 in the top to bottom rows respectively. We observe again the structure behavior at convergence of the noise-free BRWP and PBRWP methods. Moreover, the preconditioning affects the empirical covariance of the particles in different directions, which can be seen in the semicircular artifacts between the modes at low iterations.}
    \label{fig:banana_bigvar}
\end{figure}

\subsection{Annulus}
Consider the two-dimensional annulus defined with the potential
\begin{align}
        V(x) = \left(\left\|\begin{pmatrix}
            1 & 0\\ 0 & 2
        \end{pmatrix}x\right\|-3\right)^2,\quad 
        \nabla V(x) = 2 \frac{\left\|\begin{pmatrix}
            1 & 0\\ 0 & 2
        \end{pmatrix} x\right\|-3}{\left\|\begin{pmatrix}
           1 & 0\\ 0 & 2
        \end{pmatrix} x\right\|}\begin{pmatrix}
            1 & 0\\ 0 & 2
        \end{pmatrix}^2 x.
\end{align}

We consider $M = \diag([4,1])$, which takes into account the scaling of the annulus in both directions. The initialization is taken to be the off-center Gaussian $\gN((2,2), I)$, and the particles should diffuse along the elliptical potential well. \Cref{fig:annulus_particles} shows the evolution of various methods from this initialization, evaluated with 100 particles. All the methods are able to find the annulus quickly, with the main differences being how fast they can cover the whole annulus. We observe that the preconditioned methods MLA and PBRWP are able to diffuse faster than their non-preconditioned counterparts, namely covering the annulus at iteration 50. To verify this numerically, we approximate the KL divergence using the same numerical integration method as in the last section. \Cref{fig:KL_annulus} shows that the PBRWP particles converge significantly faster than BRWP, and to a lower minimum than the Langevin methods. SVGD converges to a similar structured particle ensemble with low KL divergence, albeit taking slightly more iterations.

\begin{figure}
    \centering
    \setlength{\tabcolsep}{1pt}
    \renewcommand{\arraystretch}{0}
    \noindent\makebox[\textwidth]{
    \begin{tblr}{
        colspec={cccccc},
        }
     ULA & MALA & MLA & SVGD & BRWP & PBRWP\\
     \adjincludegraphics[width=0.17\textwidth,trim={3.2cm 3.1cm 3.2cm 3.1cm},clip,valign=c]{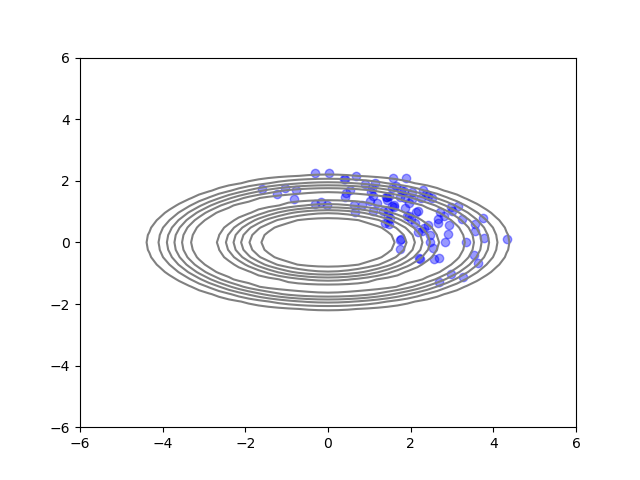}& \adjincludegraphics[width=0.17\textwidth,trim={3.2cm 3.1cm 3.2cm 3.1cm},clip,valign=c]{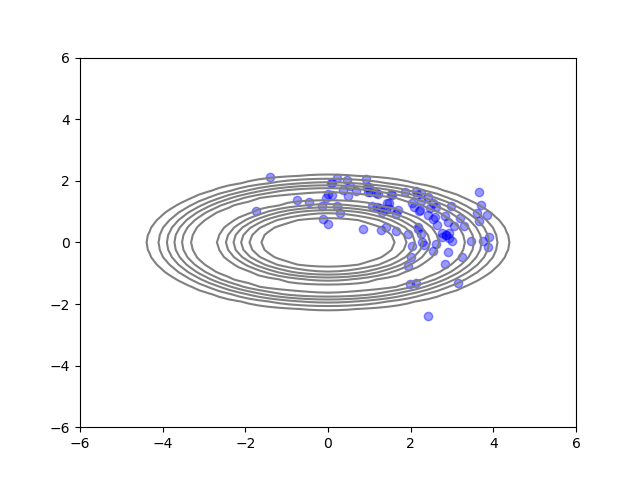}& \adjincludegraphics[width=0.17\textwidth,trim={3.2cm 3.1cm 3.2cm 3.1cm},clip,valign=c]{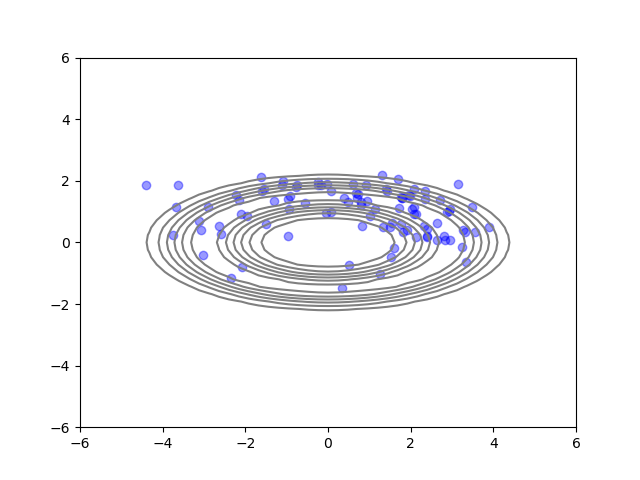}& 
     \adjincludegraphics[width=0.17\textwidth,trim={3.2cm 3.1cm 3.2cm 3.1cm},clip,valign=c]{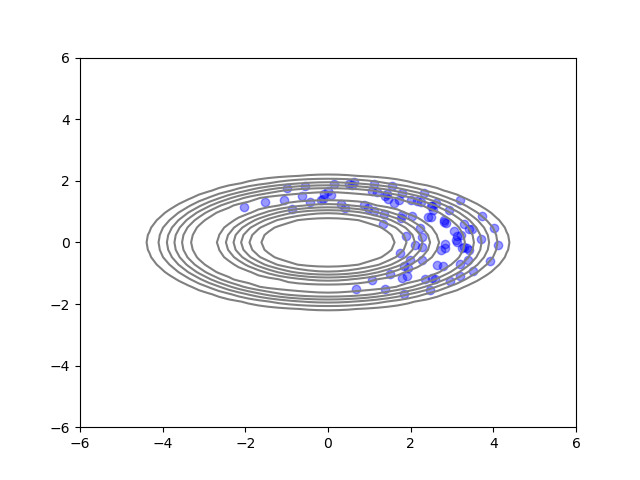}& 
     \adjincludegraphics[width=0.17\textwidth,trim={3.2cm 3.1cm 3.2cm 3.1cm},clip,valign=c]{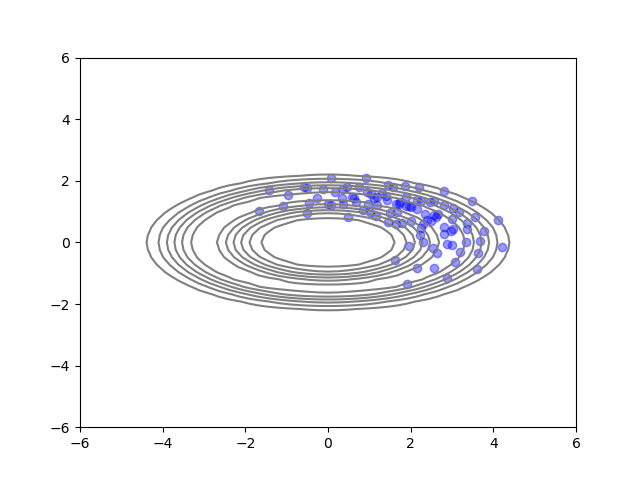}& \adjincludegraphics[width=0.17\textwidth,trim={3.2cm 3.1cm 3.2cm 3.1cm},clip,valign=c]{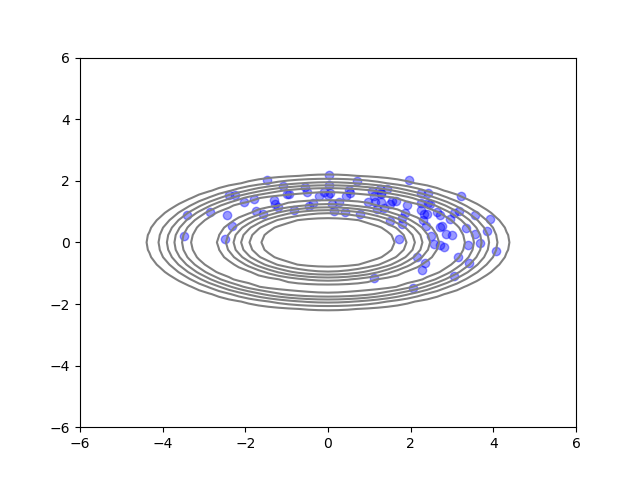}\\
     \adjincludegraphics[width=0.17\textwidth,trim={3.2cm 3.1cm 3.2cm 3.1cm},clip,valign=c]{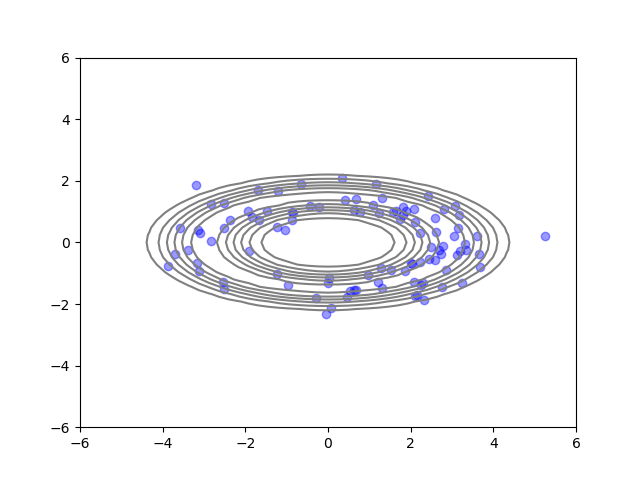}& \adjincludegraphics[width=0.17\textwidth,trim={3.2cm 3.1cm 3.2cm 3.1cm},clip,valign=c]{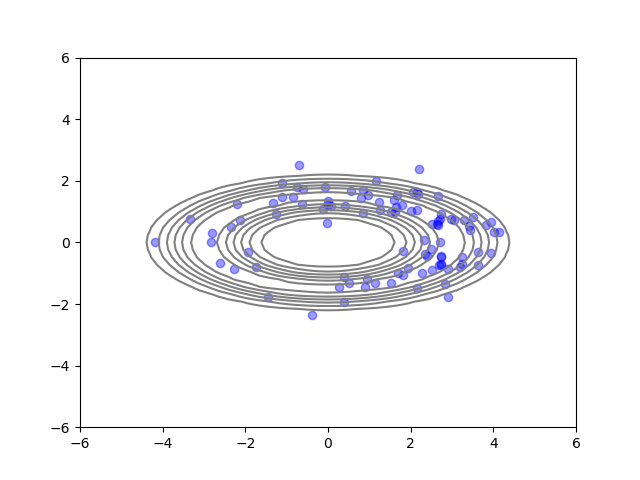}& \adjincludegraphics[width=0.17\textwidth,trim={3.2cm 3.1cm 3.2cm 3.1cm},clip,valign=c]{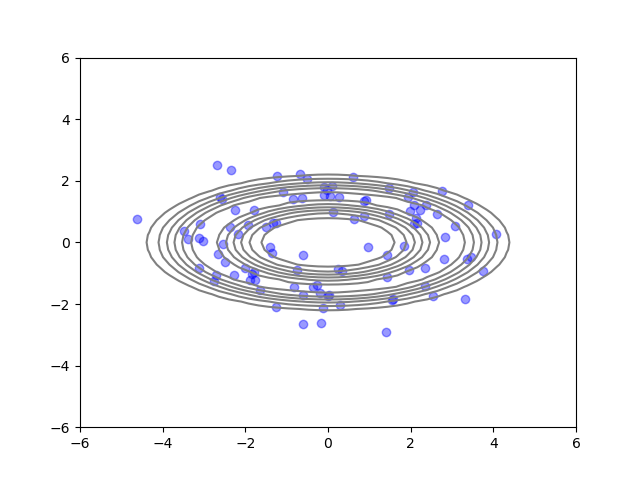}& \adjincludegraphics[width=0.17\textwidth,trim={3.2cm 3.1cm 3.2cm 3.1cm},clip,valign=c]{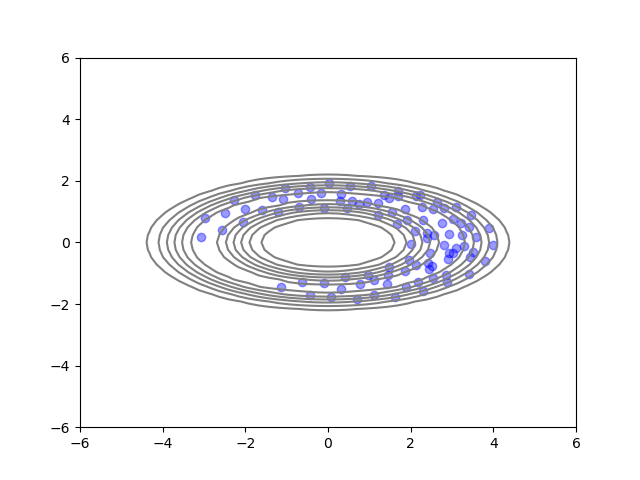}& \adjincludegraphics[width=0.17\textwidth,trim={3.2cm 3.1cm 3.2cm 3.1cm},clip,valign=c]{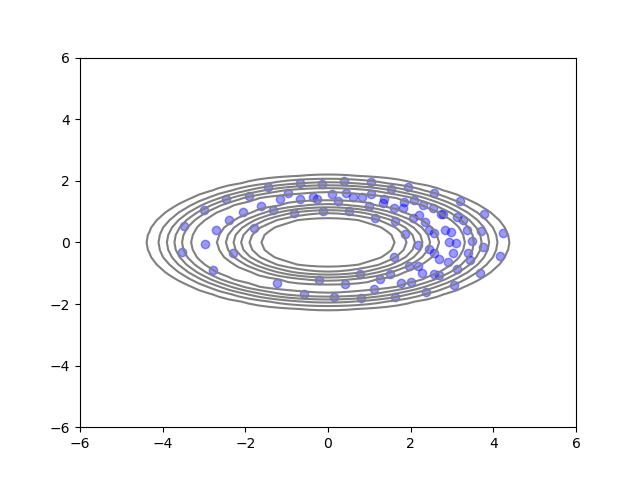}& \adjincludegraphics[width=0.17\textwidth,trim={3.2cm 3.1cm 3.2cm 3.1cm},clip,valign=c]{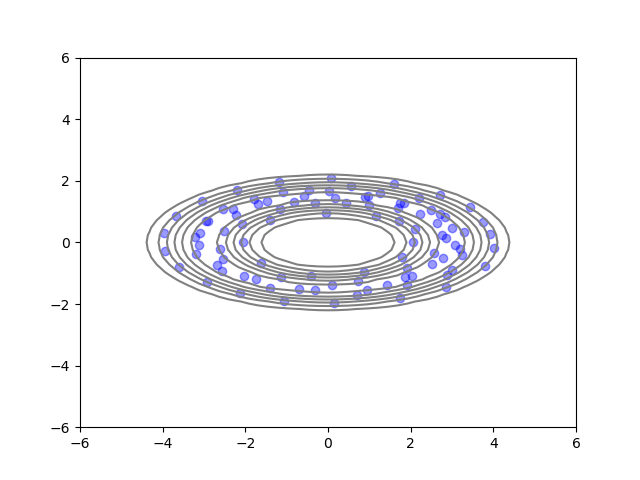}\\
     \adjincludegraphics[width=0.17\textwidth,trim={3.2cm 3.1cm 3.2cm 3.1cm},clip,valign=c]{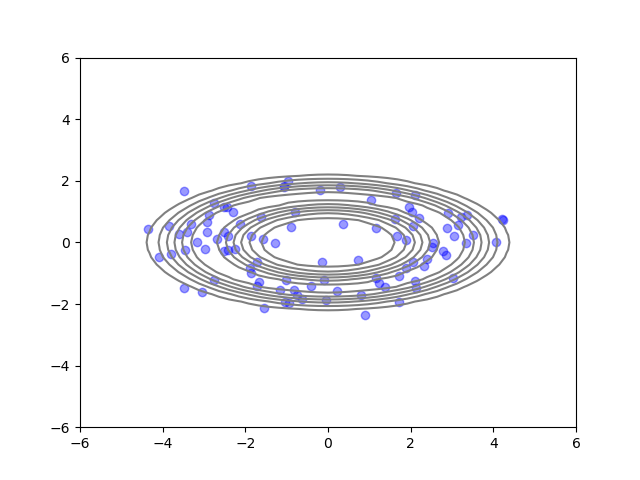}& \adjincludegraphics[width=0.17\textwidth,trim={3.2cm 3.1cm 3.2cm 3.1cm},clip,valign=c]{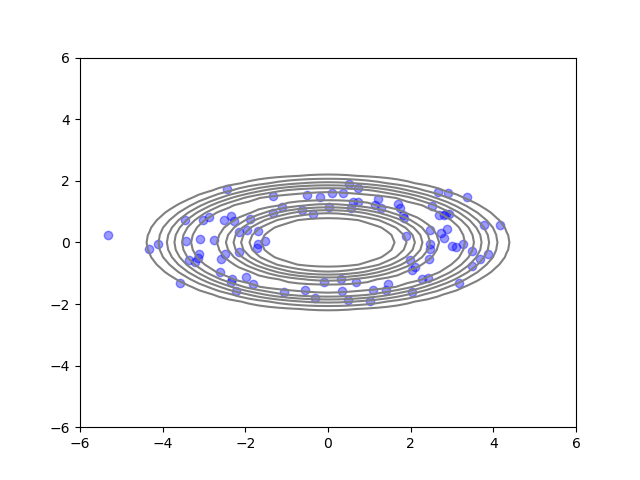}& \adjincludegraphics[width=0.17\textwidth,trim={3.2cm 3.1cm 3.2cm 3.1cm},clip,valign=c]{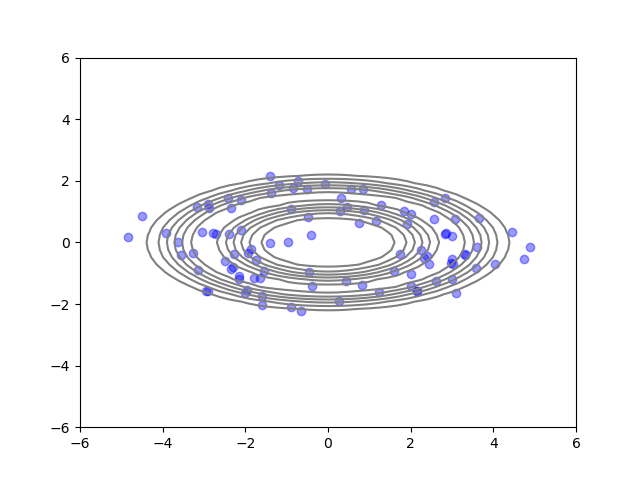}& \adjincludegraphics[width=0.17\textwidth,trim={3.2cm 3.1cm 3.2cm 3.1cm},clip,valign=c]{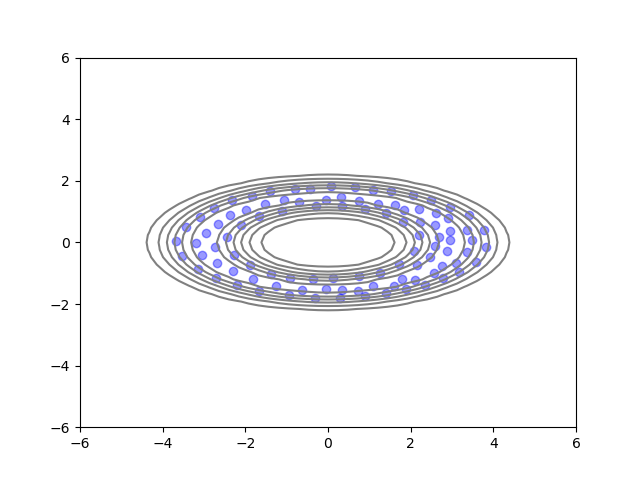}& \adjincludegraphics[width=0.17\textwidth,trim={3.2cm 3.1cm 3.2cm 3.1cm},clip,valign=c]{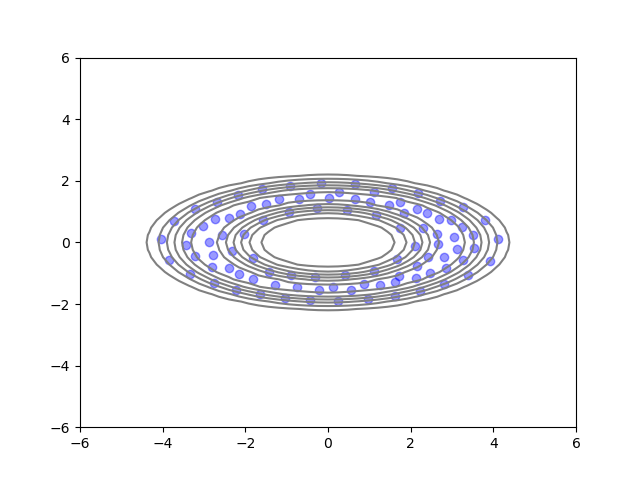}& \adjincludegraphics[width=0.17\textwidth,trim={3.2cm 3.1cm 3.2cm 3.1cm},clip,valign=c]{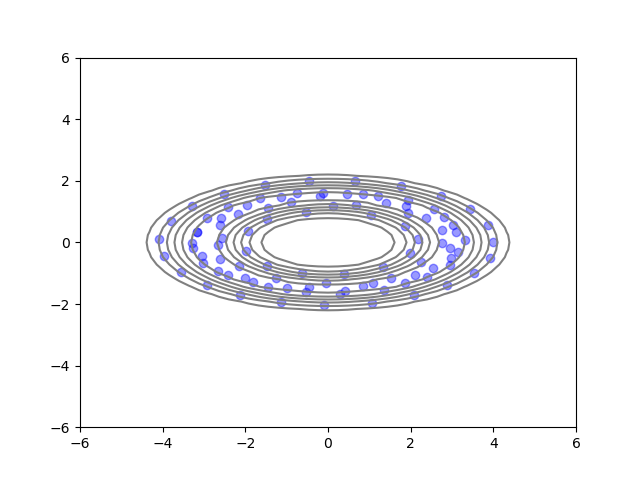}\\
    \end{tblr}}
    \caption{Evolution of the various methods for the scaled annulus. Evaluated with 100 particles, at iterations 10, 50, and 200 from top to bottom respectively. We observe that PBRWP and MLA diffuse faster than their non-preconditioned counterparts. Moreover, PBRWP retains a similar level-set structure to BRWP. Both BRWP and PBRWP spread more than SVGD at convergence.}
    \label{fig:annulus_particles}
\end{figure}
\begin{figure}
    \centering
    \includegraphics[width=0.9\linewidth]{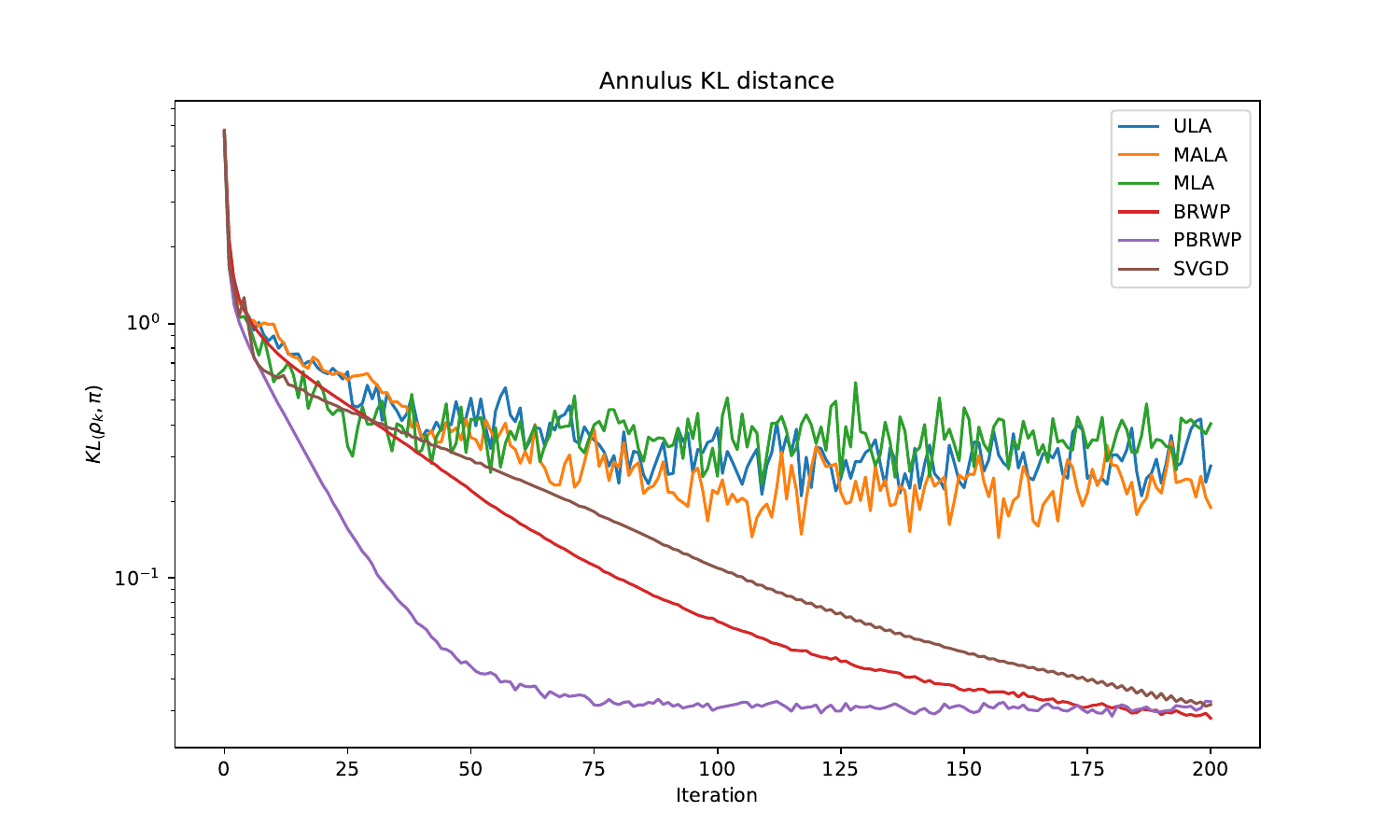}
    \caption{Evolution of the KL divergence between baselines and BRWP-based methods for the scaled annulus. Applied with 100 particles, fixed step-size $\eta =0.1$ and regularization parameter $T=0.05$. We observe that the deterministic SVGD and BRWP-based methods converge more smoothly. }
    \label{fig:KL_annulus}
\end{figure}

\subsection{High dimensional examples and modifications}\label{ssec:highDimExs}
In high dimensions, approximating the normalizing constant with a Monte Carlo integral is highly inaccurate \cite{tan2024noise}. This manifests empirically as non-interaction between particles, or excessive interaction leading to diverging particles. Therefore, we require another approximation using Laplace's method \cite{bleistein1975asymptotic}, similarly to \cite{han2025splitting,han2024convergence}. This approximation has also been used in the other direction, e.g. \cite{tibshirani2025laplace} using integrals to approximate infimal convolutions.

Laplace's method consists of the following approximation: for a $\gC^2$ function $f$ and a continuous function $g$, assume that $f$ has a unique global minimizer $x^*$ and satisfies some coercivity condition. Then, we have the following approximation:

\begin{equation}
    \sqrt{\det(\nabla^2 f(x^*))} \frac{\exp(f(x^*)/T)}{(2 \pi T)^{d/2}} \int_{\R^d} g(x) \exp(-f(x)/T) \dd{x} \rightarrow g(x^*) \quad \text{as } T \rightarrow 0^+.
\end{equation}

Applying this with $f(x) = \beta\|x - \rvx_j\|_M^2/4$ and $g(x) = \exp(-\beta V(x)/2)$, we obtain the approximation
\begin{align}
    \gZ(\rvx_j) &= \int_{\R^d}\exp(-\frac{\beta}{2}\left(V(z) + \frac{\|z - \rvx_j\|_{M}^2}{2T}\right)) \dd{z}\\
    &\approx \sqrt{\det M}\exp(-\frac{\beta}{2}V(\rvx_j))  C(\beta,T),
\end{align}
where $C = C(\beta, T)$ is a constant independent of $\rvx_j$. We note that actually only a H\"older type condition is required for a similar asymptotic to hold, albeit with a different normalizing factor \cite[Thm. 2]{tibshirani2025laplace}. Using this, the $\log \gZ(\rvx_j)$ term may be approximated by $-\beta V(\rvx_j)/2 + \log C(\beta,T) + \frac{1}{2}\log\det M$. For constant $M$, the latter two terms will disappear in the softmax. We refer to this as the Laplace approximation in the following section.

In high dimensions, the matching term $\|\rvx_i - \rvx_j\|_M$ can be large for $i \ne j$, leading to the interaction matrix $\softmax(U_{i,\cdot})_j$ being close to identity. Similarly to the seminal work on transformers \cite{vaswani2017attention}, we additionally propose to set the diffusion parameter as $\beta = d^{-1/2}$, which increases the diffusion. We refer to this as \emph{scaling} in the following section.

The resulting modified iteration using scaling and the Laplace approximation is as follows, where $\beta=d^{-1/2}$,
\begin{subequations}
    \begin{gather}
    \tX^{(k+1)} = \tX^{(k)} - \frac{\eta}{2} M \nabla V(\tX^{(k)}) + \frac{\eta}{2T}\left(\tX^{(k)} - \tX^{(k)} \softmax(W^{(k)})^\top\right),\\
    W_{i,j}^{(k)} = -\beta \frac{\|\rvx_i^{(k)}-\rvx_j^{(k)}\|_M^2}{4 T}+\frac{\beta}{2}V(\rvx_j^{(k)}).
\end{gather}
\end{subequations}

\subsection{High-dimensional Gaussian distributions}
We demonstrate the effect of taking $\beta=d^{-1/2}$ instead of $\beta=1$, as well as the Laplace approximation in \Cref{fig:highdimGaussian}. We test on a 50-dimensional Gaussian with diagonal covariance $\Sigma=\diag(0.1,0.2,...,5)$ and condition number 50. We let $M=\Sigma$ be the preconditioner as before, and plot the particles projected onto the first and last dimensions in the horizontal and vertical directions respectively. Recall that in the Gaussian case, the normalizing constant can be computed exactly in the finite particle setting. We compare PBRWP with step-size $\eta=0.1$, $T=0.02,0.2,0.9$, with and without both the scaling and Laplace approximation after convergence at iteration 1000.

We observe that the Laplace approximation does not seem to qualitatively affect the diffusion behavior of the particles, with or without the $\beta=d^{-1/2}$ scaling modification. However, the scaling is able to counter the mode-collapse phenomenon that comes with observing marginals in high dimensions. By reducing the scale of the interaction matrix, the particles are able to repel each other from further away. Motivated by this, we use the scaling for high dimensions to avoid mode collapse.

\begin{figure}
    \centering
    \renewcommand{\arraystretch}{0}
    \noindent\makebox[\textwidth]{
    \begin{tblr}{
        colspec={cccc},
        }
    $\beta=1$ & $\beta=d^{-1/2}$ & $\beta=1$, Laplace & $\beta=d^{-1/2}$, Laplace \\\hline
     \adjincludegraphics[width=0.19\textwidth,trim={5cm 2.6cm 5cm 2.6cm},clip,valign=c,angle=90,origin=c]{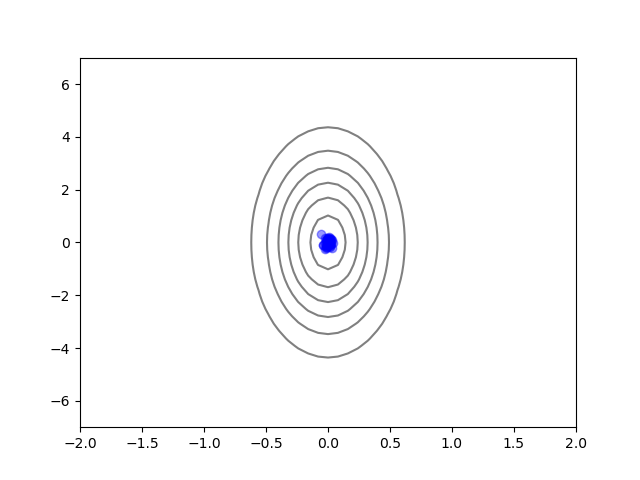}
     & \adjincludegraphics[width=0.19\textwidth,trim={5cm 2.6cm 5cm 2.6cm},clip,valign=c,angle=90,origin=c]{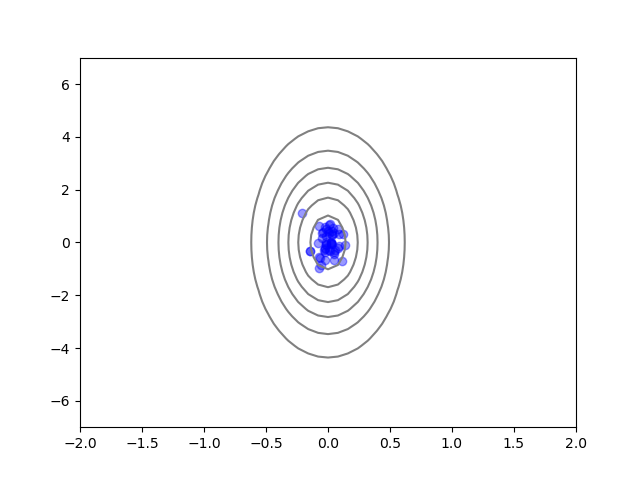}
     & \adjincludegraphics[width=0.19\textwidth,trim={5cm 2.6cm 5cm 2.6cm},clip,valign=c,angle=90,origin=c]{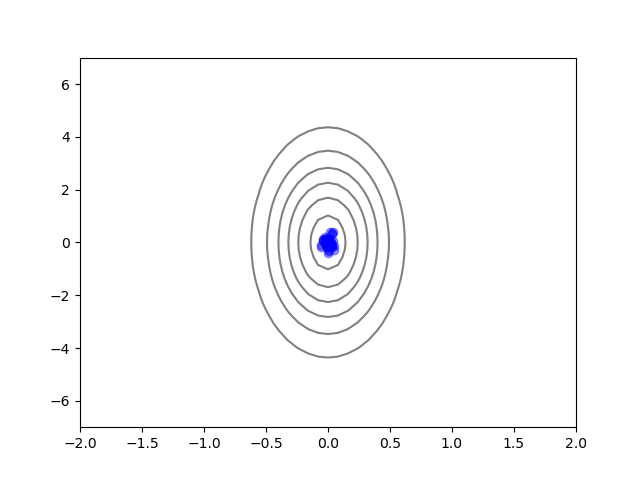}
     & \adjincludegraphics[width=0.19\textwidth,trim={5cm 2.6cm 5cm 2.6cm},clip,valign=c,angle=90,origin=c]{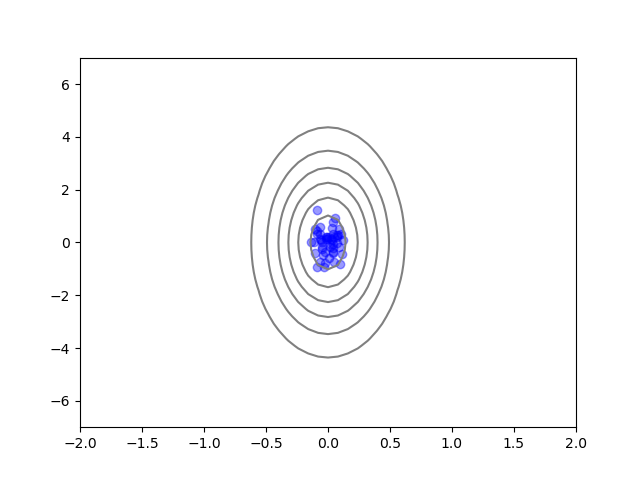}\\
     \adjincludegraphics[width=0.19\textwidth,trim={5cm 2.6cm 5cm 2.6cm},clip,valign=c,angle=90,origin=c]{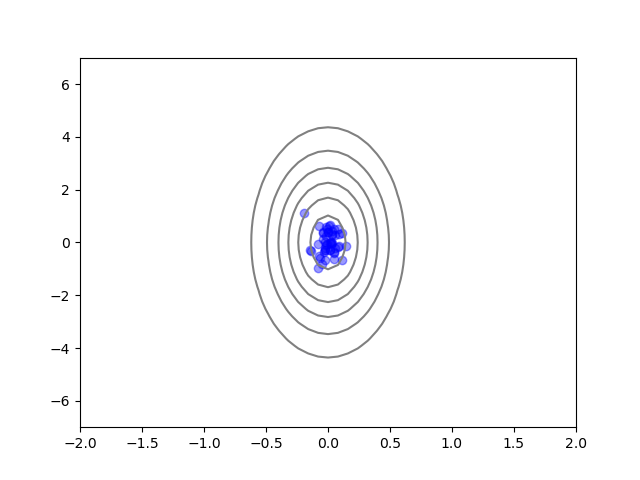}
     & \adjincludegraphics[width=0.19\textwidth,trim={5cm 2.6cm 5cm 2.6cm},clip,valign=c,angle=90,origin=c]{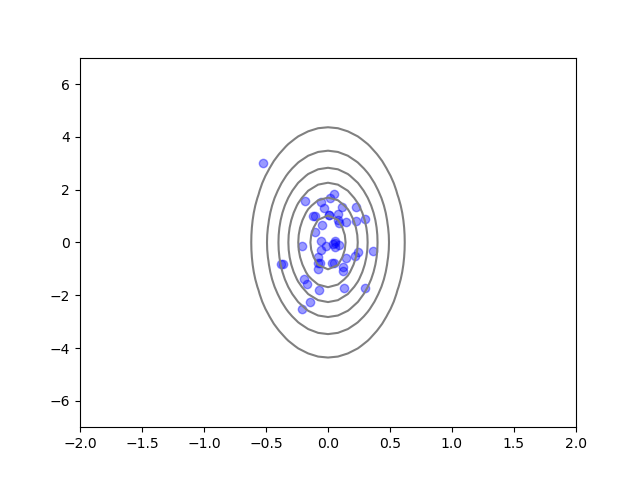}
     & \adjincludegraphics[width=0.19\textwidth,trim={5cm 2.6cm 5cm 2.6cm},clip,valign=c,angle=90,origin=c]{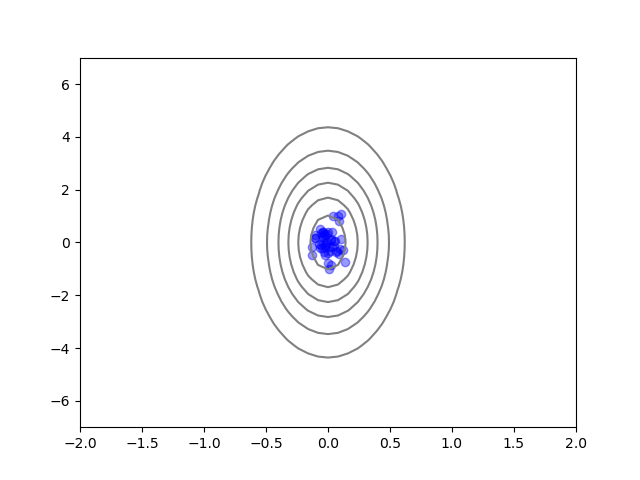}
     & \adjincludegraphics[width=0.19\textwidth,trim={5cm 2.6cm 5cm 2.6cm},clip,valign=c,angle=90,origin=c]{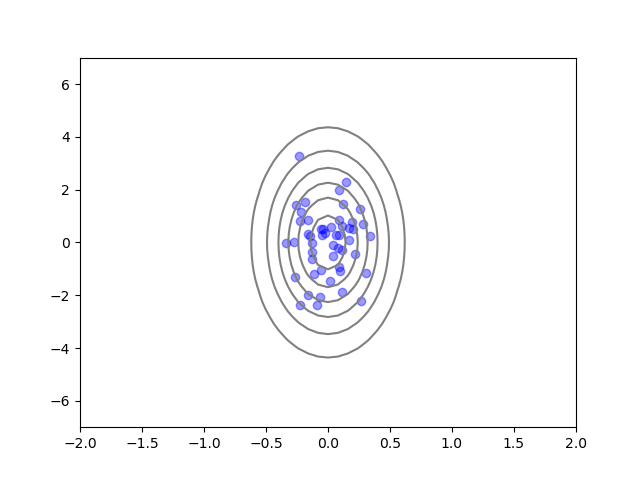}\\
     \adjincludegraphics[width=0.19\textwidth,trim={5cm 2.6cm 5cm 2.6cm},clip,valign=c,angle=90,origin=c]{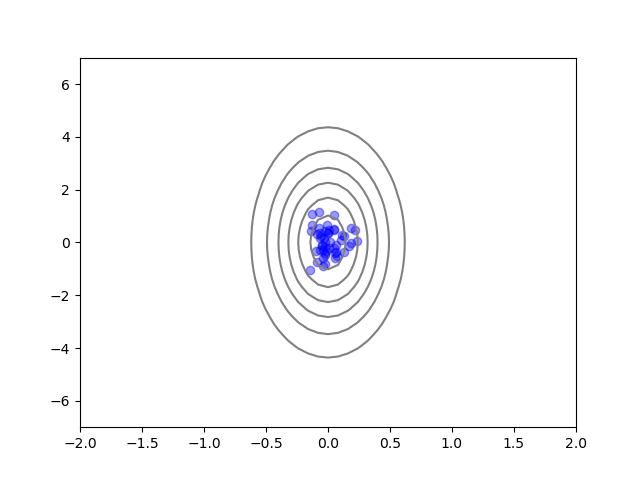}
     & \adjincludegraphics[width=0.19\textwidth,trim={5cm 2.6cm 5cm 2.6cm},clip,valign=c,angle=90,origin=c]{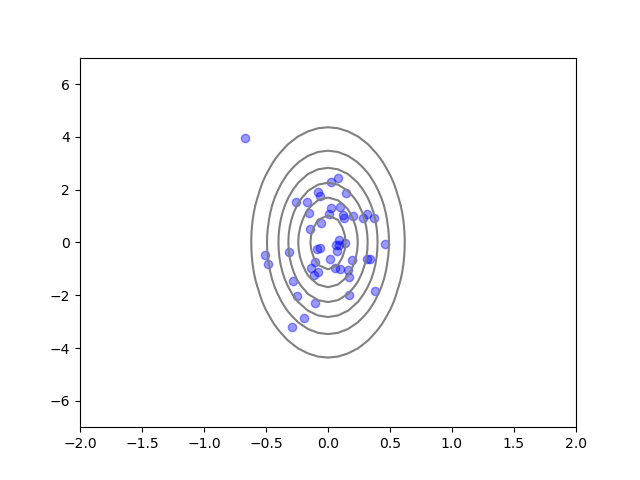}
     & \adjincludegraphics[width=0.19\textwidth,trim={5cm 2.6cm 5cm 2.6cm},clip,valign=c,angle=90,origin=c]{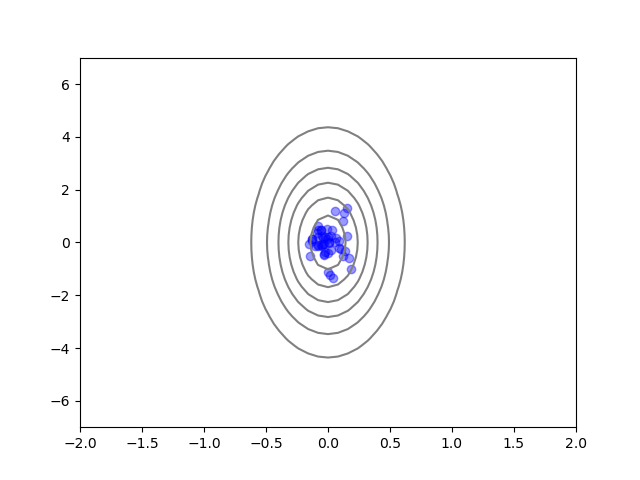}
     & \adjincludegraphics[width=0.19\textwidth,trim={5cm 2.6cm 5cm 2.6cm},clip,valign=c,angle=90,origin=c]{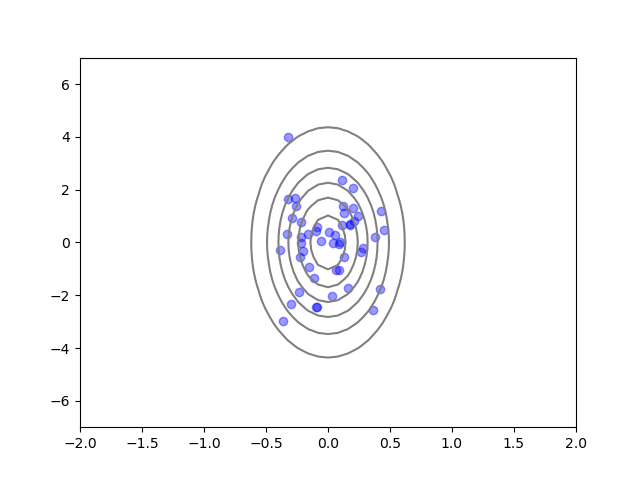}
    \end{tblr}}
    \caption{Evolution of the high-dimensional modifications for the 50-dimensional Gaussian, at convergence in 1000 iterations. Evaluated with 50 particles, step-size $\eta=0.1$ and regularizations $T=0.02,0.2,0.9$ in the top, middle and bottom rows respectively. We observe little difference when using the Laplace approximation compared to the ground-truth, suggesting this is reasonable. The $\beta=d^{-1/2}$ scaling increases the diffusion, which is crucial for reducing mode-collapse in high dimensions.}
    \label{fig:highdimGaussian}
\end{figure}

\subsection{Deconvolution}
We consider the Bayesian problem corresponding to the (convex) total-variation regularized (TV) objective 
\cite{rudin1992nonlinear},
\begin{equation}\label{eq:BIPPotential}
    V(x) = \frac{1}{2\sigma^2} \|Ax-y\|_2^2 + \lambda \mathrm{TV}(x),
\end{equation}
where $A$ is a convolution operator, $y$ is a corrupted image, and $\mathrm{TV}(x) = \|D x\|_1$ denotes the discrete total variation functional. For image deconvolution, the forward operator $A$ takes the form $A = \gF^* \Lambda \gF$, where $\gF$ is the (complex, unitary) matrix of the discrete Fourier transform, $\gF^*$ is its inverse, and $\Lambda$ is a diagonal matrix. In this case, the Hessian of the first term takes the simple form $\sigma^{-2}A^* A = \sigma^{-2}\gF^* \Lambda^* \Lambda \gF$. Similarly to SALSA \cite{afonso2010fast}, we may use a regularized version as a preconditioner $M = (A^* A + \tau I)^{-1}$, where $\tau>0$ is some constant, taken to be $\tau=0.5$. We note that sampling from the distribution $\gN(0, M)$ is possible using the equation $\sqrt{M} = \gF^* (\Lambda^* \Lambda + \tau I)^{-1/2}\gF$, which allows for the use of MLA. We further note that using the regularized Hessian $(\sigma^2A^* A + \tau I)^{-1}$ does not provide adequate preconditioning for PBRWP or MLA, likely due to the ill-conditioning of the TV term with respect to this metric.

\begin{figure}
    \centering
    \setlength{\tabcolsep}{1pt}
    \renewcommand{\arraystretch}{0}
    \noindent\makebox[\textwidth]{
    \begin{tblr}{
        colspec={cccccc},
        }
      ULA & MYULA & SVGD & BRWP & MLA & PBRWP  \\\hline
     \adjincludegraphics[height=2cm,trim={65px 35px 51px 38px},clip,valign=c]{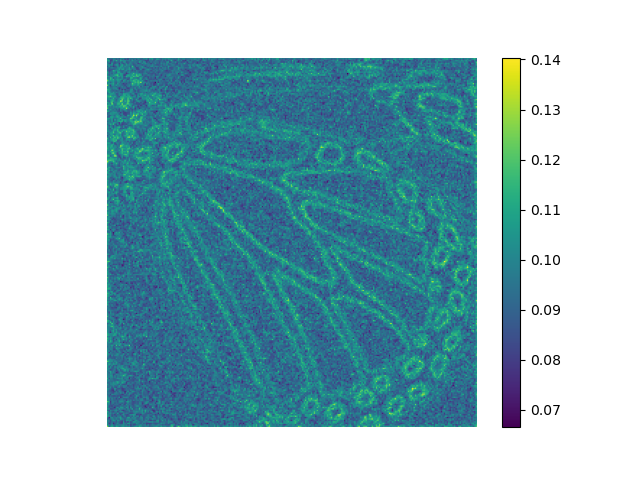}
     & \adjincludegraphics[height=2cm,trim={65px 35px 51px 38px},clip,valign=c]{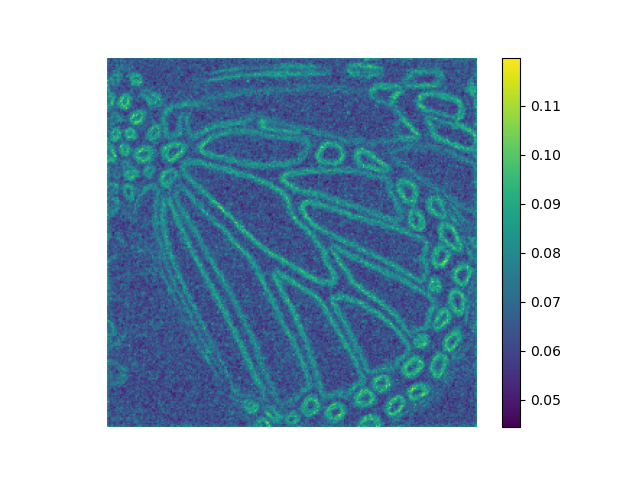}
     & \adjincludegraphics[height=2cm,trim={65px 35px 51px 38px},clip,valign=c]{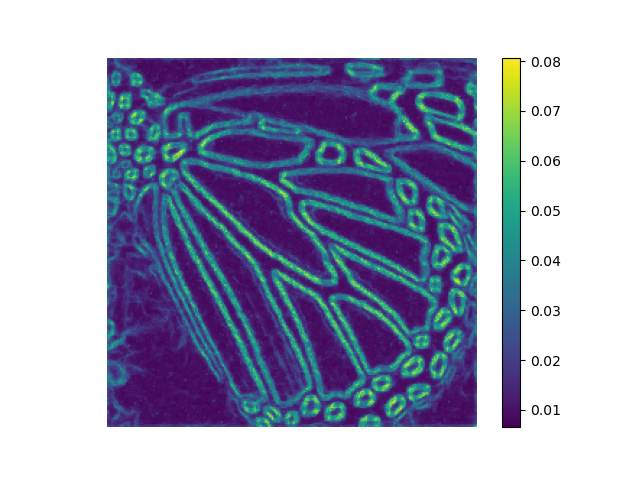}
     & \adjincludegraphics[height=2cm,trim={65px 35px 51px 38px},clip,valign=c]{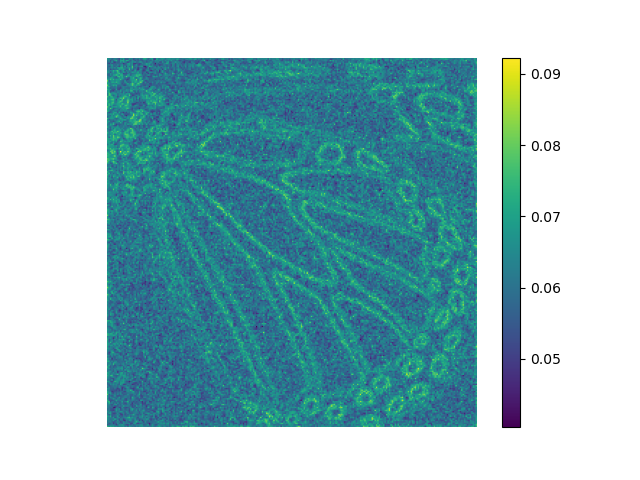}
     & \adjincludegraphics[height=2cm,trim={65px 35px 51px 38px},clip,valign=c]{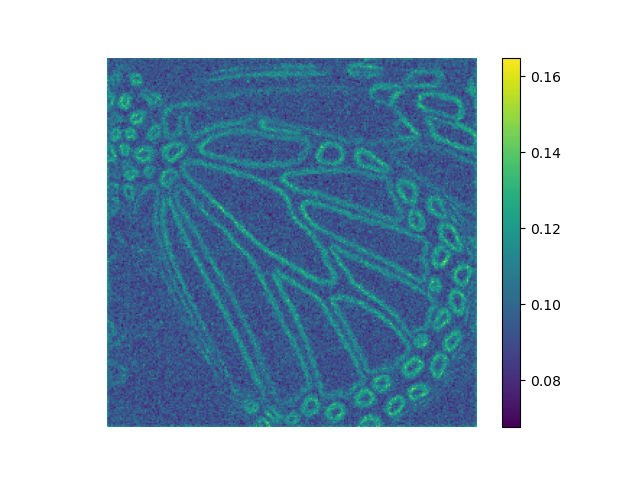}
     & \adjincludegraphics[height=2cm,trim={65px 35px 51px 38px},clip,valign=c]{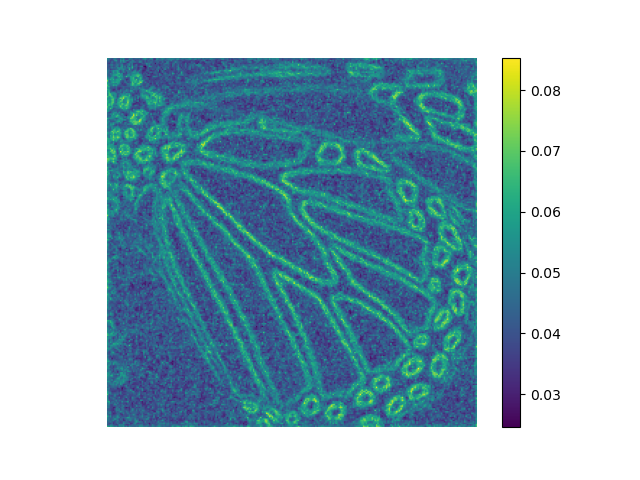}\\
     \adjincludegraphics[height=2cm,trim={65px 35px 51px 38px},clip,valign=c]{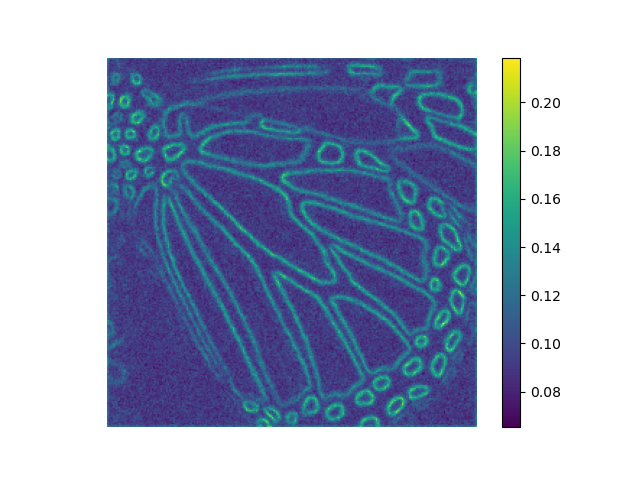}
     & \adjincludegraphics[height=2cm,trim={65px 35px 51px 38px},clip,valign=c]{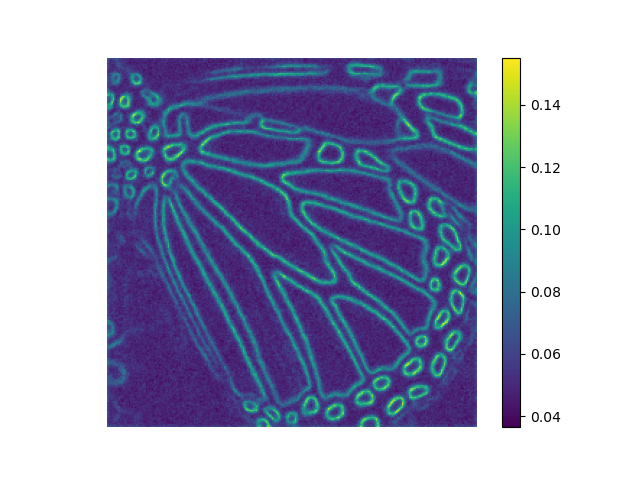}
     & \adjincludegraphics[height=2cm,trim={65px 35px 51px 38px},clip,valign=c]{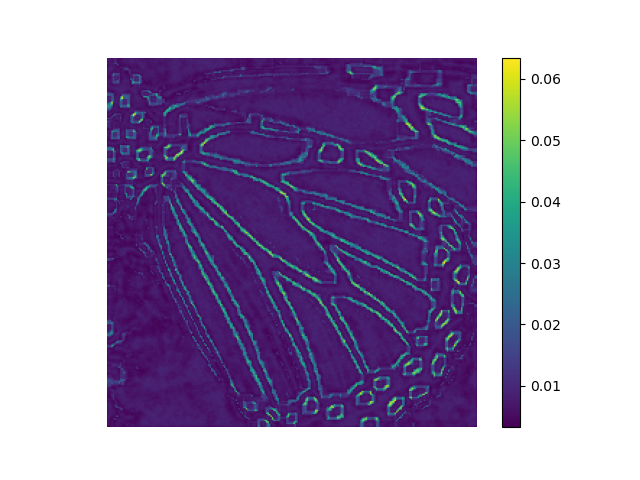}
     & \adjincludegraphics[height=2cm,trim={65px 35px 51px 38px},clip,valign=c]{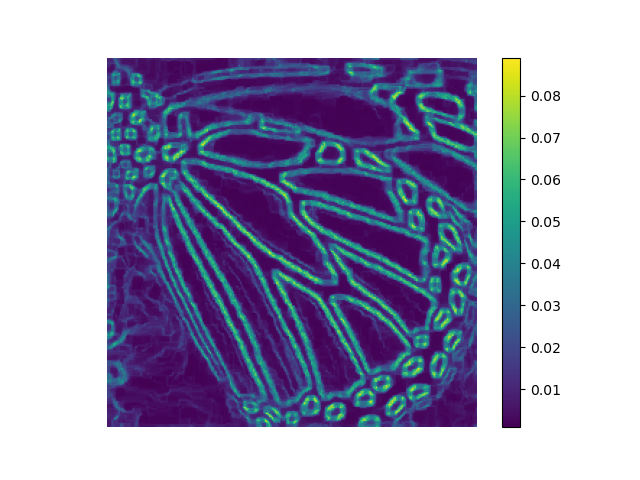}
     & \adjincludegraphics[height=2cm,trim={65px 35px 51px 38px},clip,valign=c]{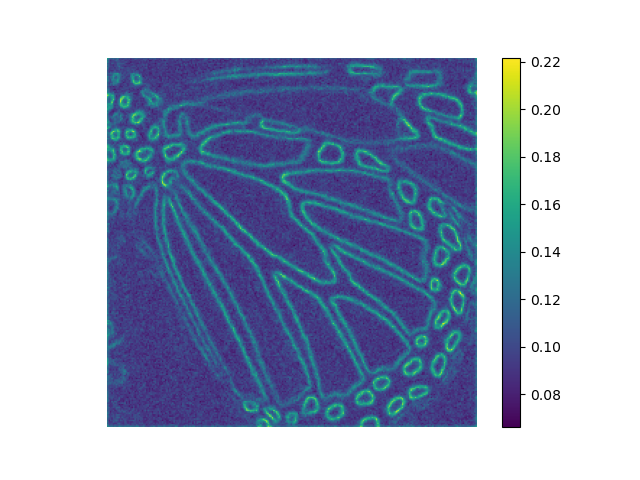}
     & \adjincludegraphics[height=2cm,trim={65px 35px 51px 38px},clip,valign=c]{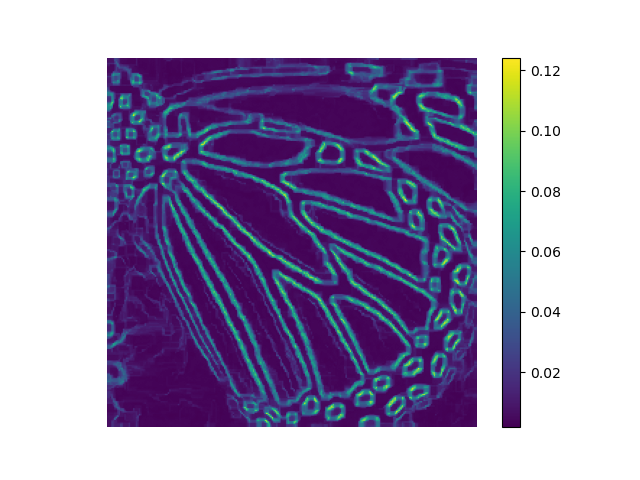}\\
     \adjincludegraphics[height=2cm,trim={65px 35px 51px 38px},clip,valign=c]{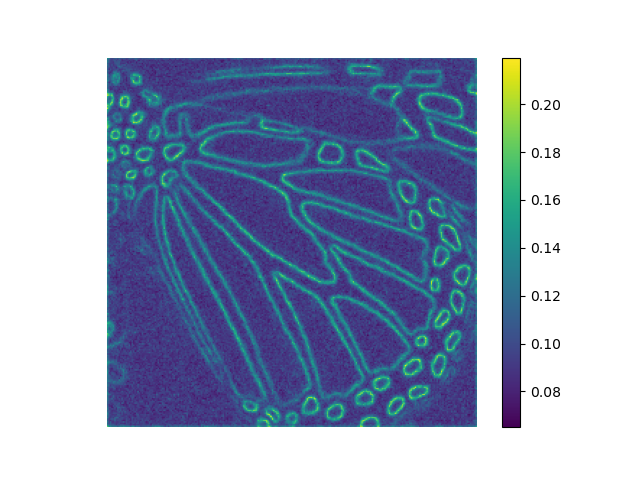}
     &\adjincludegraphics[height=2cm,trim={65px 35px 51px 38px},clip,valign=c]{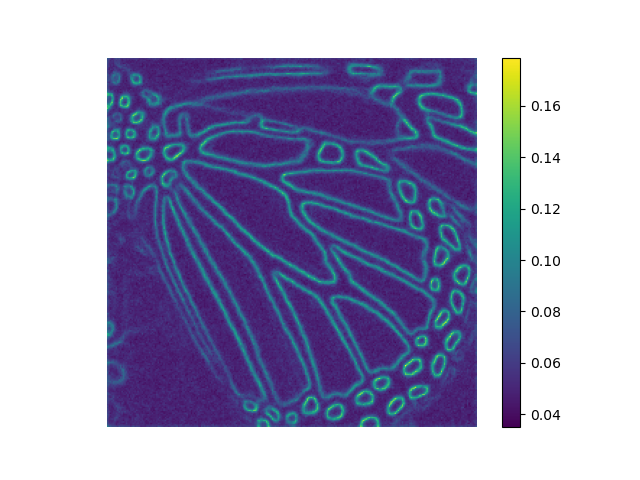}
     & \adjincludegraphics[height=2cm,trim={65px 35px 51px 38px},clip,valign=c]{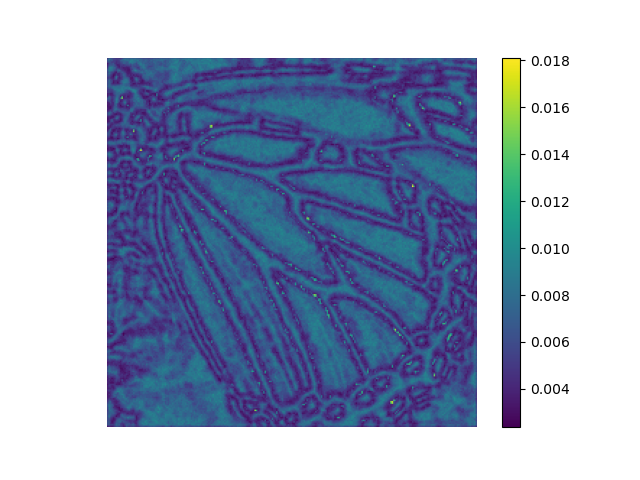}
     &\adjincludegraphics[height=2cm,trim={65px 35px 51px 38px},clip,valign=c]{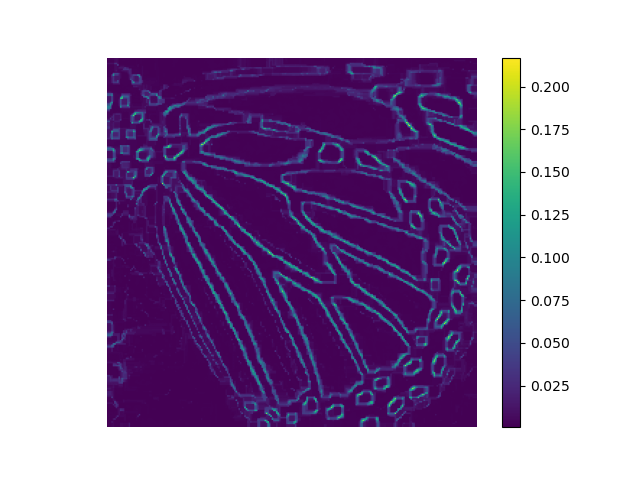}
     &\adjincludegraphics[height=2cm,trim={65px 35px 51px 38px},clip,valign=c]{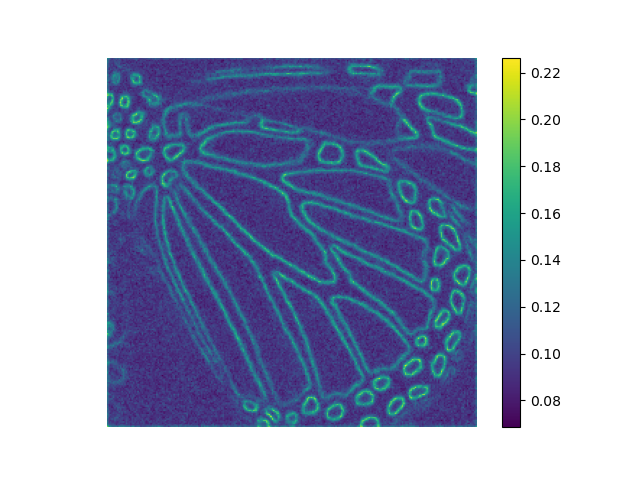}
     &\adjincludegraphics[height=2cm,trim={65px 35px 51px 38px},clip,valign=c]{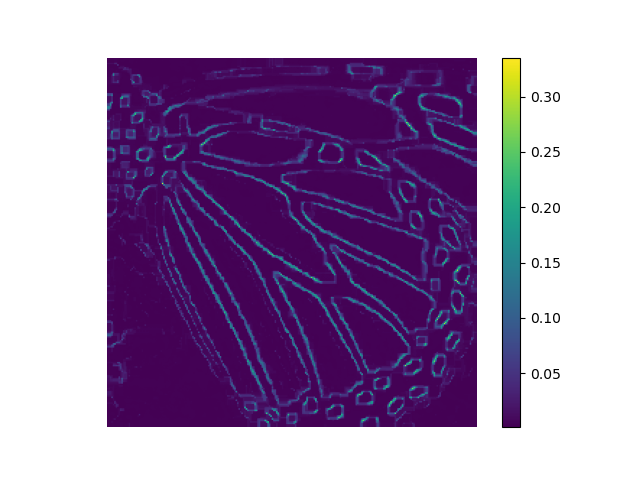}\\
    \end{tblr}}
    \caption{Standard deviations for TV-regularized deconvolution. Run with 40 particles, and evaluated at iterations 20, 200 and 2000 in the top, middle and bottom rows respectively. We observe that the pixelwise variance of the noise-free methods BRWP and PBRWP are lower than their Langevin counterparts. In the 200 iteration regime, we see that the features of the noise-free BRWP and PBRWP methods have more contrast and lower variances in the piecewise constant areas. SVGD tends to collapse, leading to small variance throughout.}
    \label{fig:deconv}
\end{figure}

We compare the mean and (norm of the channel-wise) standard deviation of the TV-regularized objective. In addition, we compute the PSNR of the finite-particle posterior mean to the true TV-regularized solution, equivalent to running a fixed number of parallel chains for the Langevin methods. We do this to compare the small-particle behavior of the noise-free methods versus the Langevin methods. The minimizer of the objective \cref{eq:BIPPotential} is computed using subgradient descent for 30,000 iterations, applied with decreasing step-sizes {$1\mathrm{e}-\{4,5,6\}$} for 10,000 iterations each. The image considered is the butterfly image from the set3c dataset, which has a dimension of $3\times 256 \times 256$. 

We compare against ULA, MLA, MYULA \cite{durmus2018efficient}, and SVGD. We note that the acceptance probability for MALA is almost degenerate for this high-dimensional problem and requires extremely small step-sizes, and is thus omitted due to slow convergence. MYULA takes the following form: for a composite potential of the form $\exp(-f(x) + g(x))$, a step-size $\tau>0$ and a regularization parameter $\theta>0$,
\begin{equation}\tag{MYULA}
    X_{k+1} = \left(1-\frac{\tau}{\theta}\right)X_k - \tau \nabla f(X_k) + \frac{\tau}{\theta} \prox_{\theta g}(X_k) + \sqrt{2\tau} Z_{k+1},
\end{equation}
where $Z_{k+1}$ are independent standard Gaussians. This is the unadjusted Langevin algorithm, applied to the regularized potential where $g$ is replaced by its Moreau--Yosida regularization with parameter $\theta$. Applied to the deconvolution potential \cref{eq:BIPPotential}, we identify $f(x)=\frac{1}{2\sigma^2} \|Ax-y\|^2$ and $g(x) = \lambda \TV(x)$, and take the regularization parameter to be $\lambda = 10$. The proximal of $g$ is computed using the Chambolle--Pock algorithm \cite{condat2013primal} using the implementation in the DeepInverse Python library \cite{tachella2025deepinverse}, and the parameter is fixed as $\theta = 1\mathrm{e}-4$.

The step-sizes for the baseline methods are chosen with a grid-search such that the PSNR is maximized after 2000 iterations, and can be found in the appendix. We compare with 40 particles due to GPU memory limitations from the $\mathcal{O}(N^2)$ scaling of fully parallelized PBRWP\footnote{It is possible to remove some parallelization by computing the interaction matrix column-wise, yielding $\mathcal{O}(N)$ memory constraint.}. 

The experimental setting differs from typical settings as follows, to highlight the difference between Langevin methods and the noise-free methods BRWP and PBRWP:
\begin{enumerate}
    \item \textit{Low sample count.} Langevin algorithms usually run many iterations (around $1\mathrm{e}5$) of the Markov chain to construct an estimate of the posterior mean. In our case, since the samples are approximately convergent for high iterations, taking means over iterations is not insightful, and we replace it with the mean over particles. The particles for Langevin methods can be computed using parallel chains.
    \item \textit{Short ``burn-in'' phase.} While typical Bayesian imaging methods run burn-in periods for up to $1\mathrm{e}6$ iterations, the noise-free methods again do not benefit from these long iterations. As our objective is rapid convergence, we consider only methods that are able to converge in the short iteration regime.
    \item \textit{Comparison with MAP estimate.} In high dimensions, computing the posterior mean is intractable, and is usually estimated. While it is possible to consider the convergence of the algorithm to the posterior mean by applying a Metropolis--Hastings correction step \cite{habring2024subgradient}, this would compare the algorithms against an estimator with a-priori unknown uncertainty. We choose the MAP estimate instead as it is computable up to arbitrary accuracy for this convex task, and to compare the mode concentration of the algorithms.
\end{enumerate}

\begin{table}[]
\centering
\caption{PSNR (in dB) of the mean of 40 particles compared to the MAP estimate for the deconvolution problem. We observe that the preconditioned methods converge to the neighborhood of the MAP faster than their non-preconditioned counterparts, indicated by the values at 20 and 200 iterations. Moreover, the noise-free methods are much more tightly concentrated around the MAP estimate, including SVGD which tends to collapse in high dimensions. However, the particle means of the preconditioned methods are slightly further away from the posterior mean at high iterations. }
\label{tab:psnr_deconvBIP}
\begin{tabular}{@{}ccccccc@{}}
\toprule
Iteration & ULA   & MYULA & SVGD & BRWP & MLA   & PBRWP \\ \midrule
20        & 23.28 & 23.53 & 24.81 & 23.52   & 23.86 & 24.27     \\
200       & 25.43 & 25.79 & 31.28 & 27.81   & 25.78 & 29.90     \\
2000      & 26.07 & 26.38 & 38.34 & 38.38   & 26.00 & 37.35     \\ \bottomrule
\end{tabular}
\end{table}

\Cref{tab:psnr_deconvBIP} plots the peak signal-to-noise ratio (PSNR) of the particle means, compared to the MAP estimate \cref{eq:BIPPotential} (equivalently, the minimizer of the potential). We observe that both BRWP and PBRWP converge to much higher values of PSNR, indicating better concentration around the MAP. Moreover, PBRWP has higher PSNR at lower iterations, indicating faster convergence. The lower PSNR of the preconditioned methods compared to the non-preconditioned methods can be explained by the differing noise and diffusion scales. \Cref{fig:deconv} plots the pixel-wise standard deviation of the particles at various iteration counts. We observe again that PBRWP has higher contrast compared to BRWP at low iterations, indicating faster convergence. Moreover, the noise-free methods have very low noise outside the edges, with SVGD having the reverse phenomenon of collapsing on the edges. This low-rank phenomenon arises since diffusion comes from inter-particle interactions.

\subsection{Bayesian neural networks}\label{ssec:BNN}
In this section, we empirically investigate the possibility of a variable preconditioning matrix $M$, applied to a high-dimensional non-convex potential. Recall the PBRWP iteration \cref{eq:tensorForm} for particles $\{\rvx_i\}_{i=1}^N$:
\begin{gather*}
    \tX^{(k+1)} = \tX^{(k)} - \frac{\eta}{2} M \nabla V(\tX^{(k)}) + \frac{\eta}{2T}\left(\tX^{(k)} - \tX^{(k)} \softmax(W^{(k)})^\top\right),\\
    W_{i,j}^{(k)} = -\beta \frac{\|\rvx_i^{(k)}-\rvx_j^{(k)}\|_M^2}{4 T} - \log \gZ(\rvx_j^{(k)}).
\end{gather*}
In the normalizing term, we empirically propose to replace the term $\|\rvx_i^{(k)}-\rvx_j^{(k)}\|_M^2$ with a $j$- and $k$-dependent scaling $\|\rvx_i^{(k)}-\rvx_j^{(k)}\|_{M_j^{(k)}}^2$. Using Laplace's approximation, we have 
\begin{align}
    \gZ(\rvx_j; M_j) &= \int_{\R^d} \exp({-\frac{\beta}{2}(V(z) + \frac{\|z - \rvx_j\|_{M_j}^2}{2T})} )\dd{z}\\
    &\approx C(\beta, T) \sqrt{\det M_j} \exp(-\frac{\beta}{2} V(\rvx_j)).
\end{align}
Simplifying, the variable-preconditioner version of PBRWP takes the following form:
\begin{gather}
    \tX^{(k+1)} = \tX^{(k)} - \frac{\eta}{2} \tM^{(k)} \nabla V(\tX^{(k)}) + \frac{\eta}{2T}\left(\tX^{(k)} - \tX^{(k)} \softmax(W^{(k)})^\top\right),\\
    W_{i,j}^{(k)} = -\beta\frac{\|\rvx_i^{(k)}-\rvx_j^{(k)}\|_{M_j^{(k)}}^2}{4 T} - \frac{1}{2}\log \det(M_j^{(k)}) +\frac{\beta}{2} V(\rvx_j), \label{eq:variableW}
\end{gather}
where $\tM^{(k)} \nabla V(\tX^{(k)}) \coloneqq \begin{bmatrix}M^{(k)}_1 \nabla V(\rvx_1) & ...  & M_N^{(k)} \nabla V(\rvx_N)\end{bmatrix}$.

For a neural network, the choice of preconditioner is a-priori unclear. For our experiments, we choose the diagonal preconditioner corresponding to the Adam optimizer \cite{kingma2014adam}, with exact implementation given in \ref{appsec:AdamPC}. This has recently been linked to a diagonal approximation to the empirical Fisher information matrix  \cite{kunstner2019limitations,hwang2024fadam}. The diagonal form of the preconditioning matrix in this case makes it particularly amenable for the log-determinant in \cref{eq:variableW}.

To test, we train neural networks on various regression datasets, with $N=10$ particles as in \cite{tan2024noise,wang2022accelerated}. The potential $V$ is given by the mean-squared error on the training dataset. \Cref{tab:RMSE} shows the root-mean-square error of the trained networks using PBRWP against various baselines. For the Adam baseline \cite{kingma2014adam}, we directly train from the initializations, where the step-size is chosen with a grid search to minimize the test RMSE. We observe that PBRWP is able to uniformly outperform BRWP on such a task, and is more competitive with kernel-based methods. This is analogous to Adam being empirically better than SGD for training neural networks.

\begin{table}[]
\centering
\caption{Test root-mean-square-error (RMSE) on test datasets on various Bayesian neural network tasks. Bold indicates smallest in row.  We observe that the adaptive Fisher preconditioned BRWP uniformly outperforms BRWP on each of the BNN tasks. Adam and the noise-free methods both generally exhibit high variance in this setting, which may be due to the relatively small neural network architecture and sensitivity to initialization. We may further interpret the high variance of these methods as being able to find better trained models.}
\label{tab:RMSE}
\begin{adjustbox}{width=1\textwidth}
\begin{tabular}{@{}cr|rrrrr@{}}
\toprule
Dataset & \multicolumn{1}{c}{Adam}&\multicolumn{1}{c}{PBRWP} & \multicolumn{1}{c}{BRWP} & \multicolumn{1}{c}{AIG} & \multicolumn{1}{c}{WGF} & \multicolumn{1}{c}{SVGD} \\ \midrule
Boston   & $3.350_{\pm8.33\mathrm{e}-1}$& $2.866_{\pm 5.94\mathrm{e}-1}$ & $3.309_{\pm 5.31\mathrm{e}-1}$ & $2.871_{\pm 3.41\mathrm{e}-3}$ & $3.077_{\pm 5.52\mathrm{e}-3}$ & $\pmb{2.775_{\pm 3.78\mathrm{e}-3}}$ \\
Combined & $3.971_{\pm1.79\mathrm{e}-1}$&$\pmb{3.925_{\pm 1.52\mathrm{e}-1}}$& ${3.975_{\pm 3.94\mathrm{e}-2}}$ & $4.067_{\pm 9.27\mathrm{e}-1}$ & $4.077_{\pm 3.85\mathrm{e}-4}$ & ${4.070_{\pm 2.02\mathrm{e}-4}}$ \\
Concrete & $4.698_{\pm4.85\mathrm{e}-1}$&$\pmb{4.387_{\pm 4.88\mathrm{e}-1}}$ & $4.478_{\pm 2.05\mathrm{e}-1}$ & ${4.440_{\pm 1.34\mathrm{e}-1}}$ & $4.883_{\pm 1.93\mathrm{e}-1}$ & $4.888_{\pm 1.39\mathrm{e}-1}$ \\
Kin8nm   & $0.089_{\pm 2.72\mathrm{e}-3}$ & $\pmb{0.087_{\pm2.67\mathrm{e}-3}}$ & ${0.089_{\pm 6.06\mathrm{e}-6}}$ & $0.094_{\pm 5.56\mathrm{e}-6}$ & $0.096_{\pm 3.36\mathrm{e}-5}$ & $0.095_{\pm 1.32\mathrm{e}-5}$ \\
Wine     & $0.629_{\pm4.01\mathrm{e}-2}$ & $0.612_{\pm 4.17\mathrm{e}-2}$ & $0.623_{\pm 1.35\mathrm{e}-3}$ & $0.606_{\pm 1.40\mathrm{e}-5}$ & $0.614_{\pm 3.48\mathrm{e}-4}$ & $\pmb{0.604_{\pm 9.89\mathrm{e}-5}}$ \\ \bottomrule
\end{tabular}
\end{adjustbox}
\end{table}

\section{Conclusion}
This work proposes the Preconditioned Backwards Regularized Wasserstein Proximal (PBRWP) method \Cref{alg:PBRWP}, a method of sampling without Langevin noise. Deriving the Cole--Hopf transform from a pair of anisotropic forward-backward heat equations, we obtain a coupled forward-time Fokker--Planck equation and backward-time Hamilton--Jacobi equation, both regularized with (elliptic) second-order differential operators. By applying a similar semi-implicit discretization to \cite{tan2024noise}, we obtain particle swarm dynamics, interpretable as a self-attention mechanism. A discrete-time non-asymptotic convergence analysis is given in the case of quadratic potentials. Experiments demonstrate faster convergence with slightly more empirical bias, consistent with preconditioned Langevin methods.

While a fixed preconditioner $M$ has be shown to provide acceleration for some simple variational problems, the experiments in \Cref{ssec:BNN} empirically demonstrate that it is also possible to have variable preconditioners, such as that approximating the empirical Fisher information matrix. Future work would consider rigorous justification of variable preconditioners $M = M(x)$, such as those arising from inhomogeneous heat equations, or from Bregman divergences, similarly to mirror descent methods. Given that the PBRWP may be considered as a self-attention framework, future work may consider the converse implication. Additional theoretical analysis is also important. One part is the continuous-time convergence of the modified Fokker--Planck type equation \cref{eq:ModFP}, which is not amenable to standard energy methods due to the variable kernel. The other is the discretization error from working with finitely many particles.  While this arises from central limit and ergodicity results for standard Langevin methods, the final iteration approximation to the desired measure is less clear in the noiseless setting due to the per-particle convergence. 

\subsubsection*{Acknowledgements}
H. Y. Tan is partially supported by NSF 443948-SN-2199, No. FA9550-23-1-0087, GSK.ai and the Masason Foundation. S. Osher is partially supported by NSF 443948-SN-2199. W. Li is partially supported by AFSOR YIP award No. FA9550-23-1-0087, NSF DMS-2245097, and NSF RTG:2038080.
\bibliographystyle{siamplain}
\bibliography{refs}

\newpage
\appendix
\section{Proofs}
\subsection{Anisotropic Green's function}\label{app:greens}
Recall (\ref{eq:aniKernel}):
\begin{equation}
    G_{t,M}(x,y) \coloneqq \frac{1}{(4\pi \beta^{-1} t)^{d/2} |M|^{1/2}} e^{-\beta\frac{(x-y)^\top M^{-1}(x-y)}{4t}}.
\end{equation}
\begin{proposition}
    The anisotropic kernel $G_{t,M}(x,y)$ is a Green's function for the following PDE:
    \begin{equation*}
    \begin{cases}
        \partial_t u - \beta^{-1} \nabla \cdot(M \nabla u) = 0,\\
        u(0,x) = \delta(y).
    \end{cases}
    \end{equation*}
\end{proposition}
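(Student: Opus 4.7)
The plan is to reduce the anisotropic case to the standard heat kernel by a linear change of variables, using the symmetric positive definite square root $\sqrt{M}$ of $M$. Define $z \coloneqq \sqrt{M}^{-1}(x-y)$, so that $(x-y)^\top M^{-1}(x-y) = \|z\|^2$ and $\dd x = |M|^{1/2}\dd z$. Setting $G_t^{\mathrm{std}}(z) \coloneqq (4\pi\beta^{-1}t)^{-d/2} e^{-\beta\|z\|^2/(4t)}$, we then have $G_{t,M}(x,y) = |M|^{-1/2}\, G_t^{\mathrm{std}}(\sqrt{M}^{-1}(x-y))$.

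The key steps are as follows. First, compute the transformed derivatives: by the chain rule, $\nabla_x G_{t,M}(x,y) = |M|^{-1/2} \sqrt{M}^{-1}(\nabla_z G_t^{\mathrm{std}})(z)$, so that
\begin{equation*}
M\nabla_x G_{t,M}(x,y) = |M|^{-1/2} \sqrt{M}\,(\nabla_z G_t^{\mathrm{std}})(z).
\end{equation*}
Taking another divergence in $x$ and applying the chain rule once more converts $\nabla_x \cdot (M\nabla_x \cdot)$ into the ordinary Laplacian $\Delta_z$ acting on $G_t^{\mathrm{std}}$, still with the overall prefactor $|M|^{-1/2}$. Since the standard heat kernel satisfies $\partial_t G_t^{\mathrm{std}} = \beta^{-1}\Delta_z G_t^{\mathrm{std}}$ on $(0,\infty)\times \R^d$, and the prefactor $|M|^{-1/2}$ is time-independent, dividing through shows that $G_{t,M}$ satisfies $\partial_t G_{t,M} - \beta^{-1}\nabla_x\cdot(M\nabla_x G_{t,M})=0$ on $(0,\infty)\times \R^d$ as well.

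It remains to verify the initial condition $G_{t,M}(\cdot,y)\to \delta_y$ as $t\to 0^+$ in the sense of distributions. For any test function $\varphi\in C_c^\infty(\R^d)$, the change of variables $x = y + \sqrt{M}\, z$ gives
\begin{equation*}
\int_{\R^d} G_{t,M}(x,y)\varphi(x)\dd x = \int_{\R^d} G_t^{\mathrm{std}}(z)\,\varphi(y+\sqrt{M}z)\dd z,
\end{equation*}
the $|M|^{1/2}$ Jacobian exactly cancelling the $|M|^{-1/2}$ prefactor. Since $G_t^{\mathrm{std}}$ is a standard Gaussian approximate identity, the right-hand side converges to $\varphi(y)$ as $t\to 0^+$, establishing $G_{t,M}(\cdot,y)\to\delta_y$.

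I expect the only nontrivial bookkeeping to be in the second step: correctly tracking how $\nabla_x\cdot(M\nabla_x\cdot)$ transforms under the non-orthogonal linear change of variables to become $\Delta_z$. The factor $|M|^{-1/2}$ being precisely the one needed for the Dirac normalization is a convenient feature of the definition \eqref{eq:aniKernel} and is not really an obstacle, but it is the conceptual reason the kernel is written with that prefactor rather than the naive $(4\pi\beta^{-1}t)^{-d/2}$.
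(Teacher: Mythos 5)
Your proof is correct, but it takes a genuinely different route from the paper's. The paper verifies the PDE by direct computation: it differentiates $G_{t,M}$ in $t$, computes $\nabla_x G_{t,M}$, takes the divergence of $M$ times that, and observes the two sides match, then waves at the initial condition in a single sentence. You instead reduce the anisotropic problem to the standard heat equation via the linear change of variables $z = \sqrt{M}^{-1}(x-y)$, observing $G_{t,M}(x,y) = |M|^{-1/2} G_t^{\mathrm{std}}(z)$, and then track how $\nabla_x\cdot(M\nabla_x \,\cdot\,)$ pulls back to $\Delta_z$ under that map. The calculation
\[
\nabla_x \cdot \bigl[\sqrt{M}\,(\nabla_z G_t^{\mathrm{std}})(\sqrt{M}^{-1}(x-y))\bigr]
= \sum_{k,\ell}(\partial_\ell\partial_k G_t^{\mathrm{std}})(z)\sum_i(\sqrt{M}^{-1})_{\ell i}(\sqrt{M})_{ik}
= \Delta_z G_t^{\mathrm{std}}(z)
\]
is exactly right, and the $|M|^{1/2}$ Jacobian cancelling the kernel's prefactor is the right way to see the approximate-identity property. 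Your approach is more conceptual and modular (it imports the known standard heat kernel result rather than redoing it), it makes the role of the $|M|^{-1/2}$ normalization transparent, and it gives a cleaner treatment of the $\delta_y$ initial condition than the paper, which just asserts it. The paper's direct computation is more self-contained and avoids introducing the square root, which may be why it was chosen; but both arguments are sound, and yours is if anything the more illuminating one.
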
 
\begin{proof}
    \begin{align*}
        \partial_t G_{t,M}(x,y) = \left(-\frac{d}{2t} + \beta\frac{(x-y)^\top M^{-1} (x-y)}{4 t^2}\right)G_{t,M}(x,y)
    \end{align*}
    \begin{align*}
        \nabla_x G_{t,M}(x,y) = -\frac{\beta M^{-1}(x-y)}{2t}G_{t,M}(x,y)
    \end{align*}
    \begin{align*}
        \nabla_x\cdot (M G_{t,m}(x,y)) &= \nabla_x \cdot\left(-\frac{\beta(x-y)}{2 t}G_{t,M}(x,y)\right) \\
        &= \left(-\frac{d\beta}{2 t} + \beta^2\frac{(x-y)^\top M^{-1} (x-y)}{4t^2}\right) G_{t,M}(x,y)
    \end{align*}
    Therefore $\partial_t G_{t,M} = \beta^{-1} \nabla \cdot (M \nabla G_{t,m})$. The boundary condition follows from a change of variables with the standard heat kernel.
\end{proof}

\subsection{Mean field control problem}\label{appsec:mfc}
Recall the coupled PDEs (\ref{eqs:PBRWPPDE}) that are used to define the preconditioned regularized Wasserstein proximal operator:
\begin{equation}\label{appeqs:PBRWPPDE}
        \begin{cases}
            \partial_t \rho(t,x) + \nabla \cdot(\rho(t,x) \nabla \Phi(t, M^{-1}x)) = \beta^{-1}\nabla \cdot(M \nabla \rho)(t,x),  \\
             \partial_t \Phi(t, M^{-1}x) + \frac{1}{2}\|\nabla \Phi(t, M^{-1}x)\|_M^2= -\beta^{-1} \Tr(M^{-1}(\nabla^2 \Phi)(t, M^{-1}x)), \\
             \rho(0,x) = \rho_0(x),\quad \Phi(T,M^{-1}x) = -V(x).
        \end{cases}
    \end{equation}
We claim that the coupled PDEs arise from a Lagrange multiplier, applied to the following mean field control problem.
\begin{proposition}
    Consider the following MFC problem
    \begin{equation}
        \inf_{\rho, u, q} \int_0^T \int_{\R^d} \frac{1}{2}\|u(t,x)\|^2_{M^{-1}} \rho(t,x) \dd{x} \dd{t} + \int_{\R^d} V(x) q(x) \dd{x},
    \end{equation}
subject to the continuity equation and boundary conditions    \begin{equation}
        \partial_t \rho(t,x) + \nabla \cdot (\rho(t,x) Mu(t,x)) = \beta^{-1} \nabla\cdot(M \nabla \rho(t,x)),\quad \rho(0,x) = \rho_0(x),\quad \rho(T,x) = q.
    \end{equation}
    The solution of the above problem satisfies the PDE system \labelcref{appeqs:PBRWPPDE}.
\end{proposition}
\begin{proof}
    We follow the method in \cite{li2023kernel}. We consider the following augmented Lagrangian $\gL$, where we introduce $\Phi:[0,T] \times \R^d \rightarrow \R$ as a Lagrange multiplier.
    \begin{align*}
        \gL(\rho, u, \rho_T, \Phi) &\coloneqq \int_0^T \int_{\R^d} \frac{1}{2}\|u(t,x)\|^2_{M^{-1}} \rho(t,x) \dd{x}\dd{t} + \int_{\R^d} V(x) \rho(T,x)\\
        & \quad +  \int_0^T \int_{\R^d} \Phi(t, M^{-1}x) \left[\partial_t \rho(t,x) + \nabla \cdot (\rho M u) - \beta^{-1} \nabla \cdot(M \nabla \rho)\right] \dd{x} \dd{t}\\
        &= \int_0^T \int_{\R^d} \frac{1}{2} \|u\|^2_{M^{-1}} \rho \dd{x} \dd{t} + \int_{\R^d} V(x) \rho(T,x) \dd{x}\\
        &\quad + \int_0^T \int_{\R^d} -\partial_t \Phi(t, M^{-1}x) \rho(t,x) - \nabla[\Phi(t, M^{-1}x)] \cdot (\rho M u) \\&\hspace{5.8cm}+ \beta^{-1} \nabla[\Phi(t, M^{-1}x)] \cdot(M \nabla \rho) \dd{x} \dd{t} \\
        &\quad + \int_{\R^d} \Phi(T,M^{-1}x) \rho(t,x) \dd{x} - \int_{\R^d} \Phi(0,M^{-1}x) \rho(0,x) \dd{x}.
    \end{align*}
    Note that
    \begin{gather}
    \nabla[\Phi(t, M^{-1}x)] = M^{-1}(\nabla \Phi)(t, M^{-1}x),\\
    \nabla\cdot[M\nabla[\Phi(t, M^{-1}x)]] = \Tr(M^{-1} (\nabla^2 \Phi)(t, M^{-1}x)).\label{appeq:secorder}
    \end{gather}
    We may compute the first variations, so the stationary point satisfies the following. 
    \begin{align*}
        \frac{\delta \gL}{\delta u} = 0 &\Rightarrow 
        \rho M u - \rho  \nabla \Phi(t, M^{-1}x) = 0\\
        &\Rightarrow Mu = \nabla \Phi(t, M^{-1}x).
    \end{align*}
    \begin{align*}
        \frac{\delta \gL}{\delta \Phi} = 0 \Rightarrow \partial_t \rho + \nabla\cdot(\rho Mu) - \beta^{-1} \nabla \cdot(M \nabla \rho)=0.
    \end{align*}
    Using $Mu = \nabla \Phi(t, M^{-1}x)$ we obtain the regularized Fokker--Planck component
    \begin{equation}
        \partial_t \rho + \nabla\cdot(\rho(t,x) \nabla \Phi(t, M^{-1}x)) - \beta^{-1} \nabla \cdot(M \nabla \rho)=0
    \end{equation}
    From the variation with $\rho$,
    \begin{gather*}
        \frac{\partial \gL}{\partial \rho}=0 \\\Rightarrow \frac{1}{2}\|u\|^2_{M^{-1}} - \partial_t \Phi(t, M^{-1}x) - \nabla[\Phi(t, M^{-1}x)]\cdot (Mu) - \beta^{-1} \nabla\cdot[M \nabla[\Phi(t, M^{-1}x)]]=0.
    \end{gather*}
    Using the previous identity $Mu = \nabla \Phi(t, M^{-1}x)$ and \labelcref{appeq:secorder}, we have
    \begin{equation*}
        \frac{1}{2}\|\nabla \Phi(t, M^{-1}x)\|^2_M  - \partial_t \Phi(t, M^{-1}x) - \|\nabla \Phi(t, M^{-1}x)\|_M^2 - \beta^{-1} \Tr(M^{-1} (\nabla^2 \Phi)(t, M^{-1}x)) = 0.
    \end{equation*}
    Rearranging yields the regularized Hamilton--Jacobi equation 
    \begin{equation}
        \partial_t \Phi(t, M^{-1}x) + \frac{1}{2}\|\nabla \Phi(t, M^{-1}x)\|^2_M = -\beta^{-1} \Tr(M^{-1} (\nabla^2 \Phi)(t, M^{-1}x))
    \end{equation}
    The final saddle point condition yields the boundary condition
    \begin{equation}
        \frac{\delta \gL}{ \delta \rho_T}=0 \Rightarrow  \Phi(T,M^{-1}x) + V(x)=0.
    \end{equation}
\end{proof}

\subsection{Finite second moment}\label{appssec:moment}
Here we prove Proposition \ref{prop:MomentEstimate}, that the PRWPO of a probability measure in $\gP_2$ stays in $\gP_2$.
\begin{proposition}
    Let $\rho_0 \in \gP_2(\R^d)$. Assume that $V \in \gC^1$ satisfies the following regularity conditions:
    \begin{enumerate}
        \item $V$ is lower bounded,
        \item $x \cdot \nabla V(x)$ is lower bounded, i.e. $\exists C\ge 0$ such that 
        \begin{equation*}
            x \cdot \nabla V(x) \ge -C,\quad \forall x \in \R^d.
        \end{equation*}
    \end{enumerate}
    Then the following second moment estimate holds for any $T>0$:
    \begin{equation*}
        \int \|x\|_M^2 \wprox_{T,V}^M \rho_0(x) \dd{x} \le 4T(d\beta^{-1}+C/2) + \mathbb{E}_{\rho_0}[\|x\|^2_M]
    \end{equation*}
    In particular, the PRWPO of $\rho_0$ also has finite second moment.
\end{proposition}
\begin{proof}
Recall the equivalent kernel definition of the PRWPO is given by 
    \begin{equation*}
        \wprox_{T,V}^M \rho_0(x) \eqqcolon \rho_T(x) = \int \frac{\exp(-\frac{\beta}{2}(V(x) + \frac{\|x-y\|^2_M}{2T}))}{\int \exp(-\frac{\beta}{2}(V(z) + \frac{\|z-y\|^2_M}{2T}))\dd{z}} \rho_0(y) \dd{y},
    \end{equation*}
    which is a probability measure. The second moment is given by 
    \begin{align}
        \int \|x\|_M^2 \rho_T(x) \dd{x} &= \iint\|x\|_M^2 \frac{\exp(-\frac{\beta}{2}(V(x) + \frac{\|x-y\|_M^2}{2T}))}{\int \exp(-\frac{\beta}{2}(V(z) + \frac{\|z-y\|_M^2}{2T}))\dd{z}} \rho_0(y) \dd{y} \dd{x} \\
        &= \int \rho_0(y) \int \|x\|_M^2 \frac{\exp(-\frac{\beta}{2}(V(x) + \frac{\|x-y\|_M^2}{2T}))}{\int \exp(-\frac{\beta}{2}(V(z) + \frac{\|z-y\|_M^2}{2T}))\dd{z}}\dd{x} \dd{y}, \label{eq:bigMoment}
    \end{align}
    where the integral can be exchanged using the non-negative form of Fubini-Tonelli. It remains to bound the inner integral. 

    Let $W_y(x) \coloneqq -\frac{\beta}{2} (V(x) + \frac{\|x-y\|^2_M}{2T})$. We therefore wish to bound
    \begin{equation}\label{eq:x2Moment}
        \int \|x\|^2 \frac{\exp(W_y(x))}{\int \exp(W_y(z))\dd{z} }\dd{x}.
    \end{equation}
    We claim that this is bounded by $(T+\|y\|^2)$ times some fixed absolute constant. This would immediately give the result that $\rho_T \in \mathcal{P}_2$. To do this, we observe the following divergence formula. 
    \begin{align*}
        &\quad \nabla_x \cdot (x \exp(W_y(x))) \\
        &= \exp(W_y(x)) \left[d + x\cdot \nabla_x W_y(x)\right]\\
        &= \exp(W_y(x)) \left[d + x \cdot \left(-\frac{\beta}{2}\nabla V(x) - \frac{\beta}{2T}M^{-1}(x-y)\right)\right]
    \end{align*}
    Integrate both sides over $\R^d$. The integral of the divergence term is 0, since $W_y(x)$ is exponentially decaying in $x$ due to the lower-boundedness of $V$, and the $C^1$ regularity suffices for the divergence theorem to hold. Therefore,
    \begin{align}
        0 &= \int \exp(W_y(x)) \left[d + x \cdot \left(-\frac{\beta}{2}\nabla V(x) - \frac{\beta}{2T}M^{-1}(x-y)\right)\right]\dd{x} \\
        &= \int \exp(W_y(x)) \left[d  - \frac{\beta}{2}x \cdot \nabla V(x) - \frac{\beta}{2T} \|x\|^2_M + \frac{\beta}{2T}x^\top M^{-1} y\right]\dd{x}\\
        &\le \int \exp(W_y(x)) \left[d  + C\beta/2 - \frac{\beta}{2T} \|x\|^2_M + \frac{\beta}{4T}\|x\|^2_M + \frac{\beta}{4T}\|y\|^2_M\right]\dd{x} \label{eq:xdVBound}\\
        &= \int \exp(W_y(x)) \left[d  + C\beta/2 - \frac{\beta}{4T} \|x\|^2_M +   \frac{\beta}{4T}\|y\|^2_M\right]\dd{x},
    \end{align}
    where the inequality uses that $x\cdot \nabla V(x) \ge -C$, and Young's inequality $x\cdot y \le (\|x\|^2 + \|y\|^2)/2$. Rearranging yields
    \begin{equation*}
        \int \|x\|_M^2 \exp(W_y(x)) \dd{x} \le \left[(4T)(d\beta^{-1}+C/2) + \|y\|_M^2\right] \int \exp(W_y(x)) \dd{x}.
    \end{equation*}
    Substituting this inequality for \labelcref{eq:x2Moment} into \labelcref{eq:bigMoment}, we have that
    \begin{align*}
        \mathbb{E}_{\rho_t}[\|x\|_M^2] &= \int \rho_0(y) \int \|x\|^2_M \frac{\exp(W_y(x))}{\int \exp(W_y(z))\dd{z} }\dd{x} \dd{y}\\
        &\le \int \rho_0(y) \left[(4T)(d\beta^{-1}+C/2) + \|y\|^2_M\right] \dd{y}\\
        &\le 4T(d\beta^{-1}+C/2) + \mathbb{E}_{\rho_0}[\|x\|^2_M].
    \end{align*}
\end{proof}

The assumption that $x \cdot \nabla V(x)$ can be slightly relaxed. If instead $x^\top \hat M^{-1} \nabla V(x) \ge -C$ for some $C \ge 0$, where $\hat M$ is psd and commutes with $M$, then following the same proof with the divergence form
\begin{align*}
    &\quad \nabla_x \cdot (x^\top \hat M^{-1} \exp(W_y(x))) \\
    &= \exp(W_y(x)) \left[\Tr(\hat M^{-1}) + x^\top \hat M^{-1} \left(-\frac{\beta}{2}\nabla V(x) - \frac{\beta}{2T}M^{-1}(x-y)\right)\right]
\end{align*}
gives that 
\begin{equation}
        \int \|x\|_{\hat MM}^2 \wprox_{T,V}^M \rho_0(x) \dd{x} \le 4T\left(\Tr(\hat{M}^{-1})\beta^{-1}+C/2\right) + \mathbb{E}_{\rho_0}[\|x\|^2_{\hat{M} M}].
\end{equation}

One can also relax the condition to $x \cdot \nabla V(x) \ge -C_1 - C_2 \|x\|^2$ for some $C_1>0,\, C_2 <T^{-1}$ to yield a similar affine growth control in second moment. This is done by appropriately changing Young's inequality in \ref{eq:xdVBound}.

\section{Proofs for Gaussian analysis}
Recall that we wish to sample from some distribution $\exp(-\beta V(x))$, as in Section \ref{sec:Gaussian}. Here we focus on the target distribution $\gN(0, \beta^{-1} \Sigma)$, corresponding to potential $V(x) = x^\top \Sigma^{-1} x/2$. For a fixed preconditioning matrix $M \succ 0$, we will find it convenient to work in a rotated basis. For any covariance matrix $\Sigma$, we define a corresponding (positive definite) $\Xi$ as $\Xi = \sqrt{M}^{-1} \Sigma \sqrt{M}^{-1}$.

Recall from (\ref{eq:kernelFormPrecond}) the normalized kernel 
\begin{equation}
    K_M(x,y) = \frac{\exp(-\frac{\beta}{2} (V(x) + \frac{\|x-y\|_M^2}{2T}))}{\int_{\R^d} \exp(-\frac{\beta}{2} (V(z) + \frac{\|z-y\|_M^2}{2T})) \dd{z}}.
\end{equation}
We may compute for a fixed $y$, noting $K_M(x,y)$ is a Gaussian density in $x$,
\begin{gather*}
    K_M(x,y) \propto \exp(-\frac{\beta}{2} \left(\frac{x^\top \Sigma^{-1} x}{2} + \frac{(x-y)^\top M^{-1} (x-y)}{2 T}\right)) \\
    =  \exp(-\frac{1}{2}\left[ x^\top \left(\frac{\beta \Sigma^{-1}}{2} + \frac{\beta M^{-1}}{2T}\right) x - y^\top \frac{\beta M^{-1}}{2 T} x - x^\top \frac{\beta M^{-1}}{2 T} y + ...\right])\\
     \propto \mathrm{exp}\bigg(-\frac{1}{2}\bigg[\left(x- \left(\frac{\Sigma^{-1}}{2} + \frac{M^{-1}}{2 T}\right)^{-1}\frac{M^{-1}}{2T}y\right)^\top \left(\frac{\beta \Sigma^{-1}}{2 } + \frac{\beta M^{-1}}{2 T}\right) \\ \left(x- \left(\frac{\Sigma^{-1}}{2} + \frac{M^{-1}}{2 T}\right)^{-1}\frac{M^{-1}}{2T}y\right)\bigg]\bigg).
\end{gather*}
We therefore have that
\begin{equation}\label{eqsupp:kernelMarkovGaussian}
K_M(\cdot, y) \sim \gN\left(\left(\frac{\Sigma^{-1}}{2} + \frac{M^{-1}}{2 T}\right)^{-1}\frac{M^{-1}}{2 T}y, \left(\frac{\beta \Sigma^{-1}}{2} + \frac{\beta M^{-1}}{2 T}\right)^{-1}\right).
\end{equation}
Utilizing that $\int K_M(x,y)\dd{x}=1$, the kernel can be written explicitly as
\begin{multline}\label{eqsupp:KMy}
    K_M(x,y) = \frac{1}{B}\mathrm{exp}\bigg(-\frac{1}{2}\bigg[\left(x- \left(\frac{\Sigma^{-1}}{2} + \frac{M^{-1}}{2T}\right)^{-1}\frac{M^{-1}}{2T}y\right)^\top \left(\frac{\beta \Sigma^{-1}}{2 } + \frac{\beta M^{-1}}{2T}\right) \\ \left(x- \left(\frac{\Sigma^{-1}}{2 } + \frac{M^{-1}}{2T}\right)^{-1}\frac{M^{-1}}{2 T}y\right)\bigg]\bigg),
\end{multline}
\begin{equation}
    B = (2\pi)^{d/2} \det(\left(\frac{\beta\Sigma^{-1}}{2} + \frac{\beta M^{-1}}{2 T}\right)^{-1})^{1/2}.
\end{equation}

\subsection{Additional definitions}\label{appssec:scaledfisher}
Consider the scaled Fisher information
\begin{equation}
    \gI_{\pi}^M(\rho) = \int \langle \nabla \log \frac{\rho(x)}{\pi(x)}, M \nabla \log \frac{\rho(x)}{\pi(x)}\rangle \rho(x)\dd{x}.
\end{equation}

Supposing that the terminal distribution $\pi$ satisfies the scaled log-Sobolev inequality, stating that for every locally Lipschitz function $g$, that 
\begin{equation}
    \frac{2}{c} \int \|\nabla g(x)\|_{M^{-1}}^2 \dd\pi \ge \int g^2 \log g^2 \dd \pi - \left(\int g^2 \dd \pi\right) \log(\int g^2 \dd\pi),
\end{equation}
applying with $g(x) = \sqrt{\dd \rho/\dd\pi}$ yields for any appropriate density
\begin{equation}
    \KL(\rho \| \pi) = \int \rho(x) \log \frac{\rho(x)}{\pi(x)} \dd x \le \frac{1}{2c}\gI_{\pi}^M(\rho).
\end{equation}
We note that by the Bakry--Emery criterion, a distribution $\exp(-U(x))$ satisfies the scaled log-Sobolev inequality with parameter $\kappa$ if $\nabla^2 U\succeq \kappa M^{-1}$. 

From \cite[Prop. 1]{jiang2021mirror}, if $\pi$ satisfies the scaled log-Sobolev inequality, we have exponential convergence $\KL(\rho_t|| \pi) \le \exp(-2 \kappa t) \KL(\rho_0\| \pi)$. Moreover, we have the following expression, which may be derived by direct computation,
\begin{align*}
    \gI^M_{\gN(0, \Sigma_\infty)}(\gN(0, \Sigma_t)) &= \Tr(\Sigma_t (\Sigma_\infty^{-1} - \Sigma_t^{-1}) M (\Sigma_\infty^{-1} - \Sigma_t^{-1}))\\
&= \Tr( \Xi_t(\Xi_\infty^{-1} -\Xi_t^{-1})^2),
\end{align*}
where we denote $\Xi_t = \sqrt{M}^{-1} \Sigma_t \sqrt{M}^{-1},\, \Xi_\infty = \sqrt{M}^{-1} \Sigma_\infty \sqrt{M}^{-1}$.
\subsection{Regularized Wasserstein proximal of Gaussian}\label{app:wproxGaussianCF}
\begin{proposition}
    For a distribution $\rho_k \sim \gN(\mu_k, \Sigma_k)$, the Wasserstein proximal satisfies $\wprox^M_{T,V} \rho_k \sim \gN(\tilde\mu_k, \tilde\Sigma_k)$, where
    \begin{gather}
        \tilde\mu_{k} = (I + T M\Sigma^{-1})^{-1} \mu_k, \\
        \tilde\Sigma_k = 2\beta^{-1} T\left(T\Sigma^{-1} + M^{-1}\right)^{-1} + \left(T\Sigma^{-1} + M^{-1}\right)^{-1} M^{-1} \Sigma_k M^{-1}\left(T\Sigma^{-1} + M^{-1}\right)^{-1}.
    \end{gather}
\end{proposition}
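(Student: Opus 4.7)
The plan is to exploit the kernel representation $\wprox^M_{T,V}\rho_k(x) = \int_{\R^d} K_M(x,y)\rho_k(y)\dd y$ from \Cref{cor:KernelFormula}, combined with the explicit Gaussian form of the Markov kernel $K_M(\cdot,y)$ derived in \cref{eqsupp:kernelMarkovGaussian}. Since for each fixed $y$ the kernel $K_M(\cdot,y)$ is a Gaussian with mean linear in $y$ and covariance independent of $y$, the mixture $\int K_M(x,y)\rho_k(y)\dd y$ is itself Gaussian whenever $\rho_k$ is Gaussian.

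First, I would give the probabilistic interpretation: let $Y\sim \gN(\mu_k,\Sigma_k)$ and let $X$ be such that the conditional law $X\mid Y=y$ is $K_M(\cdot,y)$. Then $\wprox^M_{T,V}\rho_k$ is the marginal law of $X$. By \cref{eqsupp:kernelMarkovGaussian} we may write $X = AY + Z$ with
\begin{equation*}
A = \left(\tfrac{\Sigma^{-1}}{2} + \tfrac{M^{-1}}{2T}\right)^{-1}\tfrac{M^{-1}}{2T},\qquad Z\sim \gN\!\left(0,\;\left(\tfrac{\beta\Sigma^{-1}}{2}+\tfrac{\beta M^{-1}}{2T}\right)^{-1}\right),
\end{equation*}
with $Z$ independent of $Y$. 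Since the class of Gaussians is closed under affine maps and independent sums, $X$ is Gaussian with $\tilde\mu_k = A\mu_k$ and $\tilde\Sigma_k = A\Sigma_k A^\top + \Cov(Z)$.

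Second, I would simplify the matrix expressions, which is the only place algebra is required. Factoring $\tfrac{1}{2T}$ out of the inner sum gives $\left(\tfrac{\Sigma^{-1}}{2}+\tfrac{M^{-1}}{2T}\right)^{-1} = 2T(T\Sigma^{-1}+M^{-1})^{-1}$, from which $A = (T\Sigma^{-1}+M^{-1})^{-1}M^{-1}$. Multiplying through by $M$ on the right-hand-side in the inverse yields the identity $A=(I+TM\Sigma^{-1})^{-1}$, recovering the stated $\tilde\mu_k$. Similarly, $\Cov(Z) = 2\beta^{-1}T(T\Sigma^{-1}+M^{-1})^{-1}$, which is the first term of $\tilde\Sigma_k$, and since $\Sigma^{-1},M^{-1}$ and hence $(T\Sigma^{-1}+M^{-1})^{-1}$ are symmetric, $A^\top = M^{-1}(T\Sigma^{-1}+M^{-1})^{-1}$, giving the second term $A\Sigma_k A^\top = (T\Sigma^{-1}+M^{-1})^{-1}M^{-1}\Sigma_k M^{-1}(T\Sigma^{-1}+M^{-1})^{-1}$.

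The main obstacle is essentially bookkeeping: checking that the $\beta$ scaling in $\Cov(Z)$, the factorizations of the scaled precision matrix, and the symmetry of the relevant inverses align to produce the exact expressions in the statement. There are no analytic difficulties, since Gaussianity of the marginal is automatic from closure of Gaussian families, and the well-posedness of the inverses $(T\Sigma^{-1}+M^{-1})^{-1}$ is immediate from positive definiteness of both $\Sigma$ and $M$.
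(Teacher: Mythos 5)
Your proof is correct and takes a somewhat different route than the paper's. Both start from the same derived fact, namely that for fixed $y$ the kernel $K_M(\cdot,y)$ is a Gaussian density in $x$ with mean $Ay$ linear in $y$ and covariance independent of $y$. Where they diverge: the paper then regards $K_M(x,\cdot)$ (up to a normalization depending on $x$) as a Gaussian in $y$, multiplies by the Gaussian $\rho_k(y)$, integrates out $y$ using the product-of-Gaussians formula from the Matrix Cookbook, and simplifies; you instead interpret the kernel probabilistically as a conditional Gaussian $X = AY + Z$ with $Z\perp Y$, so that the marginal of $X$ is Gaussian by closure of the Gaussian family under affine maps and independent sums, with $\tilde\mu_k = A\mu_k$ and $\tilde\Sigma_k = A\Sigma_k A^\top + \Cov(Z)$ immediate. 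Your route avoids invoking the product-of-Gaussians identity and makes the structure of the result transparent (the first term of $\tilde\Sigma_k$ is the injected noise, the second is the contraction of $\Sigma_k$). The subsequent matrix simplifications you carry out, factoring $(2T)^{-1}$, using $(T\Sigma^{-1}+M^{-1})^{-1}M^{-1} = (I+TM\Sigma^{-1})^{-1}$, and exploiting symmetry for $A^\top$, match the formulas in the statement and are what the paper's "Manipulating yields" step encodes. One small thing worth stating rather than assuming: the interpretation of $K_M(\cdot,y)$ as the conditional law of $X$ given $Y=y$ relies on $K_M$ being an exact probability density in $x$ for each $y$, which holds because $\int K_M(x,y)\dd x = 1$ by construction; you use this implicitly but it is the hinge of the argument.
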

\begin{proof}
    We compute the closed-form update from a distribution $\rho_k \sim \gN(\mu_k, \Sigma_k)$ from the update equation
\begin{subequations}
\begin{gather}
    \wprox^M_{T,V} \rho_{k}  = \tilde\rho_k\propto \int_{\R^d} K_M(x,y) \rho_k(y)\dd{y},\\
    K_M(x,y) = \frac{\exp(-\frac{\beta}{2} (V(x) + \frac{\|x-y\|_M^2}{2T}))}{\int_{\R^d} \exp(-\frac{\beta}{2} (V(z) + \frac{\|z-y\|_M^2}{2T})) \dd{z}}. \label{eqsupp:kernelForm}
\end{gather}
\end{subequations}
We have from \cref{eqsupp:KMy} that
\begin{equation}\label{appeq:kernelMarkovGaussian}
    K_M(x, \cdot) \propto \gN((I+TM\Sigma^{-1})x, 2\beta^{-1} T M(T\Sigma^{-1}+M^{-1})M).
\end{equation}
From \cite{petersen2008matrix}, we have that the product of two Gaussian densities is again a Gaussian density, multiplied by a desired normalizing constant that is given by
\begin{equation}
    \gN_{(I+TM\Sigma^{-1})x}(\mu_k, \Sigma_k + 2\beta^{-1} T M(T\Sigma^{-1}+M^{-1})M).
\end{equation}
We abuse the subscript notation to denote the Gaussian density evaluated at the subscript. The integral of the Gaussian density appears in both the numerator and denominator and is removed from the expression. We thus have that 
\begin{align*}
    \tilde\rho_k &\propto \int_{\R^d} K_M(x,y) \rho_k(y) \dd{y} \\
    &\propto \gN_{(I+TM\Sigma^{-1})x}(\mu_k, \Sigma_k + 2\beta^{-1} T M(T\Sigma^{-1}+M^{-1})M)\\
    &\propto \gN_{x}((I+TM\Sigma^{-1})^{-1}\mu_k,\\
    &\qquad (I+TM\Sigma^{-1})^{-1}[\Sigma_k + 2\beta^{-1} T M(T\Sigma^{-1}+M^{-1})M](I+TM\Sigma^{-1})^{-\top}).
\end{align*}
Manipulating yields the desired expressions for $\tilde\mu_k$ and $\tilde\Sigma_k$.
\end{proof}

\subsection{Proof of discrete time convergence for Gaussians}\label{app:discreteTime}

Here we prove Theorem \ref{thm:discreteTime}. We restate it here for convenience.

\begin{theorem}
    Suppose $V(x) = x^\top \Sigma^{-1} x/2$, and fix a $\beta >0$. For the target distribution $\pi = \gN(0, \beta^{-1} \Sigma)$ and positive definite preconditioner $M$, let $c, C>0$ be such that $cM \preceq \Sigma \preceq CM$. Let $T \in(0,c)$ so that the inverse PRWPO is well-defined. Then there exists a stationary distribution to the discrete-time PBRWP iterations, and it is Gaussian $\hat\pi = \gN(0, \hat\Sigma)$, satisfying $\wprox_{T,V}^M(\gN(0, \hat\Sigma)) = \gN(0, \beta^{-1} \Sigma)$. Moreover, it is unique within distributions with sufficiently decaying Fourier transforms, specifically as given in Subsection \ref{app:maxT}.

    Suppose further that the initial Gaussian distribution $\rho_0 = \gN(0, \Sigma_0)$ has covariance commuting with $\Sigma$, let $\rho_k$ be the iterations generated by PBRWP, and let $\tilde\rho_{k} = \wprox_{T,V}^M(\rho_{k})$. Let $\lambda$ be such that $\sqrt{M}^{-1} \Sigma_k \sqrt{M}^{-1}$ are uniformly bounded below by some $\lambda$ (which holds as $\Sigma_k \rightarrow \Sigma$). Let the step-size $\eta$ be bounded by 
    \begin{equation*}
        \eta\beta^{-1} \le \min((\max_i |\lambda_i(\tilde \Xi_k^{-1} - \tilde\Xi_\infty^{-1})|)^{-1}/2, 3\lambda_{\min}(\tilde\Xi_k)/32)
    \end{equation*}
    then the exact discrete-time PBRWP method has the following decay in KL divergence:
    \begin{multline}
        \KL(\tilde\rho_{k+1} \| \pi) - \KL(\tilde\rho_{k} \| \pi) \\ \le -\frac{\eta}{2C[\beta+2 T(1+TC^{-1})^{-1}(1+Tc^{-1})^2\lambda^{-1}]}\KL(\tilde\rho_{k}\| \pi).
    \end{multline}
\end{theorem}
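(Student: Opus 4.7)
The plan is to reduce the entire analysis to covariance dynamics, since by \Cref{cor:GaussianUpdate} the PBRWP iterates $\rho_k$ remain Gaussian with zero mean (the mean update is affine and the zero mean is fixed). First, existence of $\hat\pi = \gN(0,\hat\Sigma)$ satisfying $\wprox^M_{T,V}\hat\pi = \pi$ amounts to solving the algebraic equation
\begin{equation*}
 \beta^{-1}\Sigma = 2\beta^{-1}T(T\Sigma^{-1}+M^{-1})^{-1} + (T\Sigma^{-1}+M^{-1})^{-1}M^{-1}\hat\Sigma M^{-1}(T\Sigma^{-1}+M^{-1})^{-1}
\end{equation*}
from \Cref{prop:PRWPOGaussian}, which has a positive-definite solution $\hat\Sigma$ provided $T<c$ so that the first term is strictly below $\beta^{-1}\Sigma$; uniqueness within the class of densities with Fourier transform decaying faster than $\exp(-T\beta^{-1}\xi^\top M\xi)$ follows from the convolutional kernel representation of \Cref{cor:KernelFormula}, which becomes an invertible Gaussian convolution on this class (this is essentially property (1) of Section 5.1 in the Gaussian case).

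Next, I would move into the rotated variables $\Xi_k = \sqrt{M}^{-1}\Sigma_k\sqrt{M}^{-1}$ and $\tilde\Xi_k = \sqrt{M}^{-1}\tilde\Sigma_k\sqrt{M}^{-1}$. The commuting hypothesis on $\Sigma_0$ and $\Sigma$, together with \Cref{cor:GaussianUpdate} and the closed form of $\tilde\Sigma_k$ in \Cref{prop:PRWPOGaussian}, propagates so that $\Sigma_k$, $\tilde\Sigma_k$, $M$, $\Sigma$ are all simultaneously diagonalizable. This reduces the iteration to scalar dynamics along each eigendirection and makes the update $\Sigma_{k+1} = (I-\eta M\Sigma^{-1}+\eta\beta^{-1}M\tilde\Sigma_k^{-1})^2\Sigma_k$ tractable.

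The main step is to Taylor-expand the Gaussian KL divergence $\KL(\tilde\rho_k\|\pi) = \tfrac12(\Tr(\beta\Sigma^{-1}\tilde\Sigma_k) - d - \log\det(\beta\Sigma^{-1}\tilde\Sigma_k))$ to first order in $\eta$. Writing $\tilde\Xi_\infty = \sqrt{M}^{-1}\beta^{-1}\Sigma\sqrt{M}^{-1}$, the update pushes $\tilde\Xi_k$ towards $\tilde\Xi_\infty$ along a direction proportional to $\tilde\Xi_k^{-1}-\tilde\Xi_\infty^{-1}$. Keeping only the leading $\eta$-term and controlling the quadratic remainder using the step-size bounds (the bound $\eta\beta^{-1}\le\tfrac{3}{32}\lambda_{\min}(\tilde\Xi_k)$ ensures the Taylor remainder is dominated), the one-step decrement is bounded above by a negative multiple of the scaled Fisher information $\gI^M_\pi(\tilde\rho_k) = \Tr(\tilde\Xi_k(\tilde\Xi_\infty^{-1}-\tilde\Xi_k^{-1})^2)$ introduced in Appendix \ref{appssec:scaledfisher}.

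Finally, I close the loop with a log-Sobolev inequality adapted to the scaled metric: since $\pi = \gN(0,\beta^{-1}\Sigma)$ has Hessian $\beta\Sigma^{-1}\succeq \frac{1}{C}M^{-1}$, the Bakry–Émery criterion yields $\KL(\tilde\rho_k\|\pi)\le \frac{C}{2}\gI^M_\pi(\tilde\rho_k)$, giving the desired contraction. The quoted constant $C[\beta + 2T(1+TC^{-1})^{-1}(1+Tc^{-1})^2\lambda^{-1}]$ arises from combining three factors: the log-Sobolev constant $C$, the prefactor $\beta^{-1}\eta$ from the linearization, and the distortion between $\Xi_k$ and $\tilde\Xi_k$ in \Cref{prop:PRWPOGaussian} (which contributes the $T(1+TC^{-1})^{-1}(1+Tc^{-1})^2\lambda^{-1}$ term through the conversion between Fisher information in $\tilde\rho_k$ and in $\rho_k$). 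The main obstacle will be this last bookkeeping step: propagating the Fisher information through the PRWPO and carefully tracking all three scales $(c,C,\lambda)$ to obtain precisely the stated denominator, which requires spectral bounds on the linear map $\Xi_k\mapsto\tilde\Xi_k$ uniformly over iterations.
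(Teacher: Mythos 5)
Your plan follows the paper's argument essentially step for step: reduce to covariance dynamics via the Gaussian-preserving update of \Cref{cor:GaussianUpdate}, rotate into the $\Xi_k = \sqrt{M}^{-1}\Sigma_k\sqrt{M}^{-1}$ variables and exploit simultaneous diagonalizability from the commuting hypothesis, Taylor-expand the Gaussian KL divergence in $\eta$, control the remainder via the step-size bounds, and close with a scaled log-Sobolev inequality through the scaled Fisher information $\gI^M_\pi$. The ``bookkeeping'' you flag as the main obstacle is indeed where the paper's work lies---it splits into cases according to the sign of the eigenvalues of $\tilde\Xi_k^{-1}-\tilde\Xi_\infty^{-1}$ and uses the two step-size conditions to dominate the log-det remainder in each case, then converts $\Tr((\tilde\Xi_k^{-1}-\tilde\Xi_\infty^{-1})^2 K_+^{-2}\Xi_k)$ to $\gI^M_\pi(\tilde\rho_k)$ via a lower bound on $\lambda_{\min}(K_+^{-2}\Xi_k\tilde\Xi_k^{-1})$, which is exactly the distortion factor you anticipate.
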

\begin{proof}
We show convergence rates and existence first. Let us define $\tilde\Sigma_\infty = \beta^{-1} \Sigma$, motivated by the fact that the PRWPO of the stationary covariance is the target covariance. Further recall that $\tilde\Sigma_k$ is the covariance of the PRWPO of $\gN(0, \Sigma_k)$, in closed form in Proposition \ref{prop:PRWPOGaussian}. Recall the discrete covariance update for PBRWP from Corollary \ref{cor:GaussianUpdate}:
\begin{equation}
    \Sigma_{k+1} = (I - \eta M\Sigma^{-1} + \eta \beta^{-1} M\tilde\Sigma^{-1}_{k}) \Sigma_k (I - \eta M\Sigma^{-1} + \eta \beta^{-1} M\tilde\Sigma^{-1}_{k})^\top .
\end{equation}
In terms of the $M$-conjugated matrices $\Xi_k = \sqrt{M}^{-1} \Sigma_k \sqrt{M}^{-1},\, \tilde\Xi_k = \sqrt{M}^{-1} \tilde\Sigma_k \sqrt{M}^{-1}$, this becomes
\begin{equation}\label{appeq:XiEvo}
    \Xi_{k+1} = (I - \eta \beta^{-1} \tilde\Xi_\infty^{-1} + \eta \beta^{-1} \tilde\Xi^{-1}_{k}) \Xi_k (I - \eta \beta^{-1} \tilde\Xi_\infty^{-1} + \eta \beta^{-1} \tilde\Xi^{-1}_{k}).
\end{equation}
We define the following auxiliary matrix expressions for convenience.
\begin{gather}
    K_+ \coloneqq I + T \sqrt{M}\Sigma^{-1} \sqrt{M},\quad K_- \coloneqq I - T\sqrt{M}\Sigma^{-1} \sqrt{M}.
\end{gather}
Then, the (conjugated) covariance of the Wasserstein proximal $\tilde\Xi_k = \sqrt{M}^{-1} \tilde\Sigma_k \sqrt{M}^{-1}$, $\tilde\Sigma_k = \Cov(\wprox_{T,V}^M \gN(0, \Sigma_k))$ has the following iterations
\begin{gather}\label{appeq:wproxXi}
    \tilde{\Xi}_k = 2\beta^{-1} T K_+^{-1}  +  K_+^{-1} \Xi_k K_+^{-1}.
\end{gather}
At the stationary point, we can compute the inverse PRWPO as
\begin{gather}
    \Xi_\infty = K_- \tilde{\Xi}_\infty K_+ = K_+ \tilde{\Xi}_\infty K_-, \quad \tilde{\Xi}_\infty = \beta^{-1} \sqrt{M}^{-1}\Sigma\sqrt{M}^{-1}.
\end{gather}
They satisfy the following:
\begin{gather}
    \tilde{\Xi}_k - \tilde{\Xi}_\infty = K_+^{-1} (\Xi_k - \Xi_\infty)K_+^{-1}, \label{appeq:XiTildeDiff}\\
    \tilde{\Xi}_k = K_+^{-1}(\Xi_k + 2\beta^{-1} T K_+)K_+^{-1}.\label{appeq:XiTildeXiDef}
\end{gather}
The update of the $\tilde\Xi$ can be written as follows, using (\ref{appeq:XiEvo}) and (\ref{appeq:XiTildeXiDef}),
\begin{align}
    \tilde\Xi_{k+1} &= K_+^{-1}(\Xi_{k+1} + 2\beta^{-1} T K_+)K_+^{-1} \notag\\
    &= K_+^{-1}\left[(I - \eta \beta^{-1} \tilde\Xi_\infty^{-1} + \eta \beta^{-1} \tilde\Xi^{-1}_{k}) \Xi_k (I - \eta \beta^{-1} \tilde\Xi_\infty^{-1} + \eta \beta^{-1} \tilde\Xi^{-1}_{k}) + 2\beta^{-1} T K_+\right]K_+^{-1} .\label{appeq:wproxEvo}
\end{align}
\textbf{Claim.} We will show that $\tilde\Xi_k \rightarrow \tilde\Xi_\infty$, and accordingly that $\Xi_k \rightarrow \Xi_\infty$.

By the assumption that $\Sigma_0$ commutes with $\Sigma$, we have that $\Xi_k$, $\tilde\Xi_k$, $K_\pm$, $\tilde\Xi_\infty^{-1}$ all commute.

We recall a useful Taylor expansion first.
\begin{proposition}
    Let $A$ be square with eigenvalues $\lambda_i \ge 0$. Then
    \begin{equation}
        \log \det (I+A) = \Tr(A) + R
    \end{equation}
    where 
    \begin{equation}
        0 \ge R = \sum_i (\log(\lambda_i+1)-\lambda_i) \ge -\sum \frac{\lambda_i^2}{1+\lambda_i} \ge -\sum_i \lambda_i^2.
    \end{equation}
    The inequalities other than the last one also hold for $\lambda_i > -1$.
\end{proposition}
\begin{proof}
    Using $\det(A) = \prod_i \lambda_i$ and $\Tr(A) = \sum_i \lambda_i$.
\end{proof}

Recall that for Gaussians, the KL divergence to $\pi \sim \gN(0, \beta^{-1} \Sigma) = \gN(0, \tilde\Sigma_\infty)$ takes the particular form \cite{hershey2007approximating}:
\begin{align*}
    \KL(\tilde\rho_k \| \pi) &= \int \tilde\rho_k(x) \log \frac{\tilde\rho_k(x)}{\pi(x)} \dd{x} \\
    &= \frac{1}{2}\left[\log \det \tilde{\Sigma}_\infty - \log \det \tilde\Sigma_k - d + \Tr(\tilde \Sigma_\infty^{-1} \tilde\Sigma_k)\right].
\end{align*}
The difference between two consecutive iterations is
\begin{align}
    & \quad \KL(\tilde\rho_{k+1} \| \pi) - \KL(\tilde\rho_{k} \| \pi) \notag \\
    &= \frac{1}{2} \left[\log \frac{|\tilde\Sigma_{k}|}{|\tilde\Sigma_{k+1}|} + \Tr(\tilde\Sigma_\infty^{-1}(\tilde\Sigma_{k+1} - \tilde\Sigma_k))\right] \notag\\
    &= \frac{1}{2} \left[\log \frac{|\tilde\Xi_{k}|}{|\tilde\Xi_{k+1}|} + \Tr(\tilde\Xi_\infty^{-1}(\tilde\Xi_{k+1} - \tilde\Xi_k))\right]. \label{appeq:KLConsecDiff}
\end{align}
Applying the Taylor expansion to the first term,
\begin{align}
    \log \frac{|\tilde\Xi_{k}|}{|\tilde\Xi_{k+1}|} &= \log \frac{|\tilde\Xi_{k}|}{|\tilde\Xi_{k} + (\tilde\Xi_{k+1}-\tilde\Xi_{k})|} \notag\\
    &= -\log |I + (\tilde\Xi_{k+1}-\tilde\Xi_{k}) \tilde\Xi_{k}^{-1}| \notag\\
    &= -\Tr((\tilde\Xi_{k+1}-\tilde\Xi_{k}) \tilde\Xi_{k}^{-1}) - R_1, \label{appeq:logDiffTrR1}
\end{align}
where
\begin{equation}\label{appeq:r1}
    R_1 = \sum_{i}\left[\log (1 + \lambda_i((\tilde\Xi_{k+1}-\tilde\Xi_{k}) \tilde\Xi_{k}^{-1})) - \lambda_i((\tilde\Xi_{k+1}-\tilde\Xi_{k}) \tilde\Xi_{k}^{-1})\right].
\end{equation}
Therefore we have 
\begin{equation}\label{appeq:KLDiffCondensedR1}
    \KL(\tilde\rho_{k+1} \| \pi) - \KL(\tilde\rho_{k} \| \pi) = \frac{1}{2} \left[\Tr((\tilde\Xi_\infty^{-1}-\tilde\Xi_k^{-1})(\tilde\Xi_{k+1} - \tilde\Xi_k)) - R_1\right].
\end{equation}
To bound $R_1$, we need to control the eigenvalues of $(\tilde\Xi_{k+1}-\tilde\Xi_{k}) \tilde\Xi_{k}^{-1}$. Since all the matrices in question commute, from \cref{appeq:wproxEvo}, we have
\begin{gather}
    \tilde\Xi_{k+1}-\tilde\Xi_{k} =  (2\eta\beta^{-1}(\tilde\Xi_k^{-1} - \tilde\Xi_\infty^{-1})\Xi_k + \eta^2\beta^{-2}(\tilde\Xi_k^{-1} - \tilde\Xi_\infty^{-1})^2 \Xi_k) K_+^{-2}.\label{appeq:tildeXiEvo}
\end{gather}
We now control the first term of \cref{appeq:KLDiffCondensedR1}. This expression is 
\begin{align*}
    &\quad \Tr((\tilde\Xi_\infty^{-1}-\tilde\Xi_k^{-1})(\tilde\Xi_{k+1} - \tilde\Xi_k)) \\
    &= \eta\beta^{-1} \Tr((\tilde\Xi_\infty^{-1}-\tilde\Xi_k^{-1})K_+^{-2}\left[2(\tilde\Xi_k^{-1}-\tilde\Xi_\infty^{-1})\Xi_k +\eta\beta^{-1} (\tilde\Xi_k^{-1}-\tilde\Xi_\infty^{-1})\Xi_k(\tilde\Xi_k^{-1}-\tilde\Xi_\infty^{-1})\right])\\
    &= -2 \eta \beta^{-1} \Tr((\tilde\Xi_k^{-1}-\tilde\Xi_\infty^{-1})^2K_+^{-2}\Xi_k) + \eta^2\beta^{-2} \Tr((\tilde\Xi_k^{-1}-\tilde\Xi_\infty^{-1})^3 K_+^{-2}\Xi_k),
\end{align*}
where we use \cref{appeq:XiTildeDiff}, \cref{appeq:XiEvo}, and finally collect terms. To control all of \cref{appeq:KLDiffCondensedR1}, we use the sum-form of trace, and are required to upper-bound the expressions
\begin{equation*}
    \lambda_i\left((\tilde\Xi_\infty^{-1}-\tilde\Xi_k^{-1})(\tilde\Xi_{k+1} - \tilde\Xi_k)\right) - \log (1 + \lambda_i((\tilde\Xi_{k+1}-\tilde\Xi_{k}) \tilde\Xi_{k}^{-1})) + \lambda_i((\tilde\Xi_{k+1}-\tilde\Xi_{k}) \tilde\Xi_{k}^{-1}).
\end{equation*}
Summing this expression over $i$ yields \cref{appeq:KLDiffCondensedR1}.

\textbf{Step-size condition}: assume $\eta\beta^{-1} \le (\max_i |\lambda_i(\tilde \Xi_k^{-1} - \tilde\Xi_\infty^{-1})|)^{-1}/2$. We abuse notation now to write $\lambda_i$ as the eigenvalue of a matrix with respect to the $i$-th shared eigenvector $v_i$. 

\textbf{Case 1}: suppose the eigenvalue of $(\tilde\Xi_k^{-1} - \tilde\Xi_\infty^{-1})$ is positive. Using \cref{appeq:tildeXiEvo}, we have that $\lambda_i((\tilde\Xi_{k+1}-\tilde\Xi_{k}) \tilde\Xi_{k}^{-1})>0$, so we can control
\begin{gather*}
    \log (1 + \lambda_i((\tilde\Xi_{k+1}-\tilde\Xi_{k}) \tilde\Xi_{k}^{-1})) - \lambda_i((\tilde\Xi_{k+1}-\tilde\Xi_{k}) \tilde\Xi_{k}^{-1}) \\
    \ge -\lambda_i^2((\tilde\Xi_{k+1}-\tilde\Xi_{k}) \tilde\Xi_{k}^{-1})\\
    = -\lambda_i((\tilde\Xi_k^{-1}-\tilde\Xi_\infty^{-1})^2 K_+^{-4}\Xi_k^2 \tilde\Xi_k^{-2} )[2\eta\beta^{-1} + \eta^{2}\beta^{-2} \lambda_i(\tilde\Xi_k^{-1}-\tilde\Xi_\infty^{-1})]^2.
\end{gather*}
Therefore
\begin{align*}
    &\qquad \lambda_i((\tilde\Xi_\infty^{-1}-\tilde\Xi_k^{-1})(\tilde\Xi_{k+1} - \tilde\Xi_k))- \log (1 + \lambda_i((\tilde\Xi_{k+1}-\tilde\Xi_{k}) \tilde\Xi_{k}^{-1})) + \lambda_i((\tilde\Xi_{k+1}-\tilde\Xi_{k}) \tilde\Xi_{k}^{-1}) \\
    &\le \lambda_i((\tilde\Xi_k^{-1}-\tilde\Xi_\infty^{-1})^2K_+^{-2}\Xi_k) \\  &\quad \cdot \left[-2\eta\beta^{-1} + \eta^2\beta^{-2} \lambda_i(\tilde\Xi_k^{-1}-\tilde\Xi_\infty^{-1}) + \lambda_i( K_+^{-2}\Xi_k \tilde\Xi_k^{-2} )[2\eta\beta^{-1} + \eta^2\beta^{-2} \lambda_i(\tilde\Xi_k^{-1}-\tilde\Xi_\infty^{-1})]^2\right]\\
    & \le \lambda_i((\tilde\Xi_k^{-1}-\tilde\Xi_\infty^{-1})^2K_+^{-2}\Xi_k) \\
    &\quad \cdot \left[-2\eta\beta^{-1} + \eta^2\beta^{-2} \lambda_i(\tilde\Xi_k^{-1}-\tilde\Xi_\infty^{-1}) + \lambda_i(\tilde\Xi_k^{-1} )[2\eta\beta^{-1} + \eta^2\beta^{-2} \lambda_i(\tilde\Xi_k^{-1}-\tilde\Xi_\infty^{-1})]^2\right] \\
    &\le \lambda_i((\tilde\Xi_k^{-1}-\tilde\Xi_\infty^{-1})^2K_+^{-2}\Xi_k) \\
    &\quad \cdot\left[-2\eta\beta^{-1} + \eta^2\beta^{-2} \lambda_i(\tilde\Xi_k^{-1}-\tilde\Xi_\infty^{-1}) + \frac{1}{2}\beta T^{-1}(1+Tc^{-1}) \eta^2\beta^{-2}[2 + \eta\beta^{-1} \lambda_i(\tilde\Xi_k^{-1}-\tilde\Xi_\infty^{-1})]^2\right],
\end{align*}
where we use that $\tilde\Xi_k \succeq K_+^{-2} \Xi_k$ in the second inequality, and then that $\tilde\Xi_k \succeq 2\beta^{-1} T K_+^{-1} \succeq 2\beta^{-1} T(1+Tc^{-1})^{-1} I$ in the third inequality.

\textbf{Case 2:} the eigenvalue of $(\tilde\Xi_k^{-1} - \tilde\Xi_\infty^{-1})$ is negative. Then
\begin{gather*}
    \log (1 + \lambda_i((\tilde\Xi_{k+1}-\tilde\Xi_{k}) \tilde\Xi_{k}^{-1})) - \lambda_i((\tilde\Xi_{k+1}-\tilde\Xi_{k}) \tilde\Xi_{k}^{-1}) \\
    \ge -\frac{\lambda_i^2((\tilde\Xi_{k+1}-\tilde\Xi_{k}) \tilde\Xi_{k}^{-1})}{1+\lambda_i((\tilde\Xi_{k+1}-\tilde\Xi_{k}) \tilde\Xi_{k}^{-1})}\\
    = -\frac{\lambda_i^2\left[(2\eta\beta^{-1}(\tilde\Xi_k^{-1} - \tilde\Xi_\infty^{-1})\Xi_k + \eta^2\beta^{-2}(\tilde\Xi_k^{-1} - \tilde\Xi_\infty^{-1})^2 \Xi_k) K_+^{-2}\tilde\Xi_k^{-1}\right]}{1+\lambda_i\left[(2\eta\beta^{-1}(\tilde\Xi_k^{-1} - \tilde\Xi_\infty^{-1})\Xi_k + \eta^2\beta^{-2}(\tilde\Xi_k^{-1} - \tilde\Xi_\infty^{-1})^2 \Xi_k) K_+^{-2}\tilde\Xi_k^{-1}\right]}\\
    \ge -\frac{\lambda_i^2\left[(2\eta\beta^{-1}(\tilde\Xi_k^{-1} - \tilde\Xi_\infty^{-1})\Xi_k) K_+^{-2}\tilde\Xi_k^{-1}\right]}{1+\lambda_i\left[(2\eta\beta^{-1}(\tilde\Xi_k^{-1} - \tilde\Xi_\infty^{-1})\Xi_k + \eta^2\beta^{-2}(\tilde\Xi_k^{-1} - \tilde\Xi_\infty^{-1})^2 \Xi_k) K_+^{-2}\tilde\Xi_k^{-1}\right]},
\end{gather*}
where we use the step-size condition to make sure the denominator is positive, then discard the second order term in the numerator. This uses \cref{appeq:wproxXi}, which gives the control $\lambda_i(\Xi_k K_+^{-2} \tilde\Xi_k^{-1}) \le \lambda_i(\tilde\Xi_k \tilde\Xi_k^{-1}) = 1$. Using the step-size condition on $\eta\beta^{-1}$ again and the quadratic structure, the denominator is at least $1/4$, and we have 
\begin{gather*}
    \log (1 + \lambda_i((\tilde\Xi_{k+1}-\tilde\Xi_{k}) \tilde\Xi_{k}^{-1})) - \lambda_i((\tilde\Xi_{k+1}-\tilde\Xi_{k}) \tilde\Xi_{k}^{-1}) \\
    \ge -4{\lambda_i^2\left[(2\eta\beta^{-1}(\tilde\Xi_k^{-1} - \tilde\Xi_\infty^{-1})\Xi_k) K_+^{-2}\tilde\Xi_k^{-1}\right]}
\end{gather*}
Therefore
\begin{gather*}
    \lambda_i((\tilde\Xi_\infty^{-1}-\tilde\Xi_k^{-1})(\tilde\Xi_{k+1} - \tilde\Xi_k)) - \log (1 + \lambda_i((\tilde\Xi_{k+1}-\tilde\Xi_{k}) \tilde\Xi_{k}^{-1})) + \lambda_i((\tilde\Xi_{k+1}-\tilde\Xi_{k}) \tilde\Xi_{k}^{-1})\\
    \le \lambda_i((\tilde\Xi_k^{-1}-\tilde\Xi_\infty^{-1})^2K_+^{-2}\Xi_k) \left[-2\eta\beta^{-1} + \eta^2\beta^{-2} \lambda_i(\tilde\Xi_k^{-1}-\tilde\Xi_\infty^{-1}) + 4\lambda_i(\Xi_k K_+^{-2}\tilde\Xi_k^{-2})(2\eta\beta^{-1})^2\right]\\
    \le \lambda_i((\tilde\Xi_k^{-1}-\tilde\Xi_\infty^{-1})^2K_+^{-2}\Xi_k) \left[-2\eta\beta^{-1} + 4\lambda_i(\tilde\Xi_k^{-1})(2\eta\beta^{-1})^2\right]
\end{gather*}

Taking $\eta\beta^{-1} \le \min((\max_i |\lambda_i(\tilde \Xi_k^{-1} - \tilde\Xi_\infty^{-1})|)^{-1}/2, 3\lambda_{\min}(\tilde\Xi_k)/32)$, we have in

\textbf{Case 1}:
\begin{align*}
     &\qquad \lambda_i((\tilde\Xi_\infty^{-1}-\tilde\Xi_k^{-1})(\tilde\Xi_{k+1} - \tilde\Xi_k) - \log (1 + \lambda_i((\tilde\Xi_{k+1}-\tilde\Xi_{k}) \tilde\Xi_{k}^{-1})) + \lambda_i((\tilde\Xi_{k+1}-\tilde\Xi_{k}) \tilde\Xi_{k}^{-1})) \\ 
     &\le \lambda_i((\tilde\Xi_k^{-1}-\tilde\Xi_\infty^{-1})^2K_+^{-2}\Xi_k) \\
     &\qquad \cdot \left[-2\eta\beta^{-1} + \eta^2\beta^{-2} \lambda_i(\tilde\Xi_k^{-1}-\tilde\Xi_\infty^{-1}) + \lambda_i(\tilde\Xi_k^{-1} )[2\eta\beta^{-1} + \eta^2\beta^{-2} \lambda_i(\tilde\Xi_k^{-1}-\tilde\Xi_\infty^{-1})]^2\right] \\
     &\le \eta\beta^{-1}\lambda_i((\tilde\Xi_k^{-1}-\tilde\Xi_\infty^{-1})^2K_+^{-2}\Xi_k) \left[-2 + 1/2 + \left(\frac{5}{2}\right)^2 \cdot \frac{3}{32}\right]\\
     &\le -\frac{1}{2}\eta\beta^{-1} \lambda_i((\tilde\Xi_k^{-1}-\tilde\Xi_\infty^{-1})^2K_+^{-2}\Xi_k)
\end{align*}

\textbf{Case 2}:
\begin{gather*}
    \lambda_i((\tilde\Xi_\infty^{-1}-\tilde\Xi_k^{-1})(\tilde\Xi_{k+1} - \tilde\Xi_k)) - \log (1 + \lambda_i((\tilde\Xi_{k+1}-\tilde\Xi_{k}) \tilde\Xi_{k}^{-1})) + \lambda_i((\tilde\Xi_{k+1}-\tilde\Xi_{k}) \tilde\Xi_{k}^{-1})\\
    \le \lambda_i((\tilde\Xi_k^{-1}-\tilde\Xi_\infty^{-1})^2K_+^{-2}\Xi_k) \left[-2\eta\beta^{-1} + 4\lambda_i(\tilde\Xi_k^{-1})(2\eta\beta^{-1})^2\right]\\
    \le \eta\beta^{-1} \lambda_i((\tilde\Xi_k^{-1}-\tilde\Xi_\infty^{-1})^2K_+^{-2}\Xi_k) \left[-2 + 16 \cdot \frac{3}{32}\right]\\
     = -\frac{1}{2}\eta\beta^{-1} \lambda_i((\tilde\Xi_k^{-1}-\tilde\Xi_\infty^{-1})^2K_+^{-2}\Xi_k)
\end{gather*}

In either case we have that 
\begin{gather}
    \lambda_i((\tilde\Xi_\infty^{-1}-\tilde\Xi_k^{-1})(\tilde\Xi_{k+1} - \tilde\Xi_k))- \log (1 + \lambda_i((\tilde\Xi_{k+1}-\tilde\Xi_{k}) \tilde\Xi_{k}^{-1})) + \lambda_i((\tilde\Xi_{k+1}-\tilde\Xi_{k}) \tilde\Xi_{k}^{-1})\\
    \le -\frac{1}{2}\eta\beta^{-1} \lambda_i((\tilde\Xi_k^{-1}-\tilde\Xi_\infty^{-1})^2K_+^{-2}\Xi_k)
\end{gather}
and summing over $i$ we have 
\begin{align*}
    &\qquad\KL(\tilde\rho_{k+1} \| \pi) - \KL(\tilde\rho_{k} \| \pi) \\
    &= \frac{1}{2}\sum_i\bigg[\lambda_i((\tilde\Xi_\infty^{-1}-\tilde\Xi_k^{-1})(\tilde\Xi_{k+1} - \tilde\Xi_k))\\
    &\qquad \hphantom{\frac{1}{2}\sum_i\bigg[} - \log (1 + \lambda_i((\tilde\Xi_{k+1}-\tilde\Xi_{k}) \tilde\Xi_{k}^{-1})) + \lambda_i((\tilde\Xi_{k+1}-\tilde\Xi_{k}) \tilde\Xi_{k}^{-1})\bigg]\\
    &\le -\frac{1}{4}\eta\beta^{-1} \Tr((\tilde\Xi_k^{-1}-\tilde\Xi_\infty^{-1})^2K_+^{-2}\Xi_k) \\
    &\le -\frac{1}{4}\eta\beta^{-1} {\lambda_{\min}(K_+^{-2}\Xi_k \tilde\Xi_k^{-1})} \Tr((\tilde\Xi_k^{-1}-\tilde\Xi_\infty^{-1})^2\tilde\Xi_k) \\
    &= -\frac{1}{4}\eta\beta^{-1} {\lambda_{\min}(K_+^{-2}\Xi_k \tilde\Xi_k^{-1})} \gI^M_{\pi}(\tilde\rho_{k}) \\
    &\le -\frac{1}{2C} \eta\beta^{-1}\lambda_{\min}(K_+^{-2}\Xi_k \tilde\Xi_k^{-1}) \KL(\tilde\rho_{k}\|\pi).
\end{align*}
where we use that the log-Sobolev constant of $\tilde\Xi_\infty = \sqrt{M}^{-1} \Sigma \sqrt{M}^{-1}$ is $C^{-1}$ as in \Cref{appssec:scaledfisher}. Finally note that 
\begin{align*}
    \lambda_{\min}(K_+^{-2}\Xi_k \tilde\Xi_k^{-1}) &\ge \frac{\lambda_{\min}(K_+^{-2}\Xi_k)}{2\beta^{-1} T(1+TC^{-1})^{-1} + \lambda_{\min}(K_+^{-2}\Xi_k)}\\
    &\ge \frac{(1+Tc^{-1})^{-2}\lambda_{\min}(\Xi_k)}{2\beta ^{-1}T(1+TC^{-1})^{-1} + (1+Tc^{-1})^{-2}\lambda_{\min}(\Xi_k)}\\
    &= \frac{1}{1+2\beta^{-1} T(1+TC^{-1})^{-1}(1+Tc^{-1})^2\lambda_{\min}(\Xi_k)^{-1}},
\end{align*}
where in the second inequality, we use that $x/(c+x)$ is increasing and $\lambda_{\min}(K_+^{-2}\Xi_k)\ge (1+Tc^{-1})^{-2}\lambda_{\min}(\Xi_k)$.

\textbf{Uniqueness:} now we have established existence of a stationary solution as the Gaussian $\gN(0, \Sigma_\infty)$ satisfying $\wprox_{T,V}^M (\gN(0, \Sigma_\infty)) = \gN(0, \beta^{-1}\Sigma)$. From the modified Fokker--Planck equation (\ref{eq:ModFP}), stationarity at a distribution $\nu$ is equivalent to satisfying $\wprox_{T,V}^M \nu \propto \exp(-\beta V) = \gN(0, \beta^{-1}\Sigma)$. Consider the definition of the regularized Wasserstein proximal as the coupled Fokker--Planck and Hamilton--Jacobi equation, plus the Cole--Hopf transformation. Since the forward- and backwards- anisotropic heat equations are well defined and are injective, the inverse of the Wasserstein proximal is unique.
\end{proof}

\section{Additional results}\label{appsec:finpartres}
\subsection{Sufficient conditions for a unique inverse proximal} \label{app:maxT} 
We wish to find sufficient conditions on a distribution $\rho$ such that the PRWPO (\ref{eq:kernelRho}) is invertible, that is, there exists a unique distribution $\rho_0$ for which $\rho = \wprox_{T,V}^M \rho_0$ is the preconditioned regularized Wasserstein proximal. Equivalently, from (\ref{eq:aniHeat}), we find sufficient conditions that the backward heat equation for $\hat{\eta}$ from time $T$ to 0 is well defined. The forward heat equation is always well defined.

Recall that the solution to the anisotropic (forward) heat equation
\begin{equation*}
    \partial_t f = \beta^{-1} \nabla \cdot (M \nabla f)
\end{equation*}
is given in Fourier space by 
\begin{equation}
    f(t,x) = \int_{\R^d} \hat{f}(\xi) e^{i \langle \xi, x\rangle - t\beta^{-1} \xi^\top M \xi}\dd \xi.
\end{equation}
In order for $f(-T, x)$ to be well defined by Plancherel's theorem, we simply need that the Fourier transform $\hat{f}(\xi)$ decays faster than $e^{-T\beta^{-1} \xi^\top M \xi}$. Since the terminal condition for $\hat{\eta}$ is $\hat{\eta}(T,x) = \rho(x) e^{-\beta \Phi(T, M^{-1}x)/2 } = \rho(T,x) e^{\beta V(x)/2}$, which evolves through the backward heat equation, we want $\rho(x) e^{\beta V(x)/2}$ to have Fourier transform decaying (at least) faster than $e^{-T\beta^{-1} \xi^\top M \xi}$.

\subsubsection{Application: maximum value of T}
Consider the quadratic potential case where $V = x^\top \Sigma^{-1} x/2$, and we wish to find $\rho_0$ whose regularized Wasserstein proximal is proportional to $\exp(-\beta V)$. The Fourier transform of a normal distribution $\rho \sim \gN(\mu, \Sigma)$ (up to some factors of $\sqrt{2\pi}$) is 
\begin{equation}
    \hat{\rho} = \mathbb{E}_{x \sim \rho}\left[e^{-i \langle \xi, x\rangle}\right] = e^{i \langle \xi, \mu \rangle - \frac{1}{2} \xi^\top \Sigma \xi}.
\end{equation}

In this case, $\rho \exp(\beta V/2) \propto \exp(-\beta V/2)$ is proportional to a Gaussian with covariance $2\beta^{-1} \Sigma$, and thus the decay of the Fourier transform is $\sim e^{-\beta^{-1} \xi^\top \Sigma \xi}$. This is at least faster than $e^{-T\beta^{-1} \xi^\top M \xi}$ if and only if $\Sigma \succeq TM$.

\subsection{Regularized Wasserstein proximal is a Wasserstein-2 contraction} From (\ref{eq:kernelMarkovGaussian}) or \cref{appeq:kernelMarkovGaussian}, we have that $K_M$ is a composition of the Gaussian convolution Markov kernel $\delta_y \mapsto \gN(y, (\frac{\beta\Sigma^{-1}}{2 } + \frac{\beta M^{-1}}{2 T})^{-1})$ with the (linear) map $\omega : \R^d \rightarrow \R^d$, defined as $y \mapsto (\frac{\Sigma^{-1}}{2} + \frac{M^{-1}}{2 T})^{-1}\frac{M^{-1}}{2T}y$. Since $c^{-1} M^{-1}\succeq \Sigma^{-1} \succeq C^{-1} M^{-1}$, we have that $\frac{\beta\Sigma^{-1}}{2} + \frac{\beta M^{-1}}{2T} \succeq (\frac{\beta C^{-1}}{2} + \frac{\beta}{2 T}) M^{-1}$. Therefore the eigenvalues of the scaling matrix satisfy
\begin{equation*}
    \lambda_i\left(\frac{M^{-1}}{2 T}\left(\frac{\Sigma^{-1}}{2} + \frac{M^{-1}}{2 T}\right)^{-2}\frac{M^{-1}}{2T}\right) 
    \le \left(\frac{C^{-1}}{2} + \frac{1}{2 T}\right)^{-2} \left(\frac{1}{2 T}\right)^2< 1.
\end{equation*}
In particular, the scaling map $\omega$ is contractive with constant $\zeta \coloneqq(\frac{C^{-1}}{2} + \frac{1}{2T})^{-1} (\frac{1}{2 T})<1$.

Let $\mu,\nu \in \gP_2(\R^d)$ be any probability measures with finite second moment, and let $\gamma \in \Pi(\mu,\nu)$. We wish to bound
\begin{equation*}
    \gW_2(\wprox^M_{T, V}\mu, \wprox^M_{T, V}\nu).
\end{equation*}
Note that $\wprox^M_{T, V} \mu= \gN(0, (\frac{\beta \Sigma^{-1}}{2} + \frac{\beta M^{-1}}{2T})^{-1}) * \omega_\# \mu$. Since $\omega$ is a contractive map with constant $\zeta$, we have that 
\begin{equation}
    \gW_2(\omega_\#\mu, \omega_\#\nu) \le \zeta\gW_2(\mu,\nu).
\end{equation}

From \cite[Lem. 5.2]{santambrogio2015optimal}, since the kernel given by the pdf of $\gN(0,(\frac{\beta \Sigma^{-1}}{2} + \frac{\beta M^{-1}}{2 T})^{-1})$ is even, we have moreover that for any $\mu,\nu$ probability measures with finite second moment, that
\begin{equation}
    \gW_2\left(\gN\left(0,\left(\frac{\beta \Sigma^{-1}}{2 } + \frac{\beta M^{-1}}{2T}\right)^{-1}\right) * \mu, \gN\left(0,\left(\frac{\beta \Sigma^{-1}}{2 } + \frac{\beta M^{-1}}{2 T}\right)^{-1}\right)* \nu\right) \le \gW_2(\mu,\nu). 
\end{equation}
Therefore, we have that for quadratic $V$, and any $\mu,\nu$ in Wasserstein-2 space, that
\begin{equation}
    \gW_2(\wprox^M_{T\cdot V}\mu, \wprox^M_{T\cdot V}\nu) \le \zeta \gW_2(\mu, \nu).
\end{equation}
This also holds for any $p \in [1,\infty)$.

\subsection{Mean reversion from high variance initialization}
We list some simple definitions and properties before stating the inequality \cref{appprop:contractDiffusion}. We fix $\beta=1$ for simplicity.

\begin{definition}[Relative Smoothness/Convexity]
Let $\Psi:\mathcal{X} \rightarrow \R$ be a differentiable convex function, defined on a convex set $\mathcal{X}$ (with non-empty interior), which will be used as a reference. Let $f:\mathcal{X} \rightarrow \R$ be another differentiable convex function.

$f$ is \emph{$L$-smooth relative to $\Psi$} if for any $x,y \in \mathrm{int}(\mathcal{X})$,
\begin{equation}
    f(y) \le f(x) + \langle \nabla f(x), y-x \rangle + L B_\Psi(y,x).
\end{equation}

$f$ is \emph{$\mu$-strongly-convex relative to $\Psi$} if for any $x,y \in \mathrm{int}(\mathcal{X})$,
\begin{equation}
    f(y) \ge f(x) + \langle \nabla f(x), y-x \rangle + \mu B_\Psi(y,x).
\end{equation}
In the special case where $\Psi(x) = \frac{1}{2}x^\top A x$ for some (positive definite) symmetric $A$, we write that $f$ is $L$-smooth relative to $A$ and $\mu$-strongly-convex relative to $A$ respectively. In particular, we have that $B_{\Psi}(x,y) = \frac{1}{2}(x-y)^\top A (x-y)$.
\end{definition}

\begin{proposition}[Control on spectral radius]\label{prop:SpecStoc}
    For a row- (or column-) stochastic matrix $A \in \R^{n \times n}$, the spectral norm $\|A\|_2$ equals its largest singular value $\sigma_1(A)$, and moreover satisfies $\|A\|_2 \le \sqrt{n}$.
\end{proposition}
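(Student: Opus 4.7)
The first claim, that $\|A\|_2 = \sigma_1(A)$, is a standard fact of matrix analysis that I will simply cite: for any matrix $A \in \R^{n \times n}$, the operator norm induced by the Euclidean norm coincides with the largest singular value, by the variational characterization $\|A\|_2 = \sup_{\|x\|=1}\|Ax\| = \sqrt{\lambda_{\max}(A^\top A)} = \sigma_1(A)$.

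For the bound $\|A\|_2 \le \sqrt{n}$, the plan is to dominate the spectral norm by the Frobenius norm and then exploit non-negativity together with the row-sum constraint. Specifically, I will use $\|A\|_2 \le \|A\|_F$, which follows from $\sigma_1(A)^2 \le \sum_i \sigma_i(A)^2 = \|A\|_F^2$. Then, since the entries $a_{ij}$ of a row-stochastic matrix lie in $[0,1]$, one has $a_{ij}^2 \le a_{ij}$, and summing over all entries gives
\begin{equation*}
\|A\|_F^2 = \sum_{i,j=1}^n a_{ij}^2 \;\le\; \sum_{i,j=1}^n a_{ij} \;=\; \sum_{i=1}^n 1 \;=\; n,
\end{equation*}
so $\|A\|_2 \le \|A\|_F \le \sqrt{n}$.

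The column-stochastic case reduces to the row-stochastic case by transposition: if $A$ is column-stochastic then $A^\top$ is row-stochastic, and $\|A\|_2 = \|A^\top\|_2 \le \sqrt{n}$. Alternatively, one can observe directly that $\|A\|_F = \|A^\top\|_F$ and repeat the same entry-wise bound. There is no genuine obstacle in this proposition; the only subtlety to flag is that the bound $\sqrt{n}$ is tight in the worst case (attained, e.g., by the matrix with a single column of ones, up to transposition), so no sharper dimension-free constant can be obtained without further structural assumptions on $A$.
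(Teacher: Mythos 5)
Your proof is correct, and it takes a genuinely different route from the paper's. The paper dispatches the bound in one line via the standard norm-equivalence inequality $\|A\|_2 \le \sqrt{n}\,\|A\|_\infty$, noting that the induced $\infty$-norm of a row-stochastic matrix is exactly its maximum absolute row sum, namely $1$; the column-stochastic case is implicitly handled by transposition (or by the dual inequality with $\|A\|_1$). You instead dominate $\|A\|_2$ by the Frobenius norm and use the entry-wise bound $a_{ij}^2 \le a_{ij}$ available because the entries of a stochastic matrix lie in $[0,1]$, giving $\|A\|_F^2 \le n$. Both arguments are elementary and yield the same constant. The paper's is shorter because it offloads the work to a known norm-comparison inequality; yours is slightly more self-contained and has the pleasant feature that the Frobenius norm is transpose-invariant, so the row- and column-stochastic cases are treated symmetrically without a separate remark. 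Your observation that the bound is attained by a rank-one matrix such as $\mathbf{1}e_1^\top$ is a nice sanity check not present in the paper.
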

\begin{proof}
$\|A\|_2 \le \sqrt{n} \|A\|_{\infty} = \sqrt{n}$, where $\|\cdot\|_\infty$ is the induced $\infty$-norm, equal to the maximum absolute row sum.
\end{proof}

\begin{proposition}\label{prop:RelStrongConvMono}
    A function $f:\R^n \rightarrow \R$ is $\mu$-strongly convex relative to $A$ if and only if for all $x,y \in \R^n$,
    \begin{equation}
        \langle \nabla f(x) - \nabla f(y), x-y\rangle \ge \mu \langle A(x-y), x-y\rangle.
    \end{equation}
\end{proposition}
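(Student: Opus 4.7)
The statement is the standard monotonicity characterization of strong convexity, adapted to the Bregman/relative setting with reference $\Psi(x) = \tfrac{1}{2} x^\top A x$. I would unpack the definition first: since $\Psi$ is quadratic, $B_\Psi(y,x) = \tfrac{1}{2}(y-x)^\top A (y-x)$, so the hypothesis reduces to the concrete inequality $f(y) \ge f(x) + \langle \nabla f(x), y-x\rangle + \tfrac{\mu}{2}(y-x)^\top A (y-x)$ holding for all $x,y$. The plan is then a standard two-direction argument, entirely parallel to the classical Euclidean case with $A = I$.

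For the forward direction ($\Rightarrow$), I would apply the defining inequality twice, once at the pair $(x,y)$ and once at $(y,x)$, then add the two inequalities. The function values cancel, and the two Bregman remainders combine to give exactly $\mu (x-y)^\top A (x-y)$, with the gradient terms collapsing to $\langle \nabla f(x) - \nabla f(y), x-y\rangle$ after a sign flip. This yields the desired monotonicity bound with no additional assumptions.

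For the reverse direction ($\Leftarrow$), I would use the fundamental theorem of calculus along the segment $x_t = x + t(y-x)$, $t \in [0,1]$. Writing
\begin{equation*}
f(y) - f(x) - \langle \nabla f(x), y-x\rangle = \int_0^1 \langle \nabla f(x_t) - \nabla f(x), y-x\rangle \, dt,
\end{equation*}
I would apply the monotonicity hypothesis to the pair $(x_t, x)$, noting that $x_t - x = t(y-x)$, which produces $\langle \nabla f(x_t) - \nabla f(x), y-x\rangle \ge \mu t \, \langle A(y-x), y-x\rangle$ after dividing through by $t$ (trivial at $t=0$). Integrating $t$ from $0$ to $1$ contributes the factor $\tfrac{1}{2}$, recovering exactly the relative strong convexity inequality.

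There is no significant obstacle here; the only mild subtleties are ensuring $f$ is differentiable so that the integral representation is valid (assumed in the definition) and handling the $t=0$ endpoint when dividing by $t$, which is trivially handled by continuity. I would also briefly remark that symmetry of $A$ is used implicitly to identify $\langle A(x-y), x-y\rangle$ with the quadratic form in $B_\Psi$; positive definiteness is not needed for the equivalence itself, only for $\Psi$ to be a legitimate Bregman reference.
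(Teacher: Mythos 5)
Your proof is correct, and it takes a genuinely different route from the paper's. The paper reduces the statement in one line: $\mu$-relative strong convexity is by definition the convexity of $f - \mu\Psi$, and the displayed gradient inequality is precisely the statement that $\nabla(f - \mu\Psi)$ is a monotone operator; invoking the standard characterization ``differentiable convex $\Leftrightarrow$ monotone gradient'' and then expanding $\nabla\Psi(x) - \nabla\Psi(y) = A(x-y)$ finishes the argument in a single equivalence chain. You instead prove both directions directly without that reduction: you add the defining inequality at $(x,y)$ and $(y,x)$ to get monotonicity, and you integrate the monotonicity hypothesis along the segment $x_t = x + t(y-x)$ (dividing by $t$, with the $t=0$ endpoint handled by continuity) to recover the Bregman lower bound. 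What each buys: the paper's version is shorter and modular, leaning on a black-box lemma the reader is assumed to know; yours is self-contained and essentially re-derives that lemma in the quadratic-$\Psi$ instance, which some readers will find more transparent. Your remark that symmetry of $A$ is used (to identify $(x-y)^\top A (x-y)$ with $2B_\Psi$) and that positive definiteness is not needed for the equivalence is accurate and a useful observation the paper leaves implicit.
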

\begin{proof}
    Note that $\mu$-relative strong convexity is equivalent to $f-\mu \Psi$ being convex. Hence, $f$ being $\mu$-strongly convex relative to $\frac{1}{2}x^\top Ax$ is equivalent to the following by the monotone gradient condition, 
    \begin{equation*}
        \langle\nabla (f-\mu\Psi)(x) - \nabla(f-\mu\Psi)(y), x-y \rangle \ge 0.
    \end{equation*}
    \begin{equation*}
        \Leftrightarrow \langle \nabla f(x) - \nabla f(y), x-y\rangle \ge \mu \langle \nabla \Psi(x) - \nabla \Psi(y), x-y \rangle = \mu \langle A(x-y), x-y \rangle.
    \end{equation*}
\end{proof}

\begin{proposition}[Contraction-diffusion inequality]\label{appprop:contractDiffusion}
    Suppose that $V$ is $\mu$-strongly convex relative to $M^{-1}$, and that $V$ is minimized at $\hat{x} \in \R^d$. Then the following holds for any $\tX$:
    \begin{gather}
         \langle \Delta, M^{-1}(\tX - \hat{\tX}) \rangle_{\text{Frob}} \le \|\tX - \hat{\tX}\|_{2,M} \left(-\frac{\mu}{2} \|\tX - \hat{\tX}\|_{2,M} + \frac{(1+\sqrt{N})}{2T} \|\tX - \frac{1}{N}\tX \mathbf{1}_N \mathbf{1}_N^\top\|_{2,M}\right),
    \end{gather}
    where $\hat\tX$ is the tensorized minimizer
    \begin{equation*}
        \hat{\tX} = \begin{bmatrix}
            \hat{x} & ... & \hat{x}
        \end{bmatrix} = \hat{x} \mathbf{1}_N^\top \in \R^{d \times N}.
    \end{equation*}
\end{proposition}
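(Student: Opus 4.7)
The plan is to unpack $\Delta$ as the PBRWP drift from \cref{eq:tensorForm},
\[
\Delta = -\tfrac{1}{2} M \nabla V(\tX) + \tfrac{1}{2T}\bigl(\tX - \tX \softmax(W)^\top\bigr),
\]
and to estimate the two contributions to $\langle \Delta, M^{-1}(\tX - \hat\tX)\rangle_{\text{Frob}}$ separately. For the potential-gradient part, I would note that the $M$-weighted Frobenius inner product collapses column-wise to
\[
\langle -\tfrac{1}{2} M\nabla V(\tX), M^{-1}(\tX - \hat\tX)\rangle_{\text{Frob}} = -\tfrac{1}{2}\sum_{i=1}^N \langle \nabla V(\rvx_i), \rvx_i - \hat{x}\rangle,
\]
since $M$ cancels with $M^{-1}$ column by column. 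Applying \Cref{prop:RelStrongConvMono} (with the reference $A = M^{-1}$) to the pair $(\rvx_i, \hat x)$ and using $\nabla V(\hat x) = 0$ produces the contractive term $-\tfrac{\mu}{2}\|\tX - \hat\tX\|_{2,M}^2$, which is exactly the first summand on the right-hand side (after factoring out $\|\tX - \hat\tX\|_{2,M}$).

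For the softmax interaction term, the crucial observation is that $P \coloneqq \softmax(W)$ is row-stochastic, so $\mathbf{1}_N^\top P^\top = \mathbf{1}_N^\top$, and therefore the mean matrix $\bar\tX \coloneqq \tfrac{1}{N}\tX \mathbf{1}_N \mathbf{1}_N^\top$ satisfies $\bar\tX P^\top = \bar\tX$. Consequently
\[
\tX - \tX P^\top = (\tX - \bar\tX)(I - P^\top).
\]
Applying Cauchy--Schwarz in the $M$-weighted Frobenius inner product isolates the factor $\|\tX - \hat\tX\|_{2,M}$ and leaves me to bound $\|(\tX - \bar\tX)(I - P^\top)\|_{2,M}$. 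Writing $\|Y\|_{2,M} = \|M^{-1/2} Y\|_F$, the standard sub-multiplicativity of the Frobenius norm under right multiplication by a matrix with bounded spectral norm gives $\|(\tX - \bar\tX) P^\top\|_{2,M} \le \|P^\top\|_2\,\|\tX - \bar\tX\|_{2,M}$, and \Cref{prop:SpecStoc} yields $\|P^\top\|_2 \le \sqrt{N}$. A triangle inequality then produces the $(1+\sqrt N)$ prefactor:
\[
\|(\tX - \bar\tX)(I - P^\top)\|_{2,M} \le (1+\sqrt{N})\,\|\tX - \bar\tX\|_{2,M}.
\]

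Putting the two estimates together and factoring out the common $\|\tX - \hat\tX\|_{2,M}$ gives exactly the claimed contraction-diffusion inequality. The main technical subtlety is the softmax step: one must be careful that the cancellation $\bar\tX P^\top = \bar\tX$ only uses row-stochasticity of $P$ (no symmetry of the interaction matrix is required), and that the $M$-weighted Frobenius norm bound really reduces to the standard Frobenius--spectral inequality after the $M^{-1/2}$ change of coordinates. Everything else — the relative convexity step and the Cauchy--Schwarz estimate — is routine once the splitting is in place.
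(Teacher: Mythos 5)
Your proposal is correct and follows essentially the same approach as the paper's proof: split $\Delta$ into the gradient part (handled by \Cref{prop:RelStrongConvMono} with $\nabla V(\hat x)=0$) and the interaction part, use stochasticity of $\softmax(W)$ to replace $\tX-\tX\softmax(W)^\top$ by $(\tX-\bar\tX)(I-\softmax(W)^\top)$, apply Cauchy--Schwarz in the $M$-weighted Frobenius inner product, and bound $\|I-\softmax(W)^\top\|_2\le 1+\sqrt N$ via \Cref{prop:SpecStoc}. The only cosmetic difference is that you insert $\bar\tX$ directly via $\bar\tX\softmax(W)^\top=\bar\tX$ while the paper inserts $\hat\tX$ first and then replaces it by $\bar\tX$ inside the $(I-\softmax(W)^\top)$ factor, which is the same cancellation.
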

\begin{proof}
    Let $\Delta = \Delta(\tX^{(k)})$ be the descent direction defined as follows
    \begin{equation}
        \Delta = -\frac{1}{2} M \nabla V(\tX) + \frac{1}{2T}(\tX - \tX \softmax(W)^\top).
    \end{equation}
    such that the PBRWP iteration (\ref{eq:tensorForm}) becomes $\tX^{(k+1)} = \tX^{(k)} - \eta \Delta$.

    Compute, noting that $\hat{\tX}\softmax(W)^\top = \hat{\tX}$ since $\mathbf{1}_N$ is a left-eigenvector of $\softmax(W)^\top$ of eigenvalue 1:
    \begin{align*}
        \langle \Delta, M^{-1}(\tX - \hat{\tX}) \rangle_{\text{Frob}} &= -\frac{1}{2}\langle M \nabla V(\tX) - M \nabla V(\hat{\tX}), M^{-1}(\tX - \hat{\tX})\rangle_{\text{Frob}}\\
        & \quad + \frac{1}{2T}\langle (\tX - \hat{\tX}) (I_{N \times N} -  \softmax(W)^\top), M^{-1}(\tX - \hat{\tX})\rangle_{\text{Frob}} \\
        &= -\frac{1}{2}\sum_{i=1}^N \langle \nabla V(\rvx_i) - \nabla V(\hat{x}), \rvx_i - \hat{x} \rangle \\
        & \quad + \frac{1}{2T}\langle (\tX - \hat{\tX}) (I_{N \times N} -  \softmax(W)^\top), M^{-1}(\tX - \hat{\tX})\rangle_{\text{Frob}} 
    \end{align*}
    where in the second equality, we use that $\langle AX,Y\rangle_{\text{Frob}}=\langle X,A^\top Y\rangle_{\text{Frob}}$. To control the second term we utilize \Cref{prop:SpecStoc} for the column-stochastic matrix $\softmax(W)^\top$. We have, recalling $\hat{\tX} = \hat{x} \mathbf{1}_N^\top$, and $\mathbf{1}_N^\top \softmax(W)^\top = \mathbf{1}_N^\top$,
    \begin{gather*}
        \tX - \hat{\tX} = (\tX - \frac{1}{N} \tX \mathbf{1}_N \mathbf{1}_N^\top) + \frac{1}{N} \tX \mathbf{1}_N \mathbf{1}_N^\top + \hat{x} \mathbf{1}_N^\top \\
        \Rightarrow 
        (\tX - \hat{\tX}) (I_{N \times N} -  \softmax(W)^\top) = (\tX - \frac{1}{N} \tX \mathbf{1}_N \mathbf{1}_N^\top) (I_{N \times N} -  \softmax(W)^\top).
    \end{gather*}
    \begin{align*}
        &\quad \langle (\tX - \hat{\tX}) (I_{N \times N} -  \softmax(W)^\top), M^{-1}(\tX - \hat{\tX})\rangle_{\text{Frob}} \\
        &= \langle M^{-1/2}(\tX - \frac{1}{N} \tX \mathbf{1}_N \mathbf{1}_N^\top) (I_{N \times N} -  \softmax(W)^\top), M^{-1/2}(\tX - \hat{\tX})\rangle_{\text{Frob}} \\
        & \le \|M^{-1/2}(\tX - \frac{1}{N} \tX \mathbf{1}_N \mathbf{1}_N^\top) (I_{N \times N} -  \softmax(W)^\top)\|_2 \|M^{-1/2}(\tX-\hat{\tX})\|_2 \\
        &\le (1+\sqrt{N}) \|M^{-1/2}(\tX - \frac{1}{N}\tX \mathbf{1}_N \mathbf{1}_N^\top)\|_2 \|M^{-1/2} (\tX - \hat{\tX})\|_2
    \end{align*}
    We use \Cref{prop:SpecStoc} to control the spectral norm (induced 2-norm) of $\softmax(W)^\top$. In addition, we use the tracial matrix H\"older inequality $|\langle A,B\rangle_{\text{Frob}}| = |\Tr(A^\top B)| \le \|A\|_p \|B\|_q$ with $p^{-1} + q^{-1}=1$, and $\|A\|_p$ is the $p$-Schatten norm (equal to the Frobenius norm for $p=2$).
    
    From the assumption on $V$ and \Cref{prop:RelStrongConvMono}, we have control on the other term as well:
    \begin{equation*}
        \langle \nabla V(x) - \nabla V(y), x-y \rangle \ge \mu \|x-y\|_M^2.
    \end{equation*}
    Putting everything together, we have
    \begin{gather}
         \langle \Delta, M^{-1}(\tX - \hat{\tX}) \rangle_{\text{Frob}} \\
         \le -\frac{\mu}{2} \sum_{i=1}^N \|\rvx_i-\hat{x}\|_M^2 + \frac{(1+\sqrt{N})}{2T} \|M^{-1/2}(\tX - \frac{1}{N}\tX \mathbf{1}_N \mathbf{1}_N^\top)\|_2 \|M^{-1/2} (\tX - \hat{\tX})\|_2\\
         = \|\tX - \hat{\tX}\|_{2,M} \left(-\frac{\mu}{2} \|\tX - \hat{\tX}\|_{2,M} + \frac{(1+\sqrt{N})}{2T} \|\tX - \frac{1}{N}\tX \mathbf{1}_N \mathbf{1}_N^\top\|_{2,M}\right)
    \end{gather}
    where for a positive definite symmetric matrix $M \in \R^{d\times d}$, we define the scaled $(2,M)$ norm of a $d \times N$ matrix by 
    \begin{equation}
        \|A\|_{2,M}^2 \coloneqq \|M^{-1/2}A\|_2^2 = \Tr(A^\top M^{-1}A).
    \end{equation}
\end{proof}
\textbf{Interpretation: } if the particles are far from the minimizer and have small standard deviation, then they will flow together (on average) to the minimizer. We note that we do not use the actual values of the interaction matrix $W$.

\Cref{fig:lowdim_meanplot} demonstrates the change of regime that occurs from a high variance initial distribution. The target distribution is the two-dimensional standard Gaussian with $M=I$.
\begin{figure}
    \centering
    \subfloat[\centering $N=6, \eta=0.1$]{{\includegraphics[height=3.5cm]{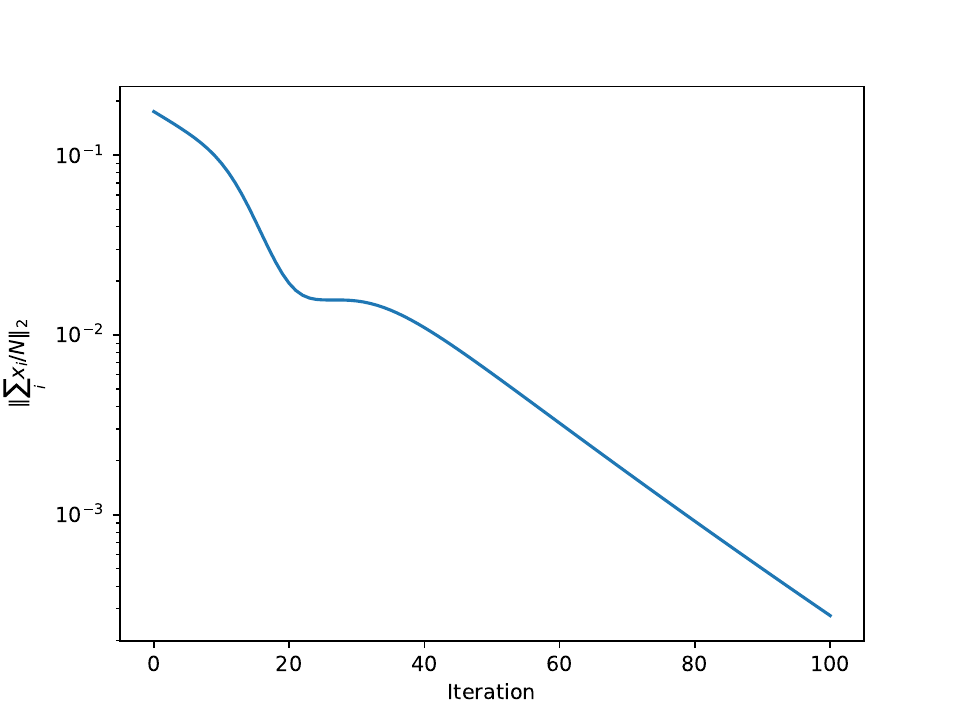}}}
    \subfloat[\centering $N=100, \eta=0.1$]{{\includegraphics[height=3.5cm]{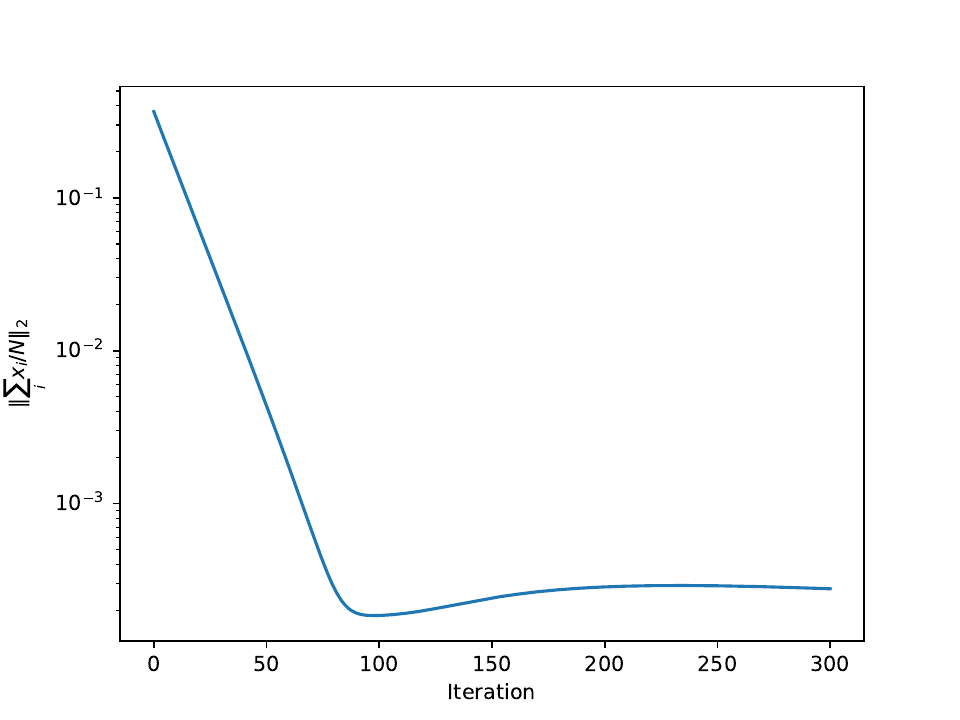}}}
    \subfloat[\centering $N=100, \eta=0.02$]{{\includegraphics[height=3.5cm]{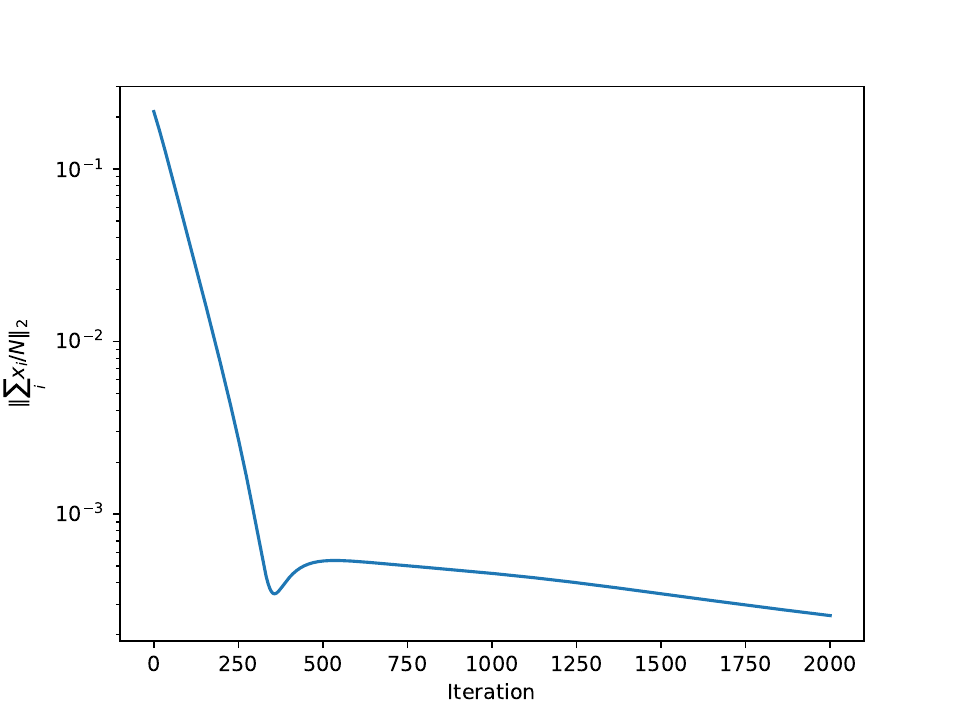}}}
    \caption{Plot of the norm of the average of the points $\|\frac{1}{N}\sum_i \rvx_i\|$ with respect to iteration, for the target distribution $\gN(0, I_2)$ and identity preconditioner, $T=2$, and various choices of $N$ and step-size. The mean does not converge linearly, and instead has two separate regimes. }
    \label{fig:lowdim_meanplot}
\end{figure}

\subsection{Particle diffusion boundedness}
We refer back to the Gaussian case where the target distribution is $\gN(0, \beta^{-1}\Sigma)$.

\begin{proposition}
    Suppose $M = \Sigma$ (and $T<1$), and that $\rvx_1$ is an exterior point of the convex hull of $\{\rvx_i\}$. Suppose $\delta>0$ is such that for all $j \ne 1$, 
    \begin{equation}
    \rvx_1^\top M^{-1} \rvx_j \le \rvx_1^\top M^{-1} \rvx_1 - \delta \|\rvx_1\|_M,
    \end{equation}
    Then, if $\rvx_1$ has sufficiently large norm, in particular assuming,
    \begin{equation}
        \delta \|\rvx_1\|_M \ge 2\beta^{-1} T\log (\frac{2(N-1)}{T}),
    \end{equation}
    then for small step-size, the updated point satisfies $\|\rvx_1^{(k+1)}\|_M < \|\rvx_1^{(k)}\|_M$. 
\end{proposition}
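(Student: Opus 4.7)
The plan is to expand $\|\rvx_1^{(k+1)}\|_M^2 - \|\rvx_1^{(k)}\|_M^2$ in the step-size $\eta$ and show that the linear coefficient is strictly negative, so that for sufficiently small $\eta$ the quadratic remainder cannot reverse the sign. Specializing \cref{eq:PrecondBRWP} to $V(x) = x^\top \Sigma^{-1} x/2$ with $M=\Sigma$ gives $M\nabla V(\rvx_1) = \rvx_1$, so the per-particle update reads $\rvx_1^{(k+1)} = \rvx_1^{(k)} + \eta \Delta$ with
\begin{equation*}
    \Delta = -\tfrac{1}{2}\rvx_1^{(k)} + \tfrac{1}{2T}\sum_{j\ne 1} p_j (\rvx_1^{(k)} - \rvx_j^{(k)}),\qquad p_j = \softmax(W_{1,\cdot})_j,
\end{equation*}
and hence $\|\rvx_1^{(k+1)}\|_M^2 - \|\rvx_1^{(k)}\|_M^2 = 2\eta\langle\rvx_1,\Delta\rangle_M + \eta^2\|\Delta\|_M^2$. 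The task reduces to showing $\langle\rvx_1,\Delta\rangle_M < 0$ strictly under the hypotheses of the proposition.

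The core analytical step is to control $\sum_{j\ne 1} p_j$. The explicit appearance of $\log(2(N-1)/T)$ in the hypothesis strongly suggests the target bound $\sum_{j\ne 1} p_j \le T/2$: if $W_{1,1} - W_{1,j} \ge \log(2(N-1)/T)$ uniformly in $j\ne 1$, then softmax ratios give $p_j/p_1 \le T/(2(N-1))$, which summed over $j\ne 1$ yields $\sum_{j\ne 1} p_j \le p_1 \cdot T/2 \le T/2$. To produce this gap, I would exploit the Gaussian structure: completing the square in the integrand defining $\gZ(\rvx_j)$ with $V = x^\top\Sigma^{-1}x/2$ and $M=\Sigma$ yields the closed form $-\log \gZ(\rvx_j) = \tfrac{\beta}{2(1+T)}\|\rvx_j\|_M^2 + \mathrm{const}$, so that
\begin{equation*}
    W_{1,1} - W_{1,j} = \tfrac{\beta}{4T}\|\rvx_1-\rvx_j\|_M^2 + \tfrac{\beta}{2(1+T)}\bigl(\|\rvx_1\|_M^2 - \|\rvx_j\|_M^2\bigr).
\end{equation*}
Expanding the first term and invoking the separation $\rvx_1^\top M^{-1}\rvx_j \le \|\rvx_1\|_M^2 - \delta\|\rvx_1\|_M$ extracts the leading contribution $\tfrac{\beta\delta\|\rvx_1\|_M}{2T}$, which by the norm hypothesis is at least $\log(2(N-1)/T)$.

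With the softmax mass bound in hand, I would close the argument by writing
\begin{equation*}
    \langle\rvx_1,\Delta\rangle_M = -\tfrac{1}{2}\|\rvx_1\|_M^2 + \tfrac{1}{2T}\sum_{j\ne 1} p_j\, \rvx_1^\top M^{-1}(\rvx_1 - \rvx_j),
\end{equation*}
and bounding each pairwise inner product via Cauchy--Schwarz combined with the outermost-point interpretation (so that $\|\rvx_j\|_M \le \|\rvx_1\|_M$, giving $\rvx_1^\top M^{-1}(\rvx_1-\rvx_j) \le 2\|\rvx_1\|_M^2$). Together with $\sum_{j\ne 1} p_j \le T/2$, the diffusion contribution is at most $\tfrac{1}{2}\|\rvx_1\|_M^2$, but now strictly, producing a negative descent coefficient of order $\|\rvx_1\|_M^2$. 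Taking $\eta$ small enough that $\eta\|\Delta\|_M^2 < 2|\langle\rvx_1,\Delta\rangle_M|$ then yields the conclusion.

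I expect the main obstacle to be the second term of the closed-form expression for $W_{1,1}-W_{1,j}$: since $T<1$, the coefficient $\tfrac{\beta}{2(1+T)} - \tfrac{\beta}{4T}$ of $\|\rvx_j\|_M^2-\|\rvx_1\|_M^2$ has a sign that erodes the gap precisely in the regime that matters, when $\rvx_1$ is larger in $M$-norm than its neighbours. Handling this cleanly requires either upgrading ``exterior point'' to the quantitative statement $\|\rvx_j\|_M \le \|\rvx_1\|_M$, so that the correction term contributes non-negatively, or absorbing the correction into the main gap at the cost of a larger constant inside the logarithm. A secondary delicacy is producing the uniform-in-$j$ upper bound on $\rvx_1^\top M^{-1}(\rvx_1 - \rvx_j)$ used in the final descent estimate, since the one-sided separation hypothesis alone allows $\rvx_1^\top M^{-1}\rvx_j$ to be arbitrarily negative; here the convex-hull geometry must be leveraged quantitatively, and will likely need an auxiliary bound on the spread of the remaining particles relative to $\rvx_1$.
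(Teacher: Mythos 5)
Your plan is essentially the paper's proof: both compute $\langle\rvx_1^{(k+1)}-\rvx_1^{(k)},\rvx_1^{(k)}\rangle_M$, reduce the problem to a bound $\sum_{j\ne 1}p_j\le T/2$ on the off-diagonal softmax mass, obtain that bound from a row gap $W_{11}-W_{1j}\gtrsim\log(2(N-1)/T)$, and then bound the pairwise terms $\langle\rvx_1-\rvx_j,\rvx_1\rangle_M\le 2\|\rvx_1\|_M^2$ using the extremality of $\rvx_1$. Two corrections to the details. (i) With $V=x^\top\Sigma^{-1}x/2$ and $M=\Sigma$, completing the square gives $-\log\gZ(\rvx_j)=\tfrac{\beta}{4(1+T)}\|\rvx_j\|_M^2+\mathrm{const}$, not $\tfrac{\beta}{2(1+T)}$; this is consistent with the paper's closed form $W_{ij}=\tfrac{\beta}{4T}\bigl(2\rvx_i^\top M^{-1}\rvx_j-(1+T)^{-1}\rvx_j^\top M^{-1}\rvx_j\bigr)$ after discarding the row-constant. (ii) The sign of $\tfrac{\beta}{4T}-\tfrac{\beta}{4(1+T)}$ is positive for every $T>0$, not just $T<1$, so the erosion phenomenon you describe is not specific to the regime $T<1$.

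Your flagged obstacle is the correct diagnosis, and it is exactly where the paper's argument leans on an implicit quantitative upgrade of ``exterior point.'' In the step where the paper passes from $W_{1j}=\tfrac{\beta}{4T}\bigl(2\rvx_1^\top M^{-1}\rvx_j-(1+T)^{-1}\|\rvx_j\|_M^2\bigr)$ to the bound $W_{1j}\le\tfrac{\beta}{4T}\bigl(2\|\rvx_1\|_M^2-(1+T)^{-1}\|\rvx_1\|_M^2-2\delta\|\rvx_1\|_M\bigr)$, the substitution $-(1+T)^{-1}\|\rvx_j\|_M^2\le-(1+T)^{-1}\|\rvx_1\|_M^2$ is used, which presumes $\|\rvx_j\|_M\ge\|\rvx_1\|_M$. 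But the subsequent inner-product bound $\langle\rvx_1-\rvx_j,\rvx_1\rangle_M<2\|\rvx_1\|_M^2$ in the case $\langle\rvx_1,\rvx_j\rangle_M<0$ comes from Cauchy--Schwarz together with $\|\rvx_j\|_M\le\|\rvx_1\|_M$. So the paper itself invokes the two opposite inequalities in the two halves of the argument, and it does not resolve the tension you noticed; the proof is only airtight in the degenerate case $\|\rvx_j\|_M=\|\rvx_1\|_M$. Your two proposed remedies --- either a two-sided quantitative hypothesis on $\|\rvx_j\|_M$ versus $\|\rvx_1\|_M$, or enlarging the constant inside the logarithm so the erosion term $-(1+T)^{-1}(\|\rvx_1\|_M^2-\|\rvx_j\|_M^2)/(4T)$ is absorbed --- are the right ways to close this and should be explicit in any corrected version.
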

\begin{proof}
Recall the particle evolution
\begin{align*}
    \rvx_i^{(k+1)} = \rvx_i^{(k)} - \frac{\eta}{2} M \nabla V(\rvx_i^{(k)}) + \frac{\eta}{2T}\left( \sum_{k=1}^N \softmax(W_{i,\cdot}^{(k)})_j (\rvx_i^{(k)}-\rvx_j^{(k)})\right),
\end{align*}
where in the Gaussian case, the interaction weight matrix $W$ takes the special form
\begin{equation}
    W_{ij}=\frac{\beta}{4 T} \left(2 \rvx_i^\top M^{-1} \rvx_j - \rvx_j^\top (M + T M \Sigma^{-1} M)^{-1} \rvx_j\right).
\end{equation}

We control the values of $W_{1j}$. In this case $M=\Sigma$ we have
\begin{equation*}
    W_{ij}=\frac{\beta}{4 T} \left(2 \rvx_i^\top M^{-1} \rvx_j - (1+T)^{-1}\rvx_j^\top M^{-1} \rvx_j\right)
\end{equation*}
which controls
\begin{equation*}
    W_{11} = \frac{\beta}{4T} (2-(1+T)^{-1}) \|\rvx_1\|_M^2 
\end{equation*}
and for $j \ne 1$,
\begin{align}
    W_{1j} &\le \frac{\beta}{4 T} (2\|\rvx_1\|_M^2  - (1+T)^{-1} \|\rvx_1\|_M^2 - 2\delta\|\rvx_1\|_M)\\
    &= W_{11} - \frac{\delta\beta}{2T} \|\rvx_1\|_M.
\end{align}
We have that
\begin{equation}
    \sum_{i=1}^N \exp(W_{1j}) \le \exp(W_{11}) [1+(N-1)\exp(-\delta \beta\|\rvx_1\|_M/(2T))]
\end{equation}
Considering $\langle \rvx_1^{(k+1)} - \rvx_1^{(k)}, \rvx_1^{(k)}\rangle_M$, we want to show it is negative for big $R$. Then the norm of the iteration decreases. The iteration takes the form

\begin{gather}
    \frac{1}{\eta}\langle \rvx_1^{(k+1)} - \rvx_1^{(k)}, \rvx_1^{(k)}\rangle_M\\
    = -\frac{1}{2} \|\rvx_1\|_M^2 + \frac{1}{2T}\sum_{j=2}^N \softmax (W_{1\cdot})_j \langle \rvx_1 - \rvx_j, \rvx_1\rangle_M
\end{gather}

By maximality we have
\begin{equation}
0<\langle \rvx_1 - \rvx_j, \rvx_1\rangle_M <
\begin{cases}
     \|\rvx_1\|_M^2- \delta^2/2, & \langle \rvx_1,\rvx_j\rangle_M  > 0;\\
    2\|\rvx_1\|_M^2, & \langle \rvx_1,\rvx_j\rangle_M < 0.
\end{cases}
\end{equation}

If $\sum_{j=2}^N \softmax(W_{1\cdot})_j \le \frac{T}{2}$, then naturally 
\begin{gather*}
    -\frac{1}{2} \|\rvx_1\|_M^2 + \frac{1}{2T}\sum_{j=2}^N \softmax (W_{1\cdot})_j \langle \rvx_1 - \rvx_j, \rvx_1\rangle_M\\
    < -\frac{1}{2}\|\rvx_1\|_M^2 + \frac{1}{2T} \frac{T}{2} 2\|\rvx_1\|_M^2 = 0.
\end{gather*}

So we wish to find a sufficient condition on $\|\rvx_1\|_M$ such that $\sum_{j=2}^N \softmax(W_{1\cdot})_j \le \frac{T}{2}$. It is sufficient that $\softmax(W_{1\cdot})_1 \ge 1-T/2$, which is satisfied if 
\begin{gather}
    [1+(N-1)\exp(-\delta \beta\|\rvx_1\|_M/(2 T))]^{-1} \ge 1-\frac{T}{2}\\
    \Leftrightarrow [1+\exp(-\delta\beta \|\rvx_1\|_M/(2T) + 
    \log(N-1))]^{-1} \ge 1-\frac{T}{2}
\end{gather}
Note $(1+x)^{-1}>1-x$, so it is sufficient that
\begin{gather*}
    \exp(-\delta \beta\|\rvx_1\|_M/(2 T) + \log(N-1) ) \le \frac{T}{2}\\
    \Leftrightarrow \delta \|\rvx_1\|_M \ge 2\beta^{-1} T\log (\frac{2 (N-1)}{T}).
\end{gather*}
\end{proof}

\subsection{Change of variables for different $\beta$}\label{appssec:BetaCOV}
In this section, we show that the diffusion parameter $\beta$ in PBRWP can be chosen to be $\beta=1$ without loss of generality, by changing the step-size, potential, and regularization parameter $T$. In particular, PBRWP with potential $V$, step-size $\eta$, regularization parameter $T$ and diffusion $\beta$ is equivalent to PBRWP with potential $\hat{V} = \beta V$, step-size $\hat\eta =  \eta \beta^{-1}$, and parameters $\hat{T} = T\beta^{-1}, \hat\beta = 1$.

This can be seen by looking at the algorithm of PBRWP. For a step-size $\eta>0$, recall that the iteration can be written as 
\begin{subequations}\label{appeq:PBRWPIter}
\begin{align}
    \tX^{(k+1)} &= \tX^{(k)} - \frac{\eta}{2} M \nabla V(\tX^{(k)}) + \frac{\eta}{2T}\left(\tX^{(k)} - \tX^{(k)} \softmax(W^{(k)})^\top\right),\\
    W_{i,j} &= -\beta \frac{\|\rvx_i-\rvx_j\|_M^2}{4T} - \log \gZ(\rvx_j),\\
    \gZ(\rvx_j) &=\int_{\R^d} e^{-\frac{\beta}{2} (V(z) + \frac{\|z-\rvx_j\|^2_M}{2T})} \dd{z}.
\end{align}
\end{subequations}
The interaction matrix and normalizing constants do not change under the change of variables $(V, \beta, T) \mapsto (\beta V, 1, T\beta^{-1})$. To keep the iterations \cref{appeq:PBRWPIter} the same under this change of variables, one additionally takes $\eta \mapsto \eta\beta^{-1} $. This can be seen by rewriting the update \cref{appeq:PBRWPIter} as follows:

\begin{subequations}
\begin{align}
    \tX^{(k+1)} &= \tX^{(k)} - \frac{\eta\beta^{-1} }{2} M \nabla (\beta V)(\tX^{(k)}) + \frac{\eta\beta^{-1} }{2T\beta^{-1} }\left(\tX^{(k)} - \tX^{(k)} \softmax(W^{(k)})^\top\right),\\
    W_{i,j} &= - \frac{\|\rvx_i-\rvx_j\|_M^2}{4T\beta^{-1}} - \log \gZ(\rvx_j),\\
    \gZ(\rvx_j) &=\int_{\R^d} e^{-\frac{1}{2} (\beta V(z) + \frac{\|z-\rvx_j\|^2_M}{2T\beta^{-1} })} \dd{z}.
\end{align}
\end{subequations}

\section{Sensitivity to hyperparameters for low-dimensional experiments}
We give some KL divergence plots varying the sensitivity of PBRWP with respect to the hyperparameter $T$, for the bimodal and annulus targets. Both experiments use the Monte Carlo approximation to the normalizing constant and the initial random points have a fixed seed. The KL divergence is computed as in the main text.

\cref{appfig:bananaAblation,appfig:annulusAblation} show that the convergence is relatively stable, with larger step-sizes incurring a standard oscillation phenomenon, and the bias changing across different choices of $T$.
\begin{figure}[ht]
    \centering
    \includegraphics[width=0.7\linewidth,trim={0 0 0 2.1cm},clip]{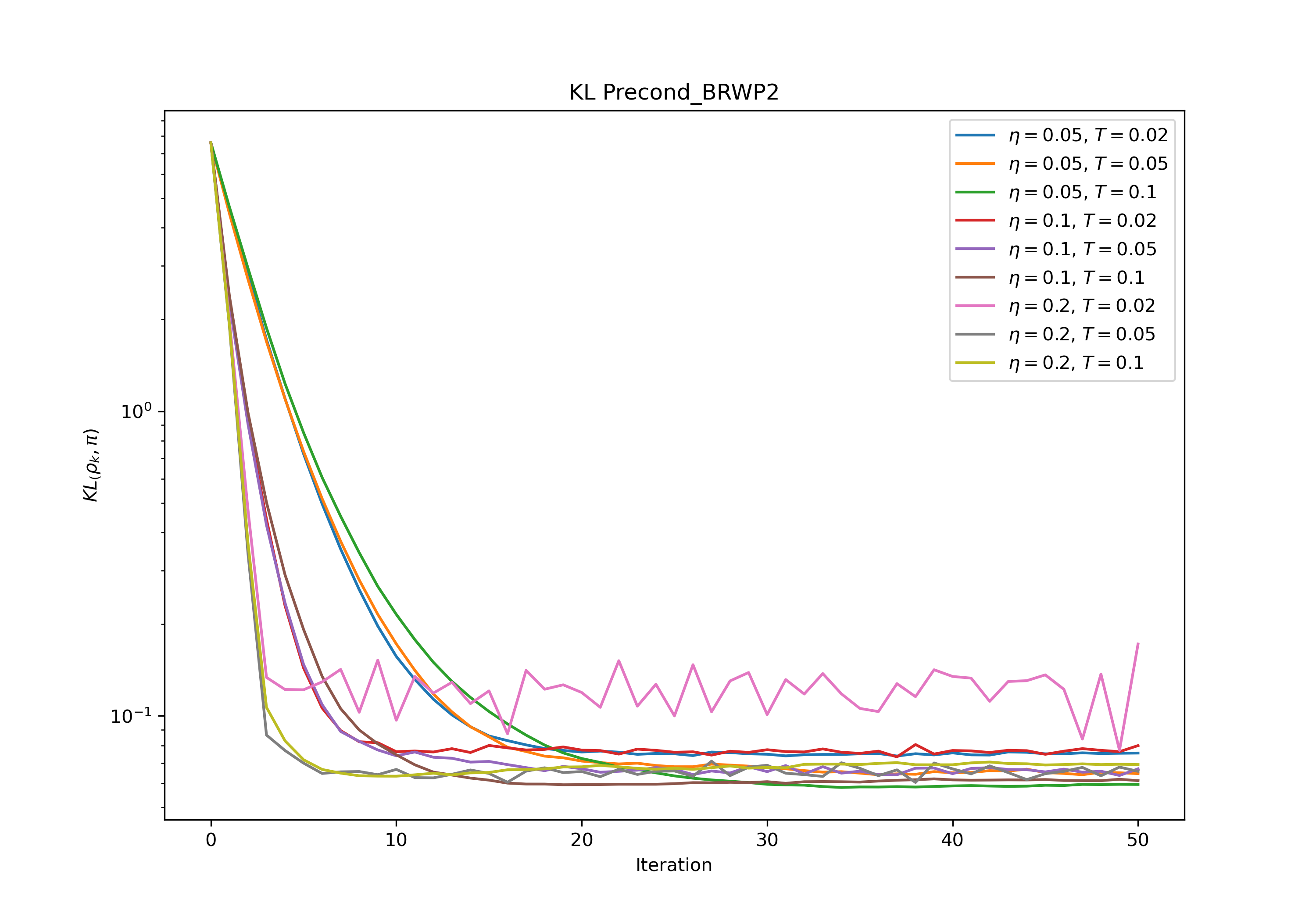}
    \caption{KL convergence curves for the bimodal experiment with large initial variance, across step-sizes $\eta \in  \{0.05,0.1,0.2\}$ and regularization $T \in \{0.02,0.05,0.1\}$. The convergence is quite stable across these parameters, with oscillations occurring for large step-size and small $T$.}
    \label{appfig:bananaAblation}
\end{figure}

\begin{figure}[ht]
    \centering
    \includegraphics[width=0.7\linewidth,trim={0 0 0 2.1cm},clip]{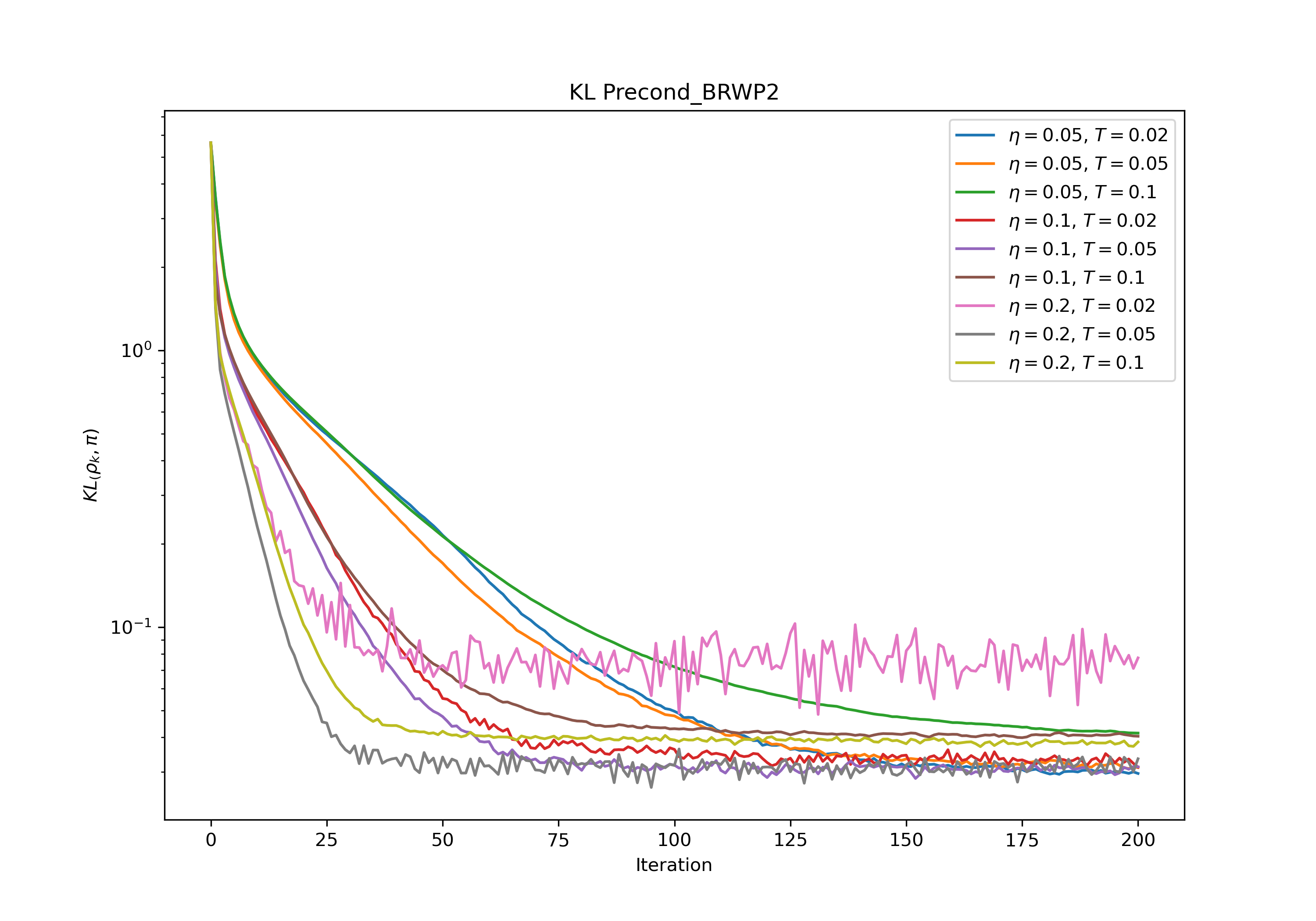}
    \caption{KL convergence curves for the annulus experiment, across step-sizes $\eta \in  \{0.05,0.1,0.2\}$ and regularization $T \in \{0.02,0.05,0.1\}$. For larger step-size, increasing $T$ makes the convergence more stable but more biased.}
    \label{appfig:annulusAblation}
\end{figure}

\section{Sensitivity to hyperparameters for imaging experiment}
\Cref{app:ablPBRWP,app:ablBRWP} plot the deconvolution reconstructions using PBRWP and BRWP respectively over a variety of choices of $T$ and step-size. Both methods are run with 40 particles up to 5000 iterations. We observe that both methods are not particularly sensitive to the choice of $T$ except for an intermediate choice of $T=10^{-2}$, for which we observe more variance. This can be explained by variance reduction phenomena: by the regularization of the Wasserstein proximal for large $T$, and by the finite number of particles for small $T$. For the smallest step-size $\eta=10^{-5}$, 5000 iterations are insufficient to reach convergence and therefore the performance is worse.

\begin{table}[htp]
\centering
\caption{PSNR (in dB) of the mean of 40 particles computed using \textbf{PBRWP}, taken with respect the MAP estimate for the deconvolution problem. Taken at iteration 5000, at which stepsizes 1e-3 and 5e-4 have converged.}\label{app:ablPBRWP}
\begin{tabular}{l|ccccc}
\toprule
\diagbox{$T$}{Step-size} & 1e-3 & 5e-4 & 2e-4 & 1e-4 & 1e-5 \\
\midrule
1e-4 & 30.28& 34.22 & 39.53& 43.34& 31.30\\
1e-3 & 30.28& 34.24 & 39.45 &42.76 &31.29\\
1e-2 &30.30 & 33.95 & 37.88 & 39.37&30.97\\
0.5 & 30.29& 34.22 & 39.53 & 43.37&31.30\\
10 & 30.28& 34.22 & 39.52 & 43.37&31.30\\
\bottomrule
\end{tabular}
\end{table}

\begin{table}[htp]
\centering
\caption{PSNR (in dB) of the mean of 40 particles computed using \textbf{BRWP}, taken with respect the MAP estimate for the deconvolution problem. Taken at iteration 5000, at which stepsizes 1e-3, 5e-4 and 2e-4 have converged.}\label{app:ablBRWP}
\begin{tabular}{l|ccccc}
\toprule
\diagbox{$T$}{Step-size} & 1e-3 & 5e-4 & 2e-4 & 1e-4 & 1e-5 \\
\midrule
1e-4 & 34.22& 38.20& 43.36& 42.75& 28.59\\
1e-3 &34.22 & 38.20 & 43.08& 42.47&28.59\\
1e-2 &34.18 &37.69 & 40.82& 39.99&28.53 \\
0.5 &34.22 &38.20 &43.36 &42.72 &28.58\\
10 & 34.22&38.20 &43.37 & 42.76&28.59\\
\bottomrule
\end{tabular}
\end{table}

\section{Computational cost}
We report the computational time per iteration of updating $N$ particles for a low-dimensional example and the deconvolution example in \Cref{tab:timingBimodal,tab:timingDeconv} respectively. 
    
We see that the per-iteration time of the non-interacting methods scale roughly linearly with the number of particles, while the interacting methods scale roughly quadratically. In the bimodal experiment \Cref{tab:timingBimodal}, (P)BRWP uses the Monte Carlo integration to approximate the normalizing constant. This leads to the slower computation for fewer particles but this overhead disappears for more particles. In the high dimensional deconvolution experiment using 40 particles, the interacting particle systems are slower than ULA and MALA but significantly faster than MYULA. This is because the proximal step on the total variation term is computationally expensive.

\begin{table}[ht]
    \centering
    \begin{tabular}{c|cccccc}
         \toprule
         \# Particles& ULA & MALA & MLA & SVGD & BRWP & PBRWP \\ \midrule
           50 & 21700 & 5410 & 7920 & 5261 & 412 & 407 \\
           200 & 14800 & 3800 & 6350 & 358 & 189 & 205 \\
          2000  & 5320 & 1490 & 2920 & 3.94 & 3.95 & 3.98 \\ \bottomrule
    \end{tabular}
    \caption{Iterations per second for the sampling algorithms for the 2D bimodal distribution, run with different numbers of particles. BRWP and PBRWP are slower at the low iteration count due to the Monte Carlo integration to approximate the normalizing constant, which was done with 25 samples.}
    \label{tab:timingBimodal}
\end{table}

\begin{table}[ht]
    \centering
    \begin{tabular}{cccccc}
         \toprule
         ULA & MYULA & SVGD & BRWP & MLA & PBRWP \\ \midrule
         89.0 & 1.14 & 14.6 & 21.7 & 53.0 & 11.1 \\ \bottomrule
    \end{tabular}
    \caption{Iterations per second for the sampling algorithms for the deconvolution experiment, using the Laplace approximation. BRWP and PBRWP are slower at the low iteration count due to the Monte Carlo integration to approximate the normalizing constant, which was done with 25 samples. MYULA is very slow due to needing to approximate the proximal operator of total variation at each step. SVGD is slower than BRWP since the kernel is applied to the gradients instead of just the particle positions.}
\label{tab:timingDeconv}
\end{table}

\section{Choice of variable preconditioner for BNNs}\label{appsec:AdamPC}
To compute the diagonal variable preconditioner, we use the second moment estimates of Adam \cite{kingma2014adam}, which are used to precondition a moving average of the gradient. We discard the first moment estimate since we are not optimizing using Adam directly. \Cref{appalg:AdamPrecond} details how pointwise gradient estimates (corresponding to each particle) give rise to the variable preconditioning matrices $M^{(k)}$. By applying this algorithm to each particle $\rvx_j$, which are updated according to PBRWP, we get their corresponding preconditioning matrices $M^{(k)}_j$.

\SetKwComment{Comment}{/* }{ */}

\begin{algorithm2e}
\caption{Adam-based Preconditioner}\label{appalg:AdamPrecond}
\KwData{Objective function $f$, exponential decay rates $\beta_2 = 0.999$, point sequence $(x^{(l)})_{l\ge1}$, epsilon $\epsilon=0.001$.}
\KwResult{Preconditioners $M^{(k)}$, where $M^{(k)} = M^{(k)}(x^{(1)},...,x^{(k)})$.}
$v_0 \gets 0$\tcp*{initialize second moment vector}
\For{$k=1,...,K$}
{
$g_k \gets \nabla f(x^{(k)})$ \tcp*{compute gradient}
$v_k \gets \beta_2 v_{k-1} + (1-\beta_2) g_k^2$ \tcp*{update second moment estimate}
$\hat v_k \gets v_k/(1-\beta_2^k)$  \tcp*{bias correction}
$M^{(k)} = \diag(1/(\sqrt{\hat v_k}+\epsilon))$ \tcp*{construct preconditioning matrix}
}
\Return $(M^{(k)})_{k =1}^K$.
\end{algorithm2e}
\end{document}